\newcommand\BibTeX{{\rmfamily B\kern-.05em \textsc{i\kern-.025em b}\kern-.08em
T\kern-.1667em\lower.7ex\hbox{E}\kern-.125emX}}
\DeclareMathOperator*{\argmin}{arg\,min}
\newcommand{\tp}{{^{\mathrm{T}}}}
\newcommand{\Tp}[1]{{{#1}^{\mathrm{T}}}}
\newcommand{\norm}[1]{\Vert{#1}\Vert}
\newcommand{\Norm}[1]{\left\Vert{#1}\right\Vert}
\newtheorem{theorem}{Theorem}
\newtheorem{lemma}{Lemma}
\newtheorem{definition}{Definition} 
\newtheorem{corollary}{Corollary}
\begin{document}
\bibliographystyle{SageH}

\runninghead{Wang et al.}
\title{Certifiable Mutual Localization and Trajectory Planning for Bearing-Based Robot Swarm}
\author{Yingjian Wang, Xiangyong Wen and Fei Gao}
\affiliation{Institute of Cyber-Systems and Control, Zhejiang University, Hangzhou, P.R. China}
\corrauth{Fei Gao, Institute of Cyber-Systems and Control, Zhejiang University, Hangzhou, 310027, P.R.China.}
\email{feigaoaa@zju.edu.cn}

\begin{abstract}
	Bearing measurements, as the most common modality in nature, have recently gained traction in multi-robot systems to enhance mutual localization and swarm collaboration.  
	Despite their advantages, challenges such as sensory noise, obstacle occlusion, and uncoordinated swarm motion persist in real-world scenarios, potentially leading to erroneous state estimation and undermining the system's flexibility, practicality, and robustness.
	In response to these challenges, in this paper we address theoretical and practical problem related to both mutual localization and swarm planning.
	Firstly, we propose a certifiable mutual localization algorithm.
	It features a concise problem formulation coupled with lossless convex relaxation, enabling independence from initial values and globally optimal relative pose recovery.
	Then, to explore how detection noise and swarm motion influence estimation optimality, we conduct a comprehensive analysis on the interplay between robots' mutual spatial relationship and mutual localization. 
	We develop a differentiable metric correlated with swarm trajectories to explicitly evaluate the noise resistance of optimal estimation.
	By establishing a finite and pre-computable threshold for this metric and accordingly generating swarm trajectories, the estimation optimality can be strictly guaranteed under arbitrary noise.	
	Based on these findings, an optimization-based swarm planner is proposed to generate safe and smooth trajectories, with consideration of both inter-robot visibility and estimation optimality.
	Through numerical simulations, we evaluate the optimality and certifiablity of our estimator, and underscore the significance of our planner in enhancing estimation performance.
	The proposed methods are implemented on three quadrotors, and their effectiveness are demonstrated through extensive real-world experiments.
	The results exhibit considerable potential of our methods to pave the way for advanced closed-loop intelligence in swarm systems.
\end{abstract}
\keywords{Swarm robotics, Mutual localization, Trajectory planning}

\maketitle
\section{Introduction}
	Multi-robot systems have emerged as a focal point of research in recent years, given their remarkable capabilities in handling complex tasks beyond the reach of individual robots, such as space exploration, perimeter surveillance, and package delivery.
	To further bring robot swarms out from laboratory to unknown and complex environments, lightweight, robust and reliable relative localization is crucial in the absence of external positioning devices.
	Utilizing bearing measurements for mutual localization is particularly appealing due to the its ultra lightweight property (\cite{martinelli2005multi,zhou2012determining,walter2018fast}).
	Bearing is the most common modality of measurements in nature.
	Robots can obtain inter-robot bearings by onboard vision detection, communicate these bearings at low bandwidth and perform relative localization with them.
	However, the practicality and flexibility of bearing's application in robot swarm are still hindered by deficiencies in the following aspects:
	\begin{enumerate}
		\item Certifiablity and Robustness. 
		Erroneous relative pose estimations can directly lead to the coordinate misalignment and system collapse. 
		Hence, it's essential for an estimator to provide correctness guarantee for its solutions.
		Besides, as inevitable and uncontrollable sensory noise can significantly influence estimation accuracy, the estimator must be capable to certify the incorrectness of its  solutions and reject them when detection noise corrupts estimation.
		However, existing local optimization based methods (\cite{nguyen2020vision,jang2021multirobot}) heavily depend on initial values and lack certifiablity for optimal solutions.
		\item Stablity and Reliability.
		To facilitate cooperation, swarm robots need to maintain reference frame consensus at all times.
		It requires robots to consistently produce stable measurements and perform reliable relative localization, even in complex environments.
		Fulfilling this requirement is not straightforward.
		Difficulties arise when robots miss crucial data due to obstacle occlusion or are positioned in areas where measurements are insufficient for relative poses recovery.
		These issues, often resulting from uncoordinated swarm motion, potentially lead to estimator failure and compromise system reliability.
	\end{enumerate}

	From above analysis, we conclude that a certifiable and robust mutual localization method is absolutely necessary and essential.
	Beyond that, incorporating estimation considerations into swarm motion planning, referred as estimation-aware planning, can further remove the shackle of estimation.
	Current estimation-aware planning methods primarily focus on reducing estimation uncertainty (\cite{walls2015belief}), avoiding unobservable locations (\cite{le2018localizability}) or increasing observational information (\cite{cossette2022optimal}).
	Although these methods do improve the performance of estimator somewhat, they do not strictly guarantee that the improved estimator can stably provide correct solution in unknown environments.
	It can be seen as the contradiction between undeterministic estimation improvement and uncontrollable environment interaction.
	The root issue is that these methods still follow a conventional and modularized system framework.
	In such a framework, mutual localization module and motion planning module are isolated and separated, and no research conducts targeted theoretical analysis and algorithm design specifically tailored to individual estimator.
	As a result, the performance of the estimator, as well as the performance of the entire system, remains limited in complex environments.
	
	To systematically enhance bearing-based robot swarm, we propose dismantling the isolation of individual modules in the swarm system and establishing a certifiable closed-loop swarm intelligence.
	The \textit{certifiable closed loop} comprises two key aspects: ensuring that mutual localization is certifiable, thereby laying the groundwork for swarm collaboration; and enabling planning algorithms to generate certifiable trajectories that guarantee optimal relative localization under arbitrary noise.
	By integrating both aspects, we envision a swarm system capable of operating with stability, reliability, and robustness, even in unknown and complex environments.
	
	To meet the first requirement of the certifiable closed loop, we propose a certifiable mutual localization algorithm in this paper.
	Specifically, we formulate the relative pose estimation as a non-convex maximum-likelihood problem and transform it into a semi-definite programming solvable in polynomial time through convex relaxation.
	This approach does not require intial values and can provide a globally optimal solution in noise-limited cases.
	Additionally, it establishes a readily verifiable condition for certifying the global optimality of a solution.
	It enables the rejection of incorrect relative poses and preventing coordinate confusion. 
	Compared to previous works (\cite{jang2021multirobot, wang2022certifiably}), our proposed method enhances efficiency and stability with its concise formulation.
	
	To meet the second requirement of the certifiable closed loop, we further propose \textit{certifiable swarm planning}, which is to plan appropriate swarm motion to ensure globally optimal estimation.
	Our research first explore the deep-rooted relationship between mutual localization and swarm motion theoretically.
	We introduce the \textit{certificate matrix}, a special matrix for verifying the global optimality of candidate solutions. 
	The minimum non-zero eigenvalue of this matrix, termed the \textit{certificate eigenvalue}, serves as an indicator of estimation optimality. 
	Additionally, comprehensive analysis of the certificate matrix is provided, including its dual description, noise analysis, degeneration identification, and eigenvalue bounds. 
	The dual description reveals that the certificate matrix not only originates from the estimation process but also serves as a metric for swarm motion. 
	Through noise analysis, we explore the influence of detection noise on estimation optimality and demonstrate that the required magnitude of noise to disrupt optimality varies with different swarm motions.
	This magnitude measures the swarm's noise resistance and is positively correlated with the certificate eigenvalue.
	In cases where the certificate eigenvalue is zero, indicating no noise resistance (a state we term \textit{degeneration}), we provide specific conditions of robot motion to identify these situations.
	To circumvent degeneration, we propose enhancing the certificate eigenvalue by estimation-aware swarm motion planning.
	We also show that if the magnitude of noise is limited, the minimum required certificate eigenvalue to maintain estimation optimality is finite and can be explicitly pre-computed. 
	This finding allows us to generate trajectories that strike a tradeoff between estimation optimality guarantee and smoothness requirements.
	

	\begin{figure}[t]
		\centering
		\includegraphics[width=0.5 \textwidth]{./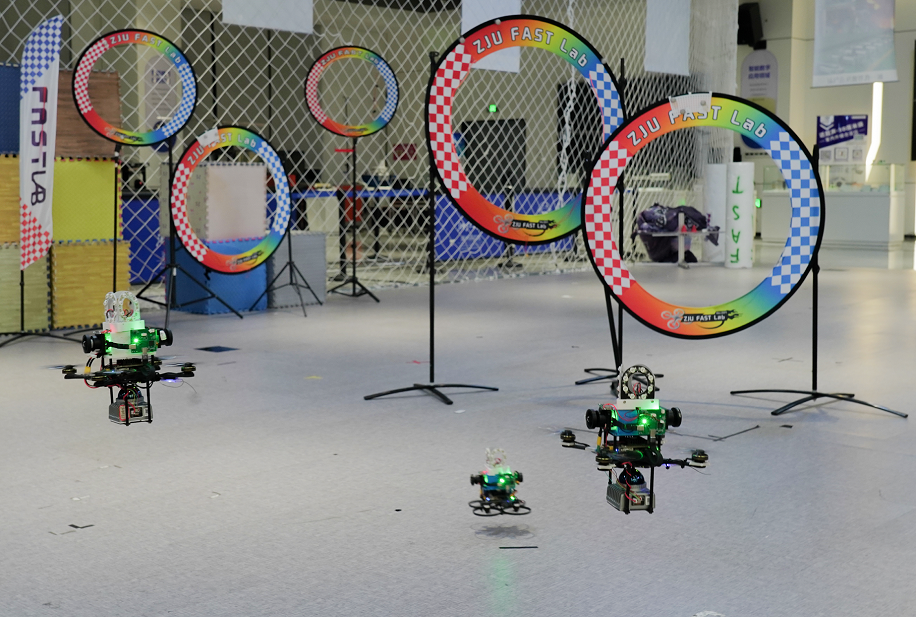}
		\caption{\label{fig:exp3} A snapshot of a bearing-based robot swarm. 
		Robots use active light emitter and omnidirectional sensor to obtain inter-robot bearings with pixel-level precision.}
		\vspace{-0.3cm}
	\end{figure}

	Building upon these theoretical foundations, we further consider the issus of line-of-sight occlusion and propose a practical implementation of swarm planning.
	In the frontend, we generate paths for robots that ensure continuous inter-robot visibility. 
	Specifically, we perform kinodynamic path searching for a designated robot, generate a sequence of star convex polytopes (\cite{katz2007direct}) along the searched path, and constrain other robots to to remain within these polytopes.
	This design ensures that all robots within a polytope can produce stable inter-robot bearing measurements with the designated robot. 
	Then, swarm trajectories optimization is performed in the backend.
	We focus on raising the certificate eigenvalue to a specified bound to guarantee the estimation optimality, while ensuring safety and dynamic feasibility.

	The concept of certifiable mutual localization was first proposed in our previous work (\cite{wang2022certifiably}), which focused solely on the isolated estimation module.
	However, as we analyzed before, estimation performance is often hindered by swarm motion and enviroment.
	This paper extends certifiable mutual localization to certifiable swarm planning, bringing systematic improvements in certifiability, robustness, and flexibility.
	The contributions of this paper are summarized as:
	\begin{enumerate}
		\item We present a certifiable algorithm for bearing-based mutual localization.
		It can recover globally optimal relative poses from non-convex problems and provide an approach to reject incorrect solution.
		\item We conduct theoretical analysis on how detection noise and swarm motion affect the localization optimality. 
		A series of novel conclusions lay the theoretical groundwork for certifiable swarm planning.
		\item We propose a practical implementation of certifiable swarm planning.
		It generates trajectories ensuring continuous inter-robot visibility in complex environments and estimation optimality under arbitrary noise.
		\item We conduct extensive experiments in both simulated and real-world environments.
		The results demonstrate the outperforming efficacy of our methods and their potential in various applications.
	\end{enumerate}

	\section{Related Work}
	\label{sec:RelatedWork} 
	\subsection{Relative Pose Estimation}
	\label{subsec:relatedwork1}
	Relative pose estimation is a fundamental module in multi-robot systems, and existing methods can generally be categorized into three main types: map-based, range-based, and bearing-based relative pose estimation methods.
	
	Map-based methods rely on the exchange of map feature or landmark information between robots to achieve relative localization.
	These methods typically involve interloop detection, feature extraction in overlapping areas, and the construction of geometric constraints to estimate relative poses.
	In CCM-SLAM (\cite{schmuck2019ccm}), robots transmit local visual features to a central server for centralized place recognition and geometric verification. 
	DDF-SAM (\cite{cunningham2010ddf, cunningham2013ddf}) presents a distributed architecture that employs Gaussian elimination to exchange marginals.
	A two-stage approach is proposed in (\cite{choudhary2017distributed}) which uses distributed Gauss–Seidel algorithm for initialization and Gauss–Newton method for optimization.
	Map-based methods require sufficient overlapping features between robots and significant bandwidth for information exchange, which limits the performance in environments with many similar scenes or poor communication conditions.
	
	Mutual localization, which utilizes inter-robot range or bearing measurements for relative pose estimation, is an effective approach to reduce dependence on the environment. 
	Range data is commonly obtained from strengths of selected broadcasted signals, such as ultra-wideband (UWB).
	Some works (\cite{guo2017ultra, guo2019ultra, ziegler2021distributed}) rely on range data from fixed UWB anchors to prevent robots' odometry drift.
	The additional infrastructure requirenment limits the practical applicability.
	Other approaches (\cite{jiang20193, li2020robot, li20223,nguyen2023relative}) deploy UWB on robots and fuse distance measurements with ego-motion estimations to achieve relative localization.
	These methods typically yield accuracy at the meter to decimeter level due to the limited precision of UWB measurements.
	
	Bearing-base methods (\cite{martinelli2005multi, chang2011vision,zhou2012determining, walter2019uvdar,nguyen2020vision,jang2021multirobot,wang2022certifiably,wang2023bearing}) play a significant role in relative pose estimation due to the stability of bearing measurements and their close relationship with orientation.
	Early works such as (\cite{martinelli2005multi, chang2011vision}) utilize the extended Kalman Filter as a nonlinear estimator to fuse bearing measurements for relative localization.
	In (\cite{zhou2012determining}, algebraic and numerical methods are provided for mutual localization with combinations of range and bearing measurements. 
	It suffered from degeneration under noise as it relies on the minimum required measurements.
	A coupled-probabilistic-data-association filter is propsoed in (\cite{nguyen2020vision}) to fuse anonymous detection results and IMU data.
	This method requires intensive computation for data association.
	An alternating optimization method is presented in (\cite{jang2021multirobot}) for mutual localization in multi-robot monocular SLAM using only bearings, but it relies on good initial values and has no guarantee for the correctness of obtained solutions.
	Our prior works (\cite{wang2022certifiably, wang2023bearing}) address this issue through lossless convex relaxation.
	However, although robustness has been considered, an explicit analysis of how much noise can be tolerated is rarely presented in previous works .

	\subsection{Estimation-aware Planning}
	\label{subsec:relatedwork3}
	Previous works acknowledge the significance of mutual spatial geometry in relative state estimation and consider how to control the motion of robots to improve estimation.
	Based on the measure of localization quality considered, these approaches can be categorized into covariance matrix based methods (\cite{zhou2011multirobot,bahr2012dynamic, walls2015belief}), Fisher Information Matrix (FIM)  based methods (\cite{le2018localizability,cano2021improving,papalia2022prioritized,cossette2022optimal}), and rigidity based method (\cite{zelazo2012rigidity}).
	
	In covariance matrix based methods, Zhou and Roumeliotis (\cite{zhou2011multirobot}) propose utilizing the trace of the error covariance matrix as a cost function for trajectory planning. 
	Walls et al. (\cite{walls2015belief}) perform belief space planning for server robots to find informative relative trajectories by minimizing clients' uncertainty. 
	These methods typically rely on prior knowledge on the robots' kinematics to derive state and covariance propagation.
	
	FIM was first introduced into observability analysis in (\cite{hammel1989optimal}). 
	Since then, some metrics are defined to evaluate the amount of information obtained, such as A-optimal, D-optimal and E-optimal designs.
	These metrics are commonly used for motion optimization to improve localizability (\cite{le2018localizability,cano2021improving,papalia2022prioritized, cossette2022optimal}).
	In (\cite{le2018localizability}), a FIM-based planning method is proposed to avoid unobservable positions in localization with static UWB anchors.
	In (\cite{cossette2022optimal}), authors further proposed a optimal multi-robot formation for relative localziation without the use of anchors.
	
	Rigidity (\cite{roth1981rigid}) is a property of robot network that quantifies how range or bearing measurements constrain the shape of the network. 
	This property is connected to the possibility of uniquely determining robots' positions from inter-robot measurements (\cite{eren2004rigidity}), and hence useful to enforce robots.
	Zelazo et al. (\cite{zelazo2012rigidity, zelazo2015decentralized}) characterize the rigity matirx for range-only robot swarm, and developed a decentralized planning framework to maintain the rigidity as robots move.
	In (\cite{le2018localizability}), authors reveal that the FIM exactly corresponds to a rigidity matrix when the noise follows a Gaussian distributions.
	Walter et al. (\cite{walter2023distributed}) derive a formation-enforcing control for beaing-based multi-robot system from rigidity theory to address the issus of obsearvation noise.
	In contrast to solely improving robustness, our method in this paper aims to provide an exact assurance of estimation optimality under noise.

	\section{Framework and Notation}
	\label{sec:system} 
	
	\subsection{Overall Framework}
	\begin{figure}[t]
		\centering
		\includegraphics[width=0.5\textwidth]{./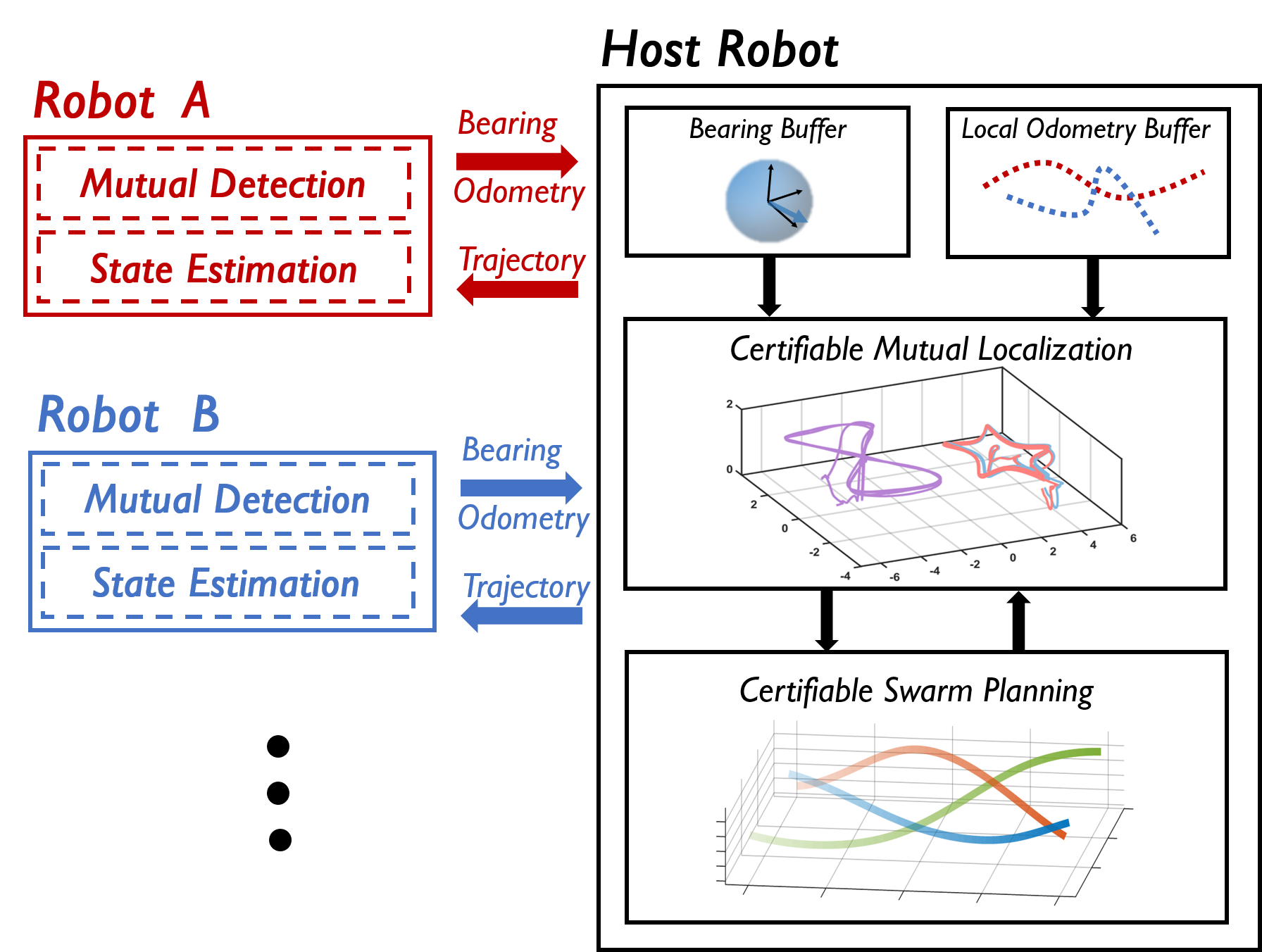}
		\caption{\label{fig:overview} Proposed framework of our bearing-based robot swarm.
		Team robots estimate repective states, including rotation and translation and obtain inter-robot bearings by mutual detection.
	These data are communicated to a host robot, and stored in the bearing buffer and local odometry buffer.
	The certifiable mutual localization module recover relative poses using these data and align robots' reference frames.
	The certifiable swarm planning module generate swarm trajectories in the common frame, which are sent back robots for tracking. }
		\vspace{-0.4cm}
	\end{figure}
	
	The system architecture of our system is illustrated in Fig. \ref{fig:overview}.
	Each robot sends outputs of mutual detection and state estimation to a designated host robot. 
	The host robot performs certifiable mutual localization to align reference frames of all robots using inter-robot bearing measurements and local odometry (Sec.\ref{sec:estimation}). 
	As the relationship between estimation optimality and the configuration of swarm trajectories is investigated (Sec.\ref{sec:theoretical}), a differentiable metric of swarm trajectories is proposed to evaluate the noise resistance of estimation.
	Leveraging this metric, the host robot then performs certifiable swarm planning within the common reference frame to generate trajectories (Sec.\ref{sec:planning}) and sends them back to robots.
	
	\subsection{Notation}
	\label{subsec:notation}
	In this paper, lowercase and bold uppercase letters (e.g. v and $\mathbf{A}$) are reserved for vectors and matrices, respectively. 
	We denote the set of real $d \times d$ symmetric and symmetric positive semi-definite (PSD) matrix as $\mathbb{S}^{n}$ and $\mathbb{S}_+^{n}$ respectively. 
	For general matrices $\mathbf{A}$ and $\mathbf{B}$, $\mathbf{A} \succeq \mathbf{B}$ indicates matrix $\mathbf{A}-\mathbf{B}$ is positive semi-definite, $\mathbf{A} \otimes \mathbf{B}$ the Kronecker (matrix tensor) product, $\mathbf{A}^{\dagger}$ the Moore-Penrose  pseudoinvers, $\text{Tr}(\mathbf{A})$ the trace, $\text{det}(\mathbf{A})$ the determinant and vec($\mathbf{A}$) the vectorization operation that concatenates the columns of $\mathbf{A}$. 
	We write $\mathbf{I}_d \in \mathbb{R}^{d\times d}$ and $\mathbf{0}_{d\times s} \in \mathbb{R}^{d\times s}$ for the identity matrix and zero matrix, $e_i \in \mathbb{R}^d$  for the $i$th unit coordinate vector, and $\mathbf{1}_d \in \mathbb{R}^d$ for the all-ones vector.
	$SO(n)$ (special orthogonal group) and $O(n)$ (orthogonal group) are defined as: $SO(n)  = \{\mathbf{R} \in \mathbb{R}^{n\times n}: \mathbf{R}\tp\mathbf{R} = \mathbf{I}_n, \text{det}(\mathbf{R} ) = 1\}$ and $O(n)  = \{\mathbf{R} \in \mathbb{R}^{n\times n}: \mathbf{R}\tp\mathbf{R} = \mathbf{I}_n\}$, and $\mathbf{G}_1 \times \mathbf{G2}$ means the direct product group of the group $\mathbf{G}_1$ and $\mathbf{G}_2$.
	We denote by Diag($\mathbf{A}_1,\cdots,\mathbf{A}_N$), the block-diagonal matrix with matrices $\mathbf{A}_1, \cdots, \mathbf{A}_N$ as blocks on its main diagonal. 
	For a ($d\times d$)-block matrix $\mathbf{M}$, $\mathbf{M}_{ij}$ denotes the $(i,j)$th ($d\times d$) matrix.

	\section{Bearing-based Certifiable Mutual Localization}
	\label{sec:estimation} 
	In this section, a certifiable algorithm for mutual localization is proposed. 
	It achieves precise coordination alignment among multiple robots by leveraging inter-robot bearing measurements and each robot's local odometry.
	In Sec.\ref{subsec:formulation}, we formulate mutual localization problem as a maximum likelihood estimation (MLE) with $SO(3)$ constraints.
	To address the inherent non-convexity of the problem, we adopt semi-definite relaxation (SDR) in Sec.\ref{subsec:sdr}, transforming the original problem into a convex programming to ensure global optimality of solutions.
	
	\subsection{Problem Formulation}
	\label{subsec:formulation}
	
	\begin{figure}[t]
		\centering
		\includegraphics[width=0.45\textwidth]{./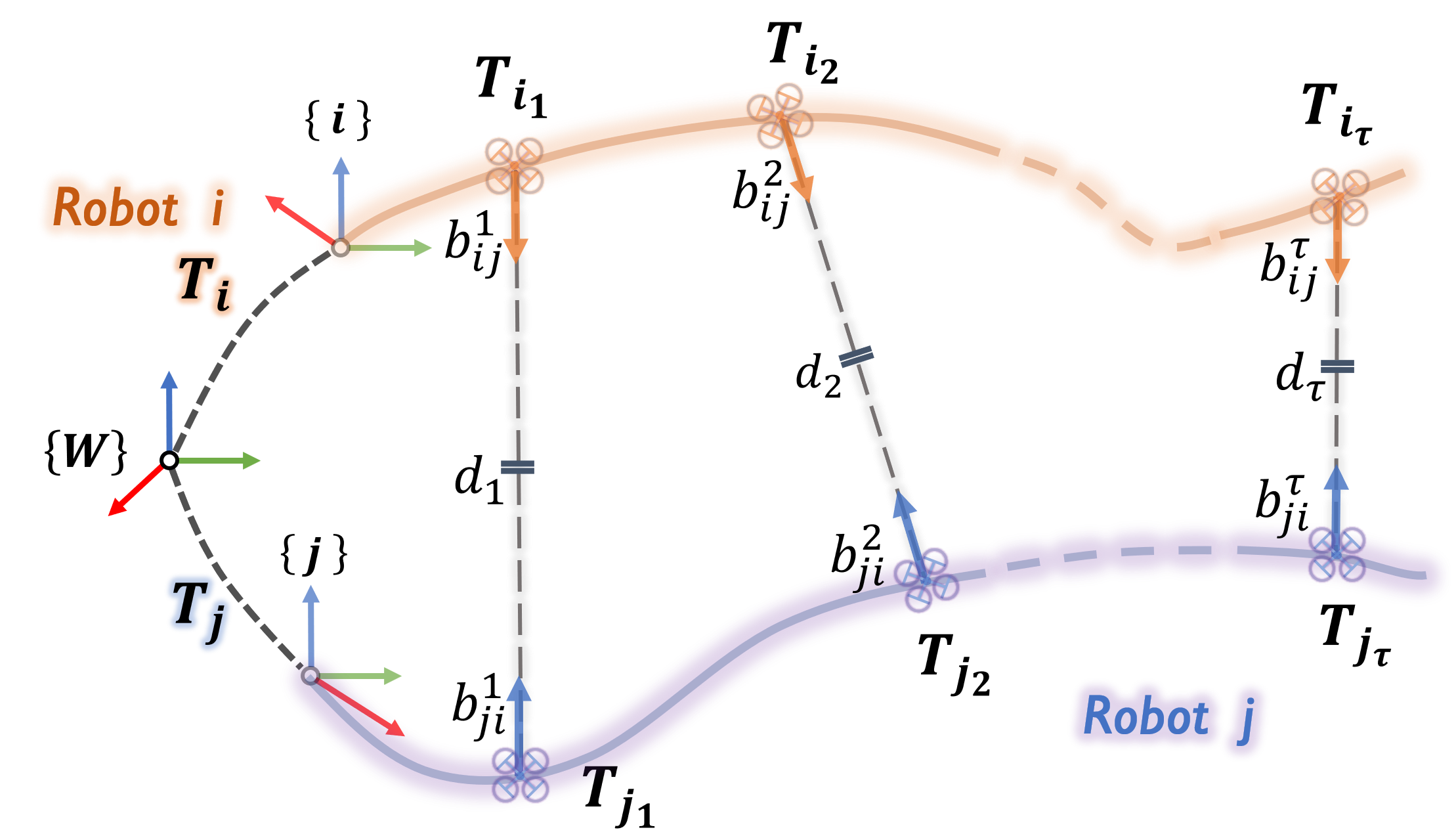}
		\caption{\label{fig:two-robot} Demonstration of initial poses in the world ($\mathbf{T}_x = \{\mathbf{R}_x, t_x\}$) and local odometry ($\mathbf{T}_{x_\tau} = \{\mathbf{R}_{x_\tau},t_{x_\tau}\}$) for the robot $x$, $x\in\{i,j\}$. The ominidirectional bearing observations ($b_{ij}^\tau$ and $b_{ij}^\tau$) and the distance ($d_\tau$)  between robots at time $\tau$ are represented as colored arrows and doted lines.}
		\vspace{-0.4cm}
	\end{figure}
	We consider a multi-robot system where robots have random initial poses.
	The relevant coordinates of two moving robots are illustrated in Fig. \ref{fig:two-robot}.
	As depicted, the initial poses of robots in the world frame, denoted as $\mathbf{T}_i$ and $\mathbf{T}_j$, are unknown, while their poses in their respective local frames at time $\tau$, denoted as $\mathbf{T}_{i_\tau}$ and $\mathbf{T}_{j_\tau}$, are obtained from real-time odometry.
	At time $\tau$, we define $p_i^{\tau}$ and $p_j^{\tau}$ as the positions of robots in the world frame. 
	The true (latent) relative bearing of robot $j$ repect to robot $i$ is represented as a unit vector $\hat{b_{ij}^{\tau}}$:
	\begin{align}
		\hat{b}_{ij}^{\tau} = \left(\mathbf{R}_i \mathbf{R}_{i_\tau}\right)\tp \frac{p_j^{\tau} - p_i^{\tau}}{\Norm{p_j^{\tau} - p_i^{\tau}}}. \label{equ:b1}
	\end{align}
	We employ unit vectors instead of angles to represent bearings here to avoid complicated trigonometric functions.
	
	In practice, the actual bearing measurements obtained through vision detection, denoted as $b_{ij}^{\tau}$, are inevitably affected by noise.
	To model the noise, we follow the approach in (\cite{li2022three,bishop2010optimality}) to express the noisy bearing measurement in an additive form:
	\begin{align}
		b_{ij}^{\tau} = \hat{b}_{ij}^{\tau} + \epsilon_i^\tau.  \label{equ:noise}
	\end{align}
	where the measurement is corrupted by independent and identically distributed Gaussian noise $ \epsilon_i^\tau \sim \mathcal{N}(0,\sigma^2 \mathbf{I}_3)$.
	It is noticed that in the observation process, the distance between robots, $d_{ij}^{\tau} =\Norm{p_j^{\tau} - p_i^{\tau}}$, is unknown. 
	The objective of bearing-based mutual localization in a $N$-robot system is to estimate a set of initial poses $\mathbf{\Theta}  = [\mathbf{T}_1, \mathbf{T}_2, \cdots, \mathbf{T}_N] \in \mathbb{R}^{4\times4N}$ from some noisy inter-robot bearing measurements $b_{ij}^\tau$ and the local odometry $\mathbf{T}_{i_\tau}$ of each robot.
	The parameter space of $\mathbf{\Theta}$ is $\mathcal{P}  = \{SE(3) \times \cdots \times SE(3)\}_N$.

	Consider the  $N=2$ case firstly. 
	Multiplying $\mathbf{R}_i \mathbf{R}_{i_\tau} d_{ij}^\tau $ on both sides of (\ref{equ:noise}) gives
	\begin{align}
		\mathbf{R}_i \mathbf{R}_{i_\tau} d_{ij}^\tau b_{ij}^\tau &= p_j^{\tau} - p_i^{\tau} + \mathbf{R}_i \mathbf{R}_{i_\tau} d_{ij}^\tau \epsilon_i^\tau. \label{equ:e1}
	\end{align}
	Similarly, for $b_{ji}^\tau$, we have:
	\begin{align}
		\mathbf{R}_j \mathbf{R}_{j_\tau} d_{ji}^\tau b_{ji}^\tau &= p_i^{\tau} - p_j^{\tau} + \mathbf{R}_j \mathbf{R}_{j_\tau} d_{ij}^\tau \epsilon_j^\tau, \label{equ:e2}
	\end{align}
	where $d_{ji}^\tau = d_{ij}^\tau$ and $\epsilon_j^\tau \sim \mathcal{N}(0,\sigma^2 \mathbf{I}_3)$. By adding (\ref{equ:e1}) and (\ref{equ:e2}), we obtain:
	\begin{align}
		&\mathbf{R}_i \mathbf{R}_{i_\tau} d_{ij}^\tau b_{ij}^\tau + \mathbf{R}_j \mathbf{R}_{j_\tau} d_{ji}^\tau b_{ji}^\tau = d_{ji}^\tau \eta^\tau_{ij},  \label{equ:e3} \\
		\Longrightarrow \ \ &\mathbf{R}_i \mathbf{R}_{i_\tau} b_{ij}^\tau + \mathbf{R}_j \mathbf{R}_{j_\tau} b_{ji}^\tau = \eta^\tau_{ij}, \label{equ:e4}
	\end{align}
	where $\eta^\tau_{ij}=\mathbf{R}_i \mathbf{R}_{i_\tau} \epsilon_i^\tau+\mathbf{R}_j \mathbf{R}_{j_\tau}\epsilon_j^\tau$ also follows a Gaussian distribution $\eta^\tau_{ij} \sim \mathcal{N}(0,\mathbf{\Sigma_{\tau}})$, $\mathbf{\Sigma_{\tau}} = 2\sigma^2\mathbf{I}_3$ (\cite{martin1979multivariate}). 
	The probability density function (PDF) of $\eta^\tau_{ij}$ is given by $p: \mathbb{R}^3 \rightarrow \mathbb{R}^{+}$
	\begin{align}
		p(\eta^\tau_{ij}) &= \frac{1}{(2\pi)^{3/2} \text{det}(\mathbf{\Sigma}_\tau)^{1/2} } \textup{exp}\left(-\frac{1}{2} \eta^\tau_{ij}\tp \mathbf{\Sigma}_\tau^{-1}\eta^\tau_{ij}\right).
	\end{align}

	Next we extend our analysis to a multi-robot system with $N\geq3$. 
	We model the set of intervisible robots as an undirected graph $\mathbf{\mathcal{G}}  = (\mathbf{\mathcal{V}} , \mathbf{\mathcal{E}} )$, where $\mathbf{\mathcal{V}}  = [1,N], \mathbf{\mathcal{E}}  \subseteq \mathbf{\mathcal{V}}  \times \mathbf{\mathcal{V}} $.
	Each node in $\mathbf{\mathcal{V}} $ corresponds to a robot, and each edge $(i,j) \in \mathcal{E} $ indicates that robots $i$ and $j$ are intervisible. 
	For a node $i \in \mathbf{\mathcal{V}} $, the set of neighboring nodes is defined as $\mathcal{N}_i$, $(i,j) \in \mathbf{\mathcal{E}}  \Leftrightarrow j \in \mathcal{N}_i$. 
	For a edge $(i,j) \in \mathbf{\mathcal{E}}$, the collection of time instances when robots are intervisible is denoted as $J_{ij}$.  
	Now, the problem transforms into finding a set of rotations  $\mathbf{\Theta_R}  = [\mathbf{R}_1, \mathbf{R}_2, \cdots, \mathbf{R}_N] \in \mathbb{R}^{3\times3N}$ to align the inter-robot bearing measurements. 
	The parameter space of $\mathbf{\Theta_R}$ is $\mathcal{P}_{SO(3)}  = \{SO(3) \times \cdots \times SO(3)\}_N$.

	Define selection matrix $\mathbf{C}_i  = e_{i} \otimes \mathbf{I}_3 \in \mathbb{R}^{3N\times3}$ and  $g_{ij}^{\tau}  = \mathbf{R}_{i_\tau} b_{ij}^{\tau}$, we rewrite (\ref{equ:e4}) as 
	\begin{align}
		\eta^{\tau}_{ij} &= \mathbf{\Theta_R} \mathbf{C}_i g_{ij}^{\tau} + \mathbf{\Theta_R} \mathbf{C}_j g_{ji}^{\tau} = \mathbf{\Theta_R} y_{ij}^{\tau} \in \mathbb{R}^3,
	\end{align}
	where $y_{ij}^{\tau} = [\mathbf{0}_{1\times3}, \cdots, {g_{ij}^\tau}\tp, \cdots, {g_{ji}^\tau}\tp, \cdots, \mathbf{0}_{1\times3}]\tp \in \mathbb{R}^{3N}$.
	Stacking the errors $\eta^{\tau}_{ij}$ gives the error vector $\eta_{ij}$ for $(i,j) \in \mathcal{E}$ and the overall error vector $\eta$ for the entire estimation process:
	\begin{align}
		\eta_{ij} & = [\Tp{\eta^{\tau_1}_{ij}}, \Tp{\eta^{\tau_2}_{ij}}, \cdots, \Tp{\eta^{\tau_n}_{(N-1)N}}]\tp, \\
		\eta  & = [\eta_{12}^\mathrm{T}, \eta_{13}^\mathrm{T}, \cdots, \eta_{(N-1)N}^\mathrm{T}]\tp.
	\end{align}
	The PDF of $\eta$ is given by $p: \mathbb{R}^m \rightarrow \mathbb{R}^{+}$
	\begin{align}
		p(\eta) = \frac{1}{(2\pi)^{3m/2} \text{det}(\mathbf{\Sigma})^{1/2}} \textup{exp}\left(-\frac{1}{2} \eta\tp \mathbf{\Sigma}^{-1} \eta\right), \label{equ:e5}
	\end{align}
	where $m$ is the number of data and $\mathbf{\Sigma} = 2\sigma^2 \mathbf{I}_{3m}$.
	
	Taking negative logarithms of (\ref{equ:e5}), it follows that a maximum-likelihood estimate $\mathbf{\Theta_R^*}_{\textup{MLE}}$ is obtained as a minimizer of the following problem:
	\begin{align}
		\min_{\mathbf{\Theta_R} \in \mathcal{P}_{SO(3)}}  \frac{1}{2} \eta\tp \mathbf{\Sigma}^{-1} \eta. \label{equ:mle}
	\end{align}
	
	Expanding the cost function (\ref{equ:mle}), we have a quadratic reformulation:
	\begin{equation}\label{eq:multi-line}
		\begin{aligned}
			\frac{1}{2} \eta\tp \mathbf{\Sigma}^{-1} \eta &= \frac{1}{4\sigma^2}  \sum_{(i,j) \in \mathcal{E}} \sum_{\tau \in J_{ij}}  \eta^{\tau}_{ij}\tp \eta^{\tau}_{ij} \\
			&= \frac{1}{4\sigma^2}  \sum_{(i,j) \in \mathcal{E}} \sum_{\tau \in J_{ij}}  \textup{Tr}\left( y_{ij}^\tau y_{ij}^\tau\tp  \mathbf{\Theta_R^\mathrm{T}} \mathbf{\Theta_R}\right).
		\end{aligned}
	\end{equation}
	Define a (3$\times$3)-block symmetric data matrix $\mathbf{M}$, such that $\mathbf{M}= \sum_{(i,j) \in \mathcal{E}} \sum_{\tau \in J_{ij}}  y_{ij}^\tau y_{ij}^\tau\tp \in \mathbb{R}^{3N \times 3N}$, then the bearing-based relative pose estimation is reformulated as 
	
	\begin{align}
		\min_{\mathbf{\Theta_R} \in \mathcal{P}_{SO(3)}} \textup{Tr}\left(\mathbf{M}  \mathbf{\Theta_R^\mathrm{T}} \mathbf{\Theta_R} \right). \label{equ:pro}
	\end{align}
	which is a non-convex problem with $SO(3)$ constraints.

	\subsection{Semidefinite Relaxation}
	\label{subsec:sdr}
	Since convex problem can be solved in polynomial time and the solution has strict global optimality, it is appealing to transform a non-convex problem into a convex problem losslessly.
	Relaxing problems of a specific form, such as (\ref{equ:pro}), has been studied extensively in both optimization (\cite{cifuentes2020local,lasserre2001global}) and robotics (\cite{aholt2012qcqp, eriksson2018rotation,rosen2019se}). 
	First, the condition $\mathbf{\Theta_R} \in \mathcal{P_R}$ can be relaxed to obtain
	\begin{align}
		\min_{\mathbf{\Theta_R} \in \mathcal{P}_{O(3)}}  \textup{Tr}\left(\mathbf{M} \mathbf{\Theta_R^\mathrm{T}} \mathbf{\Theta_R} \right). \label{equ:r1}
	\end{align}
	where $\mathcal{P}_{O(3)}  = \{O(3) \times \cdots \times O(3)\}_N$. 
	This relaxation has been shown to often be tight in practice (\cite{rosen2019se}).
	Introduce Lagrange multipliers $\mathbf{\Lambda}_i \in \mathbb{S}^{3}$ for the symmetric orthogonality constraint $\mathbf{R}_i^\mathrm{T} \mathbf{R}_i = \mathbf{I}_3$, and define $\mathbf{\Lambda}  = \text{Diag}(\mathbf{\Lambda}_1, \cdots, \mathbf{\Lambda}_N)$ to consolidate the Lagrange multipliers,
	the Lagrangian corresponding to (\ref{equ:r1}) is given by 
	\begin{equation} \label{equ:dual0}
		\begin{aligned}
			\mathcal{L}(\mathbf{\Theta_R}, \mathbf{\Lambda})  &= \textup{Tr}(\mathbf{M} \mathbf{\Theta_R^\mathrm{T}}  \mathbf{\Theta_R}) - \textup{Tr}(\mathbf{\Lambda}  (\mathbf{\Theta_R^\mathrm{T}} \mathbf{\Theta_R} - \mathbf{I}_{3N})) \\
			&= \textup{Tr}((\mathbf{M-\Lambda}) \mathbf{\Theta_R^\mathrm{T}}\mathbf{\Theta_R}) + \textup{Tr}(\mathbf{\Lambda}).
		\end{aligned}
	\end{equation}
	Then, the Lagrangian dual problem for (\ref{equ:r1}) is formulated as:
	\begin{align}
		\max_{\mathbf{\Lambda}} \min_{\mathbf{\Theta_R}} \mathcal{L}(\mathbf{\Theta_R}, \mathbf{\Lambda}) = \textup{Tr}(\mathbf{\Theta_R} (\mathbf{M-\Lambda}) \mathbf{\Theta_R^\mathrm{T}}) + \textup{Tr}(\mathbf{\Lambda}). \label{equ:dual}
	\end{align}
	
	Observe that the unconstrained optimum of $\min_{\mathbf{\Theta_R}} L(\mathbf{\Theta_R}, \mathbf{\Lambda})$ is either $ \textup{Tr}(\mathbf{\Lambda})$ when $\mathbf{M-\Lambda} \succeq 0$, or $-\infty$ otherwise, the dual problem (\ref{equ:dual}) is equivalent to the following semi-definite program
	\begin{align}
		\max_{\mathbf{M-\Lambda}\succeq0}\textup{Tr}(\mathbf{\Lambda}). \label{equ:dual1}
	\end{align}
	A straightforward application of the duality theory for semi-definite programs (\cite{boyd2004convex}) shows the dual of (\ref{equ:dual1}) is 
	\begin{align}
		\min_{\mathbf{Z} \in \mathcal{C}} \ &\textup{Tr}\left(\mathbf{M} \mathbf{Z} \right) \label{equ:r3} 
	\end{align}
	where $\mathcal{C}  = \{ \mathbf{X} \in \mathbb{S}_{+}^{3N}: \mathbf{X}_{ii} = \mathbf{I}_3, i \in [1,N] \}$ is a compact convex set. 
	The problem  (\ref{equ:r3}) can be solved efficiently to obtain  $\mathbf{Z^*}$ using any off-the-shelf SDP solver. 
	To recover the relative rotation from $\mathbf{Z^*}$, a rank-3 decomposition is performed to obtain $\mathbf{Z^*} = \mathbf{Y^*}\tp \mathbf{Y^*}$, where $\mathbf{Y^*} \in \mathcal{P}_{O(3)}$.
	Then the optimal rotation matrix $\mathbf{R}_i^*$ can be recovered in accordance with the $SO(3)$ constraints:
	\begin{align*}
		\mathbf{R}_i^* = \mathbf{V}_i \begin{bmatrix}
			1 & 0 & 0 \\
			0 & 1 & 0\\
			0 & 0 & \textup{det}(\mathbf{U}_i \mathbf{V}_i^\mathrm{T}) \\
		\end{bmatrix}  \mathbf{U}_i^\mathrm{T}
	\end{align*} 
	where $\mathbf{U}_i$ and $\mathbf{V}_i$ are unitary matrices obtained from the singular value decomposition $\mathbf{Y}_i^* = \mathbf{U}_i \mathbf{\Sigma}_i \mathbf{V}_i^\mathrm{T}$.
	
	As the optimal relative rotations $\mathbf{\Theta_R^*}$ have been obtained, the optimal relative translations and distances between robot can be solved in closed-form. 
	Define the optimization variable as 
	\begin{align*}
		x  = [t_1^\mathrm{T},\ t_2^\mathrm{T},\ \cdots, t_N^\mathrm{T},\ \{d_\tau^{ij}\}]\tp.
	\end{align*}
	Recalling (\ref{equ:e1}) and (\ref{equ:e2}), we define a linear-form error involving distance and translation variables as follows:
	\begin{align*}
		e_{ij}^\tau = d^\tau_{ij}\left(\mathbf{R}_i^*  g^\tau_{ij} + \mathbf{R}_j^* g^\tau_{ji}\right) - \mathbf{R}_i^* t_{i_\tau} - t_i - \mathbf{R}_j^* t_{j_\tau} - t_j.
	\end{align*}
	It can be reformulated as 
	\begin{align}
		e_{ij}^\tau &=  \mathbf{A}_{ij}^\tau x - \beta_{ij}^\tau, \label{equ:t3}
	\end{align}
	where
	\begin{align*}
		&\mathbf{A}_{ij}^\tau = \begin{bmatrix}
			\mathbf{0}_{3\times1},\cdots, -\mathbf{I}_3,\cdots, -\mathbf{I}_3,\cdots, \alpha_{ij}^\tau, \cdots,\mathbf{0}_{3\times1}
		\end{bmatrix},  \\
		&\alpha_{ij}^\tau = \mathbf{R}_i^* g_{ij}^\tau + \mathbf{R}_j^* g_{ji}^\tau, \ \ \beta_{ij}^\tau = \mathbf{R}_{j}^* t_{j_\tau} + \mathbf{R}_{i}^* t_{i_\tau}. 
	\end{align*}
	The $\alpha_{ij}^\tau$ is in the column corresponding to the $d^\tau_{ij}$. Then the optimal $x^*$ minimizes the sum of the norm of errors and can be obtained in closed form: 
	\begin{align*}
		x^* &= \argmin_{x} \sum_{(i,j) \in \mathcal{E}} \sum_{\tau \in J_{ij}} (e^\tau_{ij})\tp \mathbf{W}_{ij}^\tau e^\tau_{ij}  \\
		&= \argmin_{x} (\mathbf{A}x-\beta)\tp \mathbf{W} (\mathbf{A}x-\beta) \\
		&= (\mathbf{A}\tp \mathbf{W} \mathbf{A})^{-1} \mathbf{A}\tp \mathbf{W} \beta. 
	\end{align*} 
	where $\mathbf{W}_{ij}^\tau$ is the weight matrix and 
	\begin{align}\
		&\mathbf{A} = \begin{bmatrix}
			\mathbf{A}_{ij_1}^{\tau} \\
			\vdots \\
			\mathbf{A}_{ij_{\textup{end}}}^{\tau}
		\end{bmatrix}, 
		\beta = \begin{bmatrix}
			\beta_{ij_1}^{\tau} \\
			\vdots \\
			\beta_{ij_{\textup{end}}}^{\tau}
		\end{bmatrix}, 
		\mathbf{W} = \begin{bmatrix}
			\mathbf{W}_{ij_1}^\tau & & \\
			& \ddots & \\
			& & \mathbf{W}_{ij_{\textup{end}}}^{\tau}
		\end{bmatrix}. \notag
	\end{align} 
	
	Once the closed-form solution $x^*$ is computed, we extract the corresponding components as the relative translation $t^*$.

	\subsection{Sliding Window Optimization}
	\label{subsec:slide_window}
	The algorithm we have developed are capable of recovering relative poses using data collected over a period of time. 
	However, a practical challenge arises in the form of accumulated odometry drift, especially pronounced in robots undertaking long-range motion.
	If the entirety of motion data is used for mutual localization, this drift can adversely affect the accuracy of the estimation.
	To effectively tackle this issue, we adopt a strategy of utilizing data only within a specific, limited time interval  $[\tau_s, \tau_e]$ for performing relative pose estimation. 
	This approach ensures that all robots are localized within a common reference frame, which subsequently serves as the foundation for swarm planning.
	By confining the data to this window, we mitigate the impact of odometry drift on the estimation process.
	The sliding window-based method is detailed in Algorithm \ref{alg:estimator}. 
	The algorithm consists of several critical functions: 
	 $\text{TIME\_MATCH}$, which aligns bearings and local odometry through nearest time search; $\text{FORMULATE\_AND\_SOLVE}$ esponsible for problem formulation, convex relaxation, and SDP solving; and $\text{RECOVER\_RELATIVE\_POSE}$, which is tasked with recovering the relative poses from $\mathbf{Z}^*$.
	\begin{algorithm}[t]
		\caption{\label{alg:estimator}Certifiable Mutual Localization over Sliding Window}
		\KwIn{Buffers of bearings $\mathcal{B}^b_{ij}, (i,j)\in\mathcal{E}$\\
			Buffers of local odometry $\mathcal{B}^o_i, i \in [1,N]$}
		\KwOut{Robots' poses in a common reference frame $\mathbf{T}^C_i$ at $\tau_e$}
		$\mathcal{D} \leftarrow \varnothing$ \\
		\ForEach{$(i,j) \in \mathcal{E}$}
		{
			\ForEach{$b^\tau_{ij} \in \mathcal{B}^b_{ij}$}
			{
				$b^\tau_{ji}, \mathbf{T}_{i_\tau},\mathbf{T}_{j_\tau} \leftarrow$ TIME\_MATCH($b^\tau_{ij}.time,\mathcal{B}^b_{ji},\mathcal{B}^o_i,\mathcal{B}^o_j$)\\
				$\mathcal{D} \leftarrow \mathcal{D} \cup \{b^\tau_{ij},b^\tau_{ji}, \mathbf{T}_{i_\tau},\mathbf{T}_{j_\tau}\}$\\
			}
		}
		$\mathbf{Z}^* \leftarrow$ FORMULATE\_AND\_SOLVE($\mathcal{D}$) \\
		\If{rank($\mathbf{Z}^*$)=3}
		{
			$\mathbf{\Theta}^* \leftarrow$ RECOVER\_RELATIVE\_POSE($\mathbf{Z}^*$)\\
			\ForEach{$i \in [1,N]$}
			{
				$\mathbf{T}^C_i$ = ($\mathbf{T}_1^*)^{-1} \mathbf{T}_i^* \mathbf{T}_{i_{\tau_s}}^{-1} \mathbf{T}_{i_{\tau_e}}$
			}
			\Return{$\mathbf{T}^W$}
		}
	\end{algorithm}

	\section{Theoretical Analysis of Certificate Swarm Trajectory Planning}
	\label{sec:theoretical} 
	In this section, we conduct theoretical analysis to understand how detection noise and swarm motion affect the optimality of our proposed mutual localization algorithm.
	In Sec.\ref{subsec:optimality}, based on the global optimality condition, we introduce a matrix, refered as \textit{certificate matrix}.
	A special eigenvalue of the certificate matrix, refered as \textit{certificate eigenvalue},  can certify the global optimality of a candidate solution. 
	Besides, another description of the certificate matrix indicates that the certificate eigenvalue can be seen as a metric of swarm motion.
	In Sec.\ref{subsec:robustness}, the motivation behind the dual description is explained: detection noises will destabilize the estimation optimality, while swarm motion with differetn certificate eigenvalue exhibit varying levels of noise resistance.
	In Sec.\ref{subsec:degeneration}, some conditions are provided to identify degeneration cases, where the certificate eigenvalue is zero and the swarm trajectories are entirely susceptible to noise.
	Raising certificate eigenvalue by optimizing swarm trajectories is feasible to avoid degeneration, but it sometimes conflict the requirement of smoothness. 
	To address the issue, in Sec.\ref{subsec:error_bound}, we show that if the magnitude of detection noise is bounded, the the minimum threshold that the certificate eigenvalue must meet to maintain estimation optimality can be derived as a finite and pre-computable value. 
	These conclusions lay the theoretical foundations that allow us to perform swarm planning considering estimation optimality.
	\subsection{The Dual Description of Certificate Matrix}
	\label{subsec:optimality}
	\subsubsection{Certificate Matrix and Certificate Eigenvalue}
	The global optimality condition of our proposed certifiable mutual localization is given as
	\begin{theorem}
		\label{theorem:1}
		If a local minimizer $\mathbf{\Theta_R^*}$ of (\ref{equ:pro}) is given, the corresponding optimal Lagrange multiplier $\mathbf{\Lambda^*}$ can be computed in closed-form, $\mathbf{\Lambda}_i^* = \sum_{j=1}^N \mathbf{M}_{ij} \mathbf{R}^*_j\tp \mathbf{R}_i^*$.
		Furthermore, If $\mathbf{M-\Lambda^*} \succeq 0$, then:
		\begin{enumerate}
			\item There is a zero-duality-gap between the problem (\ref{equ:r3}) and the problem (\ref{equ:pro}).
			\item $\mathbf{\Theta_R^*}$ is a global minimum for the original problem (\ref{equ:pro}).
		\end{enumerate}
	\end{theorem}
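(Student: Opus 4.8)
The plan is to certify $\mathbf{\Theta_R^*}$ by producing a \emph{dual certificate}: a point $\mathbf{\Lambda^*}$ that is feasible for the Lagrangian dual (\ref{equ:dual1}) and whose value equals the cost attained by $\mathbf{\Theta_R^*}$. Once this is in hand, both conclusions drop out of a single weak-duality sandwich. Writing the objective as $f(\mathbf{\Theta_R}) = \textup{Tr}(\mathbf{M}\Tp{\mathbf{\Theta_R}}\mathbf{\Theta_R}) = \sum_{i,j}\textup{Tr}(\mathbf{M}_{ij}\Tp{\mathbf{R}_j}\mathbf{R}_i)$, I would first impose first-order (KKT) stationarity of the Lagrangian (\ref{equ:dual0}). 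Since $\mathbf{M}-\mathbf{\Lambda}$ is symmetric, the stationarity condition is $\nabla_{\mathbf{\Theta_R}}\mathcal{L}(\mathbf{\Theta_R^*},\mathbf{\Lambda^*}) = 2\mathbf{\Theta_R^*}(\mathbf{M}-\mathbf{\Lambda^*}) = 0$. Because $\mathbf{\Lambda^*}$ is block-diagonal, reading this $3\times 3N$ identity off block-column by block-column gives $\mathbf{R}_i^*\mathbf{\Lambda}_i^* = \sum_j \mathbf{R}_j^*\mathbf{M}_{ji}$; left-multiplying by $\Tp{\mathbf{R}_i^*}$ and transposing (using $\Tp{\mathbf{M}_{ji}}=\mathbf{M}_{ij}$ together with symmetry of $\mathbf{\Lambda}_i^*$) recovers the stated closed form $\mathbf{\Lambda}_i^* = \sum_j \mathbf{M}_{ij}\Tp{\mathbf{R}_j^*}\mathbf{R}_i^*$.

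Second, I would extract the two structural consequences of stationarity. The relation above is exactly the complementary-slackness identity $\mathbf{\Theta_R^*}(\mathbf{M}-\mathbf{\Lambda^*}) = 0$, and from it the attained cost matches the certificate value:
\begin{align*}
f(\mathbf{\Theta_R^*}) &= \textup{Tr}\left(\mathbf{\Theta_R^*}\mathbf{M}\Tp{\mathbf{\Theta_R^*}}\right) = \textup{Tr}\left(\mathbf{\Theta_R^*}\mathbf{\Lambda^*}\Tp{\mathbf{\Theta_R^*}}\right) \\
&= \sum_{k}\textup{Tr}\left(\mathbf{\Lambda}_k^*\Tp{\mathbf{R}_k^*}\mathbf{R}_k^*\right) = \textup{Tr}(\mathbf{\Lambda^*}),
\end{align*}
where the last equality uses feasibility $\Tp{\mathbf{R}_k^*}\mathbf{R}_k^* = \mathbf{I}_3$. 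Thus $\mathbf{\Theta_R^*}$ and $\mathbf{\Lambda^*}$ have equal objective values in the primal (\ref{equ:pro}) and the dual (\ref{equ:dual1}).

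Third, I would close with weak duality. Because $\mathcal{P}_{SO(3)}\subset\mathcal{P}_{O(3)}$ and every $\mathbf{\Theta_R}\in\mathcal{P}_{O(3)}$ yields $\mathbf{Z}=\Tp{\mathbf{\Theta_R}}\mathbf{\Theta_R}\in\mathcal{C}$ at equal cost, the SDP (\ref{equ:r3}) lower-bounds (\ref{equ:pro}); and weak duality between (\ref{equ:dual1}) and (\ref{equ:r3}) gives $\textup{Tr}(\mathbf{\Lambda})\le\textup{Tr}(\mathbf{M}\mathbf{Z})$ for every dual-feasible $\mathbf{\Lambda}$. Chaining these,
\[
\textup{Tr}(\mathbf{\Lambda^*}) \;\le\; \min_{\mathbf{Z}\in\mathcal{C}}\textup{Tr}(\mathbf{M}\mathbf{Z}) \;\le\; \min_{\mathbf{\Theta_R}\in\mathcal{P}_{SO(3)}} f(\mathbf{\Theta_R}) \;\le\; f(\mathbf{\Theta_R^*}).
\]
The hypothesis $\mathbf{M}-\mathbf{\Lambda^*}\succeq0$ is precisely what makes $\mathbf{\Lambda^*}$ dual-feasible, so the leftmost term is a genuine lower bound, while the second step showed the outer terms coincide. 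Hence every inequality is an equality, which is exactly statement (1), zero duality gap between (\ref{equ:r3}) and (\ref{equ:pro}), and statement (2), that $\mathbf{\Theta_R^*}$ attains the global minimum.

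I expect the main obstacle to be conceptual rather than computational: verifying that the multiplier recovered from the Euclidean gradient is legitimately symmetric, so that the complementary-slackness identity $\mathbf{\Theta_R^*}(\mathbf{M}-\mathbf{\Lambda^*})=0$ genuinely holds. This is where the \emph{local-minimizer} hypothesis enters, through vanishing of the skew part of the orthogonally-constrained (Riemannian) gradient; notably, only this first-order information is needed, and no second-order condition is invoked. The remaining subtle point to emphasize is that the PSD certificate $\mathbf{M}-\mathbf{\Lambda^*}\succeq0$ is doing all the global work: it upgrades a merely stationary point to a globally optimal one by supplying a dual-feasible witness whose value the point already achieves.
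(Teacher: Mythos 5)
Your proposal is correct and follows essentially the same route as the paper's Appendix A: both derive the closed-form multiplier $\mathbf{\Lambda}_i^* = \sum_{j} \mathbf{M}_{ij} \Tp{\mathbf{R}^*_j} \mathbf{R}_i^*$ from first-order stationarity of the Lagrangian, establish $\textup{Tr}(\mathbf{\Lambda^*}) = \textup{Tr}\left(\mathbf{\Theta_R^*} \mathbf{M} \Tp{\mathbf{\Theta_R^*}}\right)$ via feasibility, and then use the hypothesis $\mathbf{M}-\mathbf{\Lambda^*} \succeq 0$ as a dual certificate for both conclusions. The only cosmetic difference is that you conclude global optimality through an explicit weak-duality sandwich across (\ref{equ:pro}), (\ref{equ:r3}) and (\ref{equ:dual1}), whereas the paper expands $\textup{Tr}\left(\mathbf{\Theta_R} (\mathbf{M}-\mathbf{\Lambda^*}) \Tp{\mathbf{\Theta_R}}\right) \geq 0$ directly for arbitrary feasible $\mathbf{\Theta_R}$ --- the same inequality in different packaging.
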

	\begin{proof}
		The proof is given in Appendix \ref{appendix_a}.
	\end{proof}
	
	It is seen in Theorem \ref{theorem:1} that we can certify the global optimality of a candidate solution using $\mathbf{M-\Lambda^*}$. 
	To represent $\mathbf{M-\Lambda^*}$ in a compact form, we define
	\begin{align}
		\mathbf{D}_{\mathbf{\Theta_R^*}} & = \text{Diag}(\mathbf{R}^*_1, \mathbf{R}^*_2, \cdots, \mathbf{R}^*_N) \in \mathbb{R}^{3N \times 3N},\\
		\varphi_{ij}^{\tau} & =  \mathbf{R}^*_i \mathbf{R}_{i_\tau} b^\tau_{ij} \in \mathbb{R}^{3},
	\end{align}
	where $\varphi_{ij}^{\tau}$ represents the bearing from robot $i$ to robot $j$ in robots' common frame when relative pose $\mathbf{R}_i^*$ is given.
	Then we define the \textit{certificate matrix} as 
	\begin{align}
		\mathbf{K}  = \mathbf{D}_{\mathbf{\Theta_R^*}} (\mathbf{M-\Lambda^*}) \mathbf{D}_{\mathbf{\Theta_R^*}}^\mathrm{T}. \label{equ:K}
	\end{align}
	Since $\mathbf{D}_{\mathbf{\Theta_R^*}}$ is orthogonal, 	
	$\mathbf{K}$ is semi-definite as long as $\mathbf{M-\Lambda^*}$ is. 
	Thus, when a candidate solution $\mathbf{\Theta_R^*}$ is given, we can certify its global optimality by checking whether $\mathbf{K} \succeq 0$.
	Expanding (\ref{equ:K}) gives:
	\begin{gather}
		\mathbf{K} = 
		\begin{bmatrix}
			- \sum\limits_{j\neq1} \mathbf{K}_{1j}  & \mathbf{K}_{12} & \cdots & \mathbf{K}_{1N} \\
			\mathbf{K}_{21} & -\sum\limits_{j\neq2} \mathbf{K}_{2j}   & \cdots & \mathbf{K}_{2N} \\
			\vdots & \vdots & \ddots & \vdots \\
			\mathbf{K}_{N1} & \mathbf{K}_{N2} & \cdots & - \sum\limits_{j\neq N}\mathbf{K}_{Nj}
		\end{bmatrix} \label{equ:reK} \\
		\mathbf{K}_{ij} = \mathbf{R}^*_i \mathbf{M}_{ij} \mathbf{R}^*_j\tp = 
		\begin{cases}
			\sum\limits_{\tau \in J_{ij}} \varphi_{ij}^\tau \varphi_{ji}^\tau\tp &(i,j)\in \mathcal{E}\\
			\mathbf{0}_{3\times 3} &(i,j)\notin \mathcal{E}
		\end{cases} \label{equ:reK1}
	\end{gather}
	
	The structure of $\mathbf{K}$ columns that the columns of $\mathbf{N} = \mathbf{1}_N \otimes \mathbf{I}_3$ lie in the nullspace of $\mathbf{K}$.
	It implies  $\mathbf{K}$ invariably possesses three zero eigenvalues, with the columns of  $\mathbf{N}$ acting as the corresponding eigenvectors.
	Therefore, for a sufficienly large value $\mu$, the criterion  $\lambda_1(\mathbf{K} + \mu\mathbf{NN\tp}) \geq 0$ can be used to ascertain the positive semi-definiteness of $\mathbf{K}$, where $\lambda_1(\star)$ denotes the smallest eigenvalue.
	We refer to this value as the \textit{certificate eigenvalue}.
	
	\subsubsection{Swarm Trajectory Metric for Certifiable Estimation}
	\label{subsubsec:metric}
	
	\begin{figure*}[!t]
		\centering
		\includegraphics[width=1.0\textwidth]{./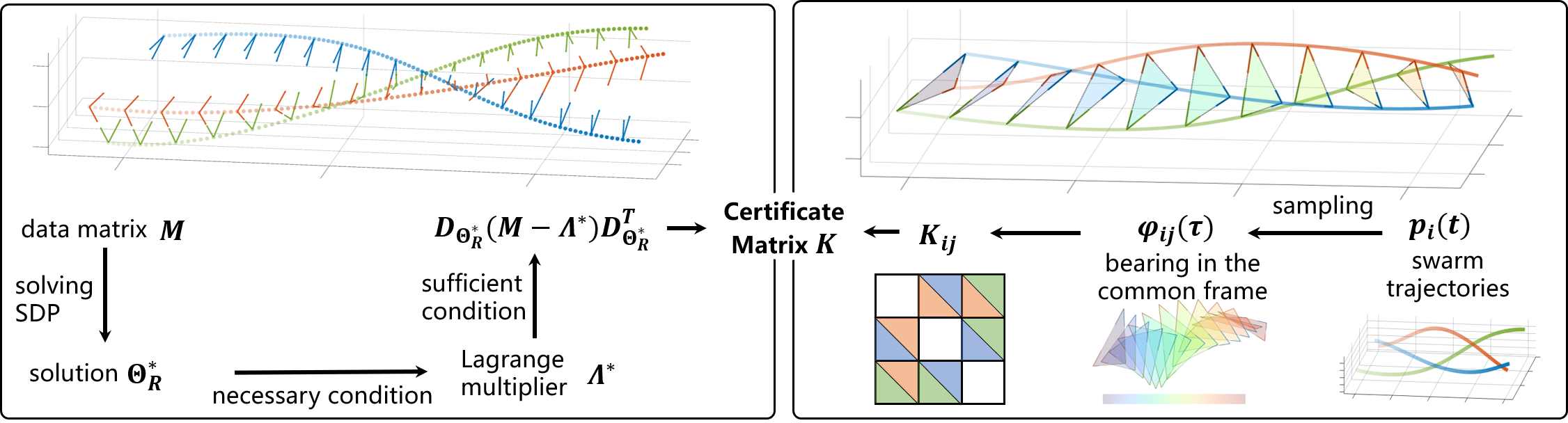}
		\caption{\label{fig:dual_description} Demonstration of the dual description of the certificate matix $\mathbf{K}$. On the left, discrete poses and bearing measurements represent the estimation process, with $\mathbf{K}$ being constructed from the actual measurement data.
			On the right, continuous swarm trajectories and triangle sections symbolize the swarm motion and sampling process, respectively, where $\mathbf{K}$ is formulated by sampling along  these trajectories.}
		\vspace{-0.4cm}
	\end{figure*}

	Observe (\ref{equ:reK}) and (\ref{equ:reK1}), another representation of $\mathbf{K}$ can be obtained as
	\begin{equation} \label{equ:reK2}
		\begin{aligned}
			&\mathbf{K} = \sum_{(i,j) \in \mathcal{E}} \mathbf{K}_{(i,j)}, \
			\mathbf{K}_{(i,j)} = \sum\limits_{\tau \in J_{ij}} \mathbf{K}_{(i,j)}^\tau, \\
			\mathbf{K}_{(i,j)}^\tau & = 
			\begin{bmatrix}
				\ddots & & & & \\
				& -\varphi_{ij}^\tau \varphi_{ji}^\tau \tp & \cdots & \varphi_{ij}^\tau  \varphi_{ji}^\tau \tp & \\
				& \vdots & \ddots & \vdots & \\
				& \varphi_{ji}^\tau  \varphi_{ij}^\tau \tp & \cdots & -\varphi_{ji}^\tau  \varphi_{ij}^\tau \tp & \\
				& & & & \ddots 
			\end{bmatrix}.
		\end{aligned}
	\end{equation}
	The representation implies that $\mathbf{K}$ can be formulated using bearings $\varphi_{ij}^\tau$.
	It allows us to recognize the certificate matrix from the perspective of swarm motion.
	Consider the following scenario: relative poses of robots have been known as $\mathbf{\hat{\Theta}_R}$, and each robot (e.g., $i$) moves along a trajectory $p_i(t)$ in robots' common reference frame. 
	Then, the groud-truth bearing $\hat{\varphi}_{ij}^\tau$ can be obtained by sampling robots' positions:
	\begin{align}
		\hat{\varphi}_{ij}^\tau = \frac{p_j(\tau) - p_i(\tau)}{\norm{p_j(\tau) - p_i(\tau)}} \label{equ:gt_bearing}
	\end{align}
	Next, a special certificate matrix $\hat{\mathbf{K}}$ can be constructed using the groud-truth bearing $\hat{\varphi}_{ij}^\tau$ as:
	\begin{equation} \label{equ:reK2}
		\begin{aligned}
			&\hat{\mathbf{K}} = \sum_{(i,j) \in \mathcal{E}} \hat{\mathbf{K}}_{(i,j)}, \
			\hat{\mathbf{K}}_{(i,j)} = \sum\limits_{\tau \in J_{ij}} \hat{\mathbf{K}}_{(i,j)}^\tau, \\
			\hat{\mathbf{K}}_{(i,j)}^\tau & = 
			\begin{bmatrix}
				\ddots & & & & \\
				& -	\hat{\varphi}_{ij}^\tau 	\hat{\varphi}_{ji}^\tau \tp & \cdots & 	\hat{\varphi}_{ij}^\tau  	\hat{\varphi}_{ji}^\tau \tp & \\
				& \vdots & \ddots & \vdots & \\
				& 	\hat{\varphi}_{ji}^\tau  	\hat{\varphi}_{ij}^\tau \tp & \cdots & -	\hat{\varphi}_{ji}^\tau  	\hat{\varphi}_{ij}^\tau \tp & \\
				& & & & \ddots 
			\end{bmatrix}.
		\end{aligned}
	\end{equation}

	In this context, assume robots obtain local bearing $\hat{b}^\tau_{ij}$ during motion along  $\{p_i(t)\}_{i\in[1,N]}$, then $\mathbf{\hat{\Theta}_R}$ can be recovered by solving (\ref{equ:r3}) using these local bearings as soon as $\hat{\mathbf{K}} \succeq 0$, according to Theorem \ref{theorem:1}.
	
	Since the construction of $\hat{\mathbf{K}}$ solely depends on the swarm trajectories $\{p_i(t)\}_{i\in[1,N]}$ in the common reference frame, which allows it to serve as a metric for these trajectories. 
	This perspective, along with the original definition, constitutes the dual description of the certificate matrix, as depicted in Fig.\ref{fig:dual_description}.
	The practical siginicance of this dual description emerges from the following observations:
	In noise-free cases, $\mathbf{K}$ is equivalent to $\hat{\mathbf{K}}$.
	However, when considering the presence of detection noise in the actual bearing measurements $\varphi_{ij}^\tau$, $\mathbf{K}$ can be interpreted as a matrix derived by adding perturbations to $\hat{\mathbf{K}}$.
	While  $\hat{\mathbf{K}}$ maintains positive semi-definiteness,the addition of noise might compromise the positive semi-definiteness of $\mathbf{K}$, a phenomenon that will be analyzed subsequently.
	
	\subsection{Noise Analysis of Certificate Matrix}
	\label{subsec:robustness} 

	In this section, we explore the impact of detection noise on the positive semi-definiteness of the certificate matrix $\mathbf{K}$.
	Due to the complexity inherent in directly analyzing $\mathbf{K}$, our focus shifts to its minimum component $\mathbf{K}_{(i,j)}^\tau$, as detailed in (\ref{equ:reK2}).
	To simplify the analysis, we remove the time stamp $\tau$ and denote $\mathbf{K}_s = \mathbf{K}_{(i,j)}^\tau$. 
	Then, we define $\mathbf{P}_{ij} = [e_i, e_j]\tp \otimes \mathbf{I}_3$ and proceed to extract the non-zero components of $\mathbf{K}_s$.
	This extraction allows us to define $\mathbf{Q}_s$ as:
	\begin{align}
		\mathbf{Q}_s = \mathbf{P}_{ij} \mathbf{K}_s \mathbf{P}_{ij}^\mathrm{T} = 
		\begin{bmatrix}
			-\varphi_{ij} \varphi_{ji}^\mathrm{T}  & \varphi_{ij} \varphi_{ji}^\mathrm{T} \\
			\varphi_{ji} \varphi_{ij}^\mathrm{T}   & -\varphi_{ji} \varphi_{ij}^\mathrm{T}
		\end{bmatrix} \in \mathbb{R}^{6\times6}. \label{equ:Ks}
	\end{align}
	$\mathbf{Q}_s$ has the same non-zero eigenvalues as $\mathbf{K}_s$.
	
	To distinguish noise-free and noisy cases, we describe the ground-truth and noisy bearings:
	\begin{align}
		 \varphi_{ij} = \hat{\varphi}_{ij} + \delta_{ij}, \ \varphi_{ji} = \hat{\varphi}_{ji} + \delta_{ji}, \label{equ:noise_bearing}
	\end{align}
	where $\hat{\varphi}$, $\varphi$ and $\delta$ represent the ground-truth, noisy bearing and detection noise, recpectively. 
	Denote the norm of detection noise as $\xi_{ij}=\norm{\delta_{ij}}$ and $\xi_{ji}=\norm{\delta_{ji}}$. 
	Then, maxtrces which are constructed with ground-truth bearings are denoted as $\mathbf{\hat{Q}}_s$, $\mathbf{\hat{K}}_s$ and $\mathbf{\hat{K}}$.
	The curruption on them is denoted as
	\begin{align}
		\Delta \mathbf{Q}_s & =  \mathbf{Q}_s - \mathbf{\hat{Q}}_s \label{equ:deltaQ} \\
		\Delta \mathbf{K}_s & = \mathbf{K}_s - \mathbf{\hat{K}}_s = \mathbf{P}_s^\mathrm{T} \Delta \mathbf{Q}_s \mathbf{P}_s, \label{equ:deltaKs} \\
		\Delta \mathbf{K} & = \mathbf{K} - \mathbf{\hat{K}} = \sum^{\mathcal{L}}  \Delta \mathbf{K}_s. \label{equ:deltaK}
	\end{align}
	where $\mathcal{L}$ is the total number of data.
	As $\hat{\varphi}_{ij} + \hat{\varphi}_{ji}=0$, bring $ \hat{\varphi}_{ji} = -\hat{\varphi}_{ij}$ to (\ref{equ:Ks}) gives:
	\begin{align}
		\hat{\mathbf{Q}}_s = 
		\begin{bmatrix}
			\hat{\varphi}_{ij} \hat{\varphi}_{ij}^\mathrm{T}  & -\hat{\varphi}_{ij} \hat{\varphi}_{ij}^\mathrm{T} \\
			-\hat{\varphi}_{ij} \hat{\varphi}_{ij}^\mathrm{T}   & \hat{\varphi}_{ij} \hat{\varphi}_{ij}^\mathrm{T}
		\end{bmatrix} = u_{ij} u_{ij}^\mathrm{T}
	\end{align}
	where $u_{ij} = [\hat{\varphi}_{ij}^\mathrm{T}, -\hat{\varphi}_{ij}^\mathrm{T}]\tp$.
	Thus, we obtain that $\mathbf{\hat{K}}$ is a Gram matrix:
	\begin{gather}
		\mathbf{\hat{K}} = \sum^{\mathcal{L}} \mathbf{\hat{K}}_s = \sum^{\mathcal{L}} v_s v_s^\mathrm{T} \\
		v_s = \left[  \cdots\   \mathbf{0}\  \hat{\varphi}_s^\mathrm{T}\  \mathbf{0}\  \cdots \ \mathbf{0}\  -\hat{\varphi}_s^\mathrm{T}\  \mathbf{0}\  \cdots\  \right]\tp \in \mathbb{R}^{3N} \label{equ:gtK}
	\end{gather}
	with $\hat{\varphi}_s$ and $-\hat{\varphi}_s$ occupying blocks corresponding to involved robots in bearing $\hat{\varphi}_s$.
	It suggests that  $\mathbf{\hat{K}}$ is always semi-definite. 
	Since $\mathbf{K}$ always has three zero eigenvalues, it is obvious that $\lambda_1(\hat{\mathbf{K}} + \mu \mathbf{NN\tp}) = \lambda_4(\hat{\mathbf{K}})$.
	
	In general noisy cases, we construct $\mathbf{\hat{Q}}_s$ and $\mathbf{Q}_s$ using the ground-truth and noisy bearings respectively and expand the curruption $\Delta \mathbf{Q}_s$ in (\ref{equ:deltaQ}) to obtain
	\begin{gather}
		\Delta \mathbf{Q}_s =
		\begin{bmatrix}
			 -\mathbf{\Delta}_{ij} & \mathbf{\Delta}_{ij} \\
			 \mathbf{\Delta}_{ji} & -\mathbf{\Delta}_{ji}
		\end{bmatrix}, \label{equ:deltaQ1}
	\end{gather}
	where
	\begin{gather}
		\mathbf{\Delta}_{ij} =\hat{\varphi}_{ij} \delta_{ji}^\mathrm{T} + \delta_{ij} \varphi_{ji}^\mathrm{T}, \\
		\mathbf{\Delta}_{ji} = \delta_{ji} \hat{\varphi}_{ij}^\mathrm{T} + \varphi_{ji} \delta_{ij}^\mathrm{T}
	\end{gather}

	Next, another formulation of the perturbation $\Delta \mathbf{Q}_s$ is provide as below.
	\begin{theorem}
		\label{theorem:2}
		Define $\mathring{\delta}_{ij} = \frac{\delta_{ij}}{\xi_{ij}}$, $\mathring{\delta}_{ji} = \frac{\delta_{ji}}{\xi_{ji}}$, 
		$\phi_{ij} = \hat{\varphi}_{ij} \times \delta_{ji}$ and $\phi_{ji} = \delta_{ij} \times \varphi_{ji}$, then $\Delta \mathbf{Q}_s$ can be written as
		\begin{gather}
			\Delta \mathbf{Q}_s = \mathbf{V}_1 \mathbf{U}_1 \mathbf{V}_1^\mathrm{T} + \mathbf{V}_2 \mathbf{U}_2 \mathbf{V}_2^\mathrm{T} \\
			\mathbf{V}_1 = 
			\left[\begin{array}{c|c|c|c|c|c}
				\hat{\varphi}_{ij}      & \hat{\varphi}_{ij}     &  \phi_{ij}   &  e_1 & e_2 & e_3 \\
				-\mathring{\delta}_{ji} & \mathring{\delta}_{ji} &  -\phi_{ij}  &  e_1 & e_2 & e_3 \\
			\end{array}\right], \\
			\mathbf{V}_2 = 
			\left[\begin{array}{c|c|c|c|c|c}
				-\mathring{\delta}_{ij} & \mathring{\delta}_{ij} &  \phi_{ji}   &  e_1 & e_2 & e_3 \\
				\varphi_{ji}            & \varphi_{ji}           &  -\phi_{ji}  &  e_1 & e_2 & e_3 \\
			\end{array}\right], \\
			\mathbf{U}_1 = \textup{diag}([-\xi_{ji} - \delta_{ji}^\mathrm{T} \hat{\varphi}_{ij} , \xi_{ji} - \delta_{ji}^\mathrm{T} \hat{\varphi}_{ij}, \mathbf{0}_{1\times4}]) \\
			\mathbf{U}_2 = \textup{diag}([-\xi_{ij} - \delta_{ij}^\mathrm{T} \varphi_{ji}, \xi_{ij} - \delta_{ij}^\mathrm{T} \varphi_{ji}, \mathbf{0}_{1\times4}])
		\end{gather}
	\end{theorem}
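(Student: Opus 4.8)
The statement is a spectral factorization of the $6\times6$ perturbation block, so the natural plan is direct verification: for each of the two rank-two summands, exhibit a complete eigenbasis together with its eigenvalues. First I would use the bilinearity of $\Delta\mathbf{Q}_s$ in the two rank-one pieces of $\mathbf{\Delta}_{ij}=\hat{\varphi}_{ij}\delta_{ji}\tp+\delta_{ij}\varphi_{ji}\tp$ (and of $\mathbf{\Delta}_{ji}=\mathbf{\Delta}_{ij}\tp$) to split $\Delta\mathbf{Q}_s=\mathbf{Q}_s^{(1)}+\mathbf{Q}_s^{(2)}$, where each summand is of the structured form $\bigl[\begin{smallmatrix}-pq\tp & pq\tp\\ qp\tp & -qp\tp\end{smallmatrix}\bigr]$ with $(p,q)=(\hat{\varphi}_{ij},\delta_{ji})$ for the first and $(p,q)=(\delta_{ij},\varphi_{ji})$ for the second. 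It then suffices to match each summand with the corresponding product $\mathbf{V}_k\mathbf{U}_k\mathbf{V}_k^{\mathrm{T}}$; the cleanest route is to verify the eigenvector relations $\mathbf{Q}_s^{(k)}\mathbf{V}_k=\mathbf{V}_k\mathbf{U}_k$ one column at a time, which identifies $\mathbf{V}_k$ as an eigenbasis and $\mathbf{U}_k$ as the eigenvalues of $\mathbf{Q}_s^{(k)}$ and thereby establishes the factorization without any explicit inversion.

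For the two nonzero eigenvalues of a summand I would note that its range is spanned by $[p\tp,\mathbf{0}]\tp$ and $[\mathbf{0},q\tp]\tp$, so any nonkernel eigenvector has the form $[\alpha p\tp,\beta q\tp]\tp$ and the eigenproblem collapses to the $2\times2$ matrix $\bigl[\begin{smallmatrix}-p\tp q & \norm{q}^2\\ \norm{p}^2 & -p\tp q\end{smallmatrix}\bigr]$, whose eigenvalues are $-p\tp q\pm\norm{p}\norm{q}$ with component ratio $\alpha/\beta=\pm\norm{q}/\norm{p}$. Substituting $\norm{\hat{\varphi}_{ij}}=1$ and $\norm{\delta_{ji}}=\xi_{ji}$ recovers exactly the first two diagonal entries $\pm\xi_{ji}-\delta_{ji}\tp\hat{\varphi}_{ij}$ of $\mathbf{U}_1$ and the first two columns $[\hat{\varphi}_{ij}\tp,\mp\mathring{\delta}_{ji}\tp]\tp$ of $\mathbf{V}_1$; the analogous computation with $(\delta_{ij},\varphi_{ji})$ produces $\mathbf{U}_2$ and $\mathbf{V}_2$.

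The remaining four columns must span the kernel, since each summand has rank two. The common-mode directions $[e_k\tp,e_k\tp]\tp$ are annihilated because the two off-diagonal blocks cancel, and the cross-product direction $[\phi_{ij}\tp,-\phi_{ij}\tp]\tp$ with $\phi_{ij}=\hat{\varphi}_{ij}\times\delta_{ji}$ is annihilated using the triple-product identities $\hat{\varphi}_{ij}\tp\phi_{ij}=\delta_{ji}\tp\phi_{ij}=0$ (and correspondingly $\delta_{ij}\tp\phi_{ji}=\varphi_{ji}\tp\phi_{ji}=0$ for $\phi_{ji}=\delta_{ij}\times\varphi_{ji}$ in the second summand). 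Together with the two range eigenvectors, these six columns are linearly independent whenever $p$ and $q$ are not collinear, so that $\mathrm{range}\oplus\mathrm{kernel}=\mathbb{R}^{6}$, $\mathbf{V}_k$ is a genuine eigenbasis, and the factorization of each summand is valid; summing the two gives the claimed expression for $\Delta\mathbf{Q}_s$.

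I expect the main obstacle to lie in the kernel and independence bookkeeping rather than in the $2\times2$ eigenvalue computation. In particular, norms must be tracked carefully: the second summand involves the noisy bearing $\varphi_{ji}$, whose norm enters the eigenvalues as $\pm\norm{\delta_{ij}}\norm{\varphi_{ji}}$, so the clean entries $\pm\xi_{ij}$ in $\mathbf{U}_2$ implicitly rely on bearings being unit-normalized ($\norm{\varphi_{ji}}=1$). Making this normalization explicit, together with the non-degeneracy assumption that $\hat{\varphi}_{ij},\delta_{ji}$ (resp.\ $\delta_{ij},\varphi_{ji}$) are linearly independent so that $\phi_{ij}$ (resp.\ $\phi_{ji}$) is nonzero and the six columns of $\mathbf{V}_k$ stay independent, are the hypotheses I would state before assembling the final factorization.
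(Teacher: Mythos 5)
Your route is essentially the paper's own: the same splitting of $\Delta\mathbf{Q}_s$ into two summands built from the pairs $(\hat{\varphi}_{ij},\delta_{ji})$ and $(\delta_{ij},\varphi_{ji})$, the same two non-kernel eigenvectors and four kernel vectors per summand, and the same eigenvalues. Your $2\times 2$ reduction is a cleaner way to \emph{derive} (rather than merely verify, as the paper does) the eigenpairs, and you correctly surface two hypotheses the paper leaves implicit: the noisy bearings must be unit vectors, so that $\norm{\varphi_{ji}}=1$ turns $\pm\norm{\delta_{ij}}\norm{\varphi_{ji}}$ into $\pm\xi_{ij}$, and each pair must be non-collinear so that $\phi_{ij},\phi_{ji}\neq 0$ and each $\mathbf{V}_k$ is invertible.

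There is, however, a genuine gap at your last step, and it is the same leap the paper's proof makes. From the verified relation $\mathbf{Q}_s^{(k)}\mathbf{V}_k=\mathbf{V}_k\mathbf{U}_k$ with $\mathbf{V}_k$ invertible you may conclude only $\mathbf{Q}_s^{(k)}=\mathbf{V}_k\mathbf{U}_k\mathbf{V}_k^{-1}$; replacing $\mathbf{V}_k^{-1}$ by $\mathbf{V}_k\tp$ requires $\mathbf{V}_k$ to be orthogonal, which it is not: its columns are not unit vectors (the first two have norm $\sqrt{2}$), and, e.g., $[\hat{\varphi}_{ij}\tp,-\mathring{\delta}_{ji}\tp]\tp$ is generically not orthogonal to $[e_1\tp,e_1\tp]\tp$. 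In fact the congruence form cannot hold at all: $\mathbf{V}_1\mathbf{U}_1\mathbf{V}_1\tp$ is symmetric with top-left $3\times 3$ block $-2(\delta_{ji}\tp\hat{\varphi}_{ij})\,\hat{\varphi}_{ij}\hat{\varphi}_{ij}\tp$, whereas the corresponding summand of $\Delta\mathbf{Q}_s$ is generically non-symmetric with top-left block $-\hat{\varphi}_{ij}\delta_{ji}\tp$. So your claim that the column-by-column check ``thereby establishes the factorization without any explicit inversion'' is precisely the step that fails --- in your write-up and in the paper alike. The repair is to state and prove the result as an eigendecomposition, $\Delta\mathbf{Q}_s=\mathbf{V}_1\mathbf{U}_1\mathbf{V}_1^{-1}+\mathbf{V}_2\mathbf{U}_2\mathbf{V}_2^{-1}$, which your $2\times 2$ computation together with your kernel and independence bookkeeping does establish, and which is all that the subsequent noise analysis (the signs of the eigenvalues introduced by detection noise) actually uses.
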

	\begin{proof}
		The proof is given in Appendix \ref{appendix_b}.
	\end{proof}
	\begin{figure*}[!t]
		\centering
		\includegraphics[width=1.0\textwidth]{./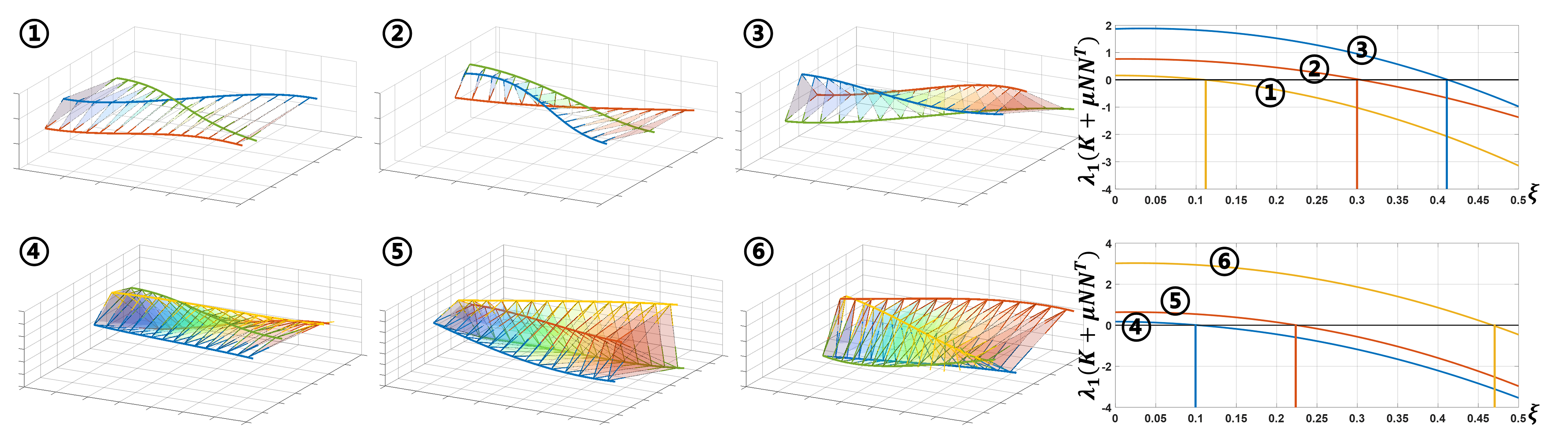}
		\caption{\label{fig:34} Demonstration of the impact of noise on estimation optimality. 
			Figures $\normalsize{\textcircled{\scriptsize{1}}}\normalsize$-$\normalsize{\textcircled{\scriptsize{3}}}\normalsize$ illustrate  different swarm trajectories for three robots, while figures $\normalsize{\textcircled{\scriptsize{4}}}\normalsize$-$\normalsize{\textcircled{\scriptsize{6}}}\normalsize$ illustrate scenarios involving four robots. 
			The corresponding right figures show the variation of the certificate eigenvalue under different levels of noise.}
		\vspace{-0.1cm}
	\end{figure*}

	As $-\xi_{ji} - \delta_{ji}^\mathrm{T} \hat{\varphi}_{ij} <0$ and $-\xi_{ij} - \delta_{ij}^\mathrm{T} \varphi_{ji} < 0$, the theorem shows that the presence of detection noise is shown to introduce negative eigenvalues, potentially compromising the positive semi-definiteness of $\mathbf{K}$.
	Consider a specific scenario: if $\delta_{ji}$ is collinear with  $\hat{\varphi}_{ij}$ and $\delta_{ij}$ is collinear with  $\varphi_{ji}$, then $\Delta \mathbf{Q}_s$ will not possess any positive eigenvalue. 
	If this scenario applies to every pair of bearings, then $\Delta \mathbf{K}$ becomes a negative semi-definite matrix, and it is clear that $\lambda_1(\mathbf{K}) < \lambda_1(\mathbf{\hat{K}})$.
	Then, if $\lambda_{1}(\mathbf{\Delta K}) < -\lambda_{\textup{max}}(\mathbf{\hat{K}})$ under certain  detection noises, $\mathbf{K}$ will lose its positive semi-definiteness, as $\lambda_1(\mathbf{K}) \leq \lambda_{\textup{max}}(\mathbf{\hat{K}}) + \lambda_1(\mathbf{\Delta K}) < 0$.
	
	Except the aforementioned special case, numerical simulations are conducted to further validate the impact of detection noise on the positive semi-definiteness of the certificate matrix $\mathbf{K}$.
	In these simulations, robots follow various simulated trajectories and generate noisy bearings. 
	The detection noise magnitude is uniform, denoted as $\xi = \xi_{ij} = \xi_{ji}$ for all bearing pairs.
	Using these noisy bearings,  $\mathbf{K}$ is constructed, and we observe changes in $\lambda_1(\mathbf{K} + \mu \mathbf{NN\tp})$ as $\xi$ increases.
	Select results are presented in Fig.\ref{fig:34}, illustrating the variation of $\lambda_1(\mathbf{K} + \mu \mathbf{NN\tp})$ for specific swarm trajectories involving 3 and 4 robots, respectively.
	The results show two key findings.
	Firstly, when $\xi$ surpasses a certain threshold, the positive semi-definiteness of $\mathbf{K}$ is indeed compromised, aligning with our prior analysis. 
	Secondly, swarm trajectories characterized by a larger $\lambda_1(\hat{\mathbf{K}} + \mu \mathbf{NN\tp})$ -the certificate eigenvalue when $\xi = 0$-exhibit greater resistance to noise. 
	This observation ties back to the discussion of dual descriptions in Sec.\ref{subsubsec:metric},
	confirming that $\lambda_1(\hat{\mathbf{K}} + \mu \mathbf{NN\tp})$, or equivalently  $\lambda_4(\hat{\mathbf{K}})$, can effectively function as a metric for evaluating a swarm's resistance to noise.
	
	To further our understanding of $\lambda_4(\hat{\mathbf{K}})$, we delve into two related problems.
	First, we investigate the conditions under which  $\lambda_4(\hat{\mathbf{K}}) = 0$, signifying that the swarm trajectories are entirely susceptible to noise.
	We refer to these scenarios as \textit{degeneration} and identify these cases in Sec.\ref{subsec:degeneration}. 
	Secondly, in scenarios where the magnitude noise is known to be bounded, we explore the minimum threshold that  $\lambda_4(\hat{\mathbf{K}})$ must meet to ensure the positive semi-definiteness of the disturbed certificate matrix $\mathbf{K}$. 
	Our objective is to explicitly characterize the noise resistance of swarm motion, which aids in planning swarm trajectories that ensure $\mathbf{K} \succeq 0$.
	The threshold, termed the \textit{certificate eigenvalue bound} is derived and elaborated in Sec.\ref{subsec:error_bound}.
%
	
	\subsection{Degeneration Identification}
	\label{subsec:degeneration}
	In this section, what kind of swarm motion will lead $\lambda_1(\hat{\mathbf{K}} + \mu \mathbf{NN\tp}) = 0$ is presented.
	Define $\mathbf{G} = \hat{\mathbf{K}} + \mu \mathbf{NN\tp}$.
	Recall (\ref{equ:gtK}), it be rewritten as $\mathbf{G} = \mathbf{J}\tp \mathbf{J}$ where
	\begin{align}
		\mathbf{J} = \left[ \sqrt{\mu} \mathbf{N}\tp,\ v_1^\mathrm{T},\ \cdots,\ v_\mathcal{L}^\mathrm{T} \right]\tp \in \mathbb{R}^{(3+\mathcal{L})\times 3N}.
	\end{align}
	Since $\lambda_1(\mathbf{G}) = 0 \Leftrightarrow \textup{det}(\mathbf{G}) = 0$, we study the determinant of $\mathbf{G}$ instead of its eigenvalues.
	Applying the Cauchy-Binet formula (\cite{broida1989comprehensive}) on $\mathbf{G}$ gives:
	\begin{align}
		\textup{det}(\mathbf{G}) = \sum_{S\in {[3+\mathcal{L}] \choose 3N}}\ \textup{det}(\mathbf{J}_{[3N],S})^2.
	\end{align}
	Here we write $[n]$ for the set $[n]= \{1,\cdots,n\}$ and $[n] \choose m$ for the set of $m$-combinations of $[n]$.
	$\mathbf{J}_{[m],S}$ is a $m \times m$ matrix whose rows  are the rows  of $\mathbf{J}$ at indices from $S$.
	It is obvious that 
	\begin{align}
		\textup{det}(\mathbf{G}) = 0 \Leftrightarrow \textup{det}(\mathbf{J}_{[3N],S}) = 0, \forall S\in {[3+\mathcal{L}] \choose 3N}. \label{equ:G}
	\end{align}
	Thus, we study under what conditions $\textup{det}(\mathbf{J}_{[3N],S}) = 0$,  for different number of robots $N$.
	\subsubsection{N = 2}
	\label{subsec:N_2}
	According to how many rows of $\sqrt{\mu} \mathbf{N}^\tp$ are selected in $\mathbf{J}_{[6],S}$, there are four types of matrix $\mathbf{J}_{[6],S}^{(0)}, \mathbf{J}_{[6],S}^{(1)}, \mathbf{J}_{[6],S}^{(2)}$ and $\mathbf{J}_{[6],S}^{(3)}$.
	Examples of $\mathbf{J}_{[6],S}^{(2)}$ and $\mathbf{J}_{[6],S}^{(3)}$ are written as 
	\begin{align*}
		\mathbf{J}_{[6],S}^{(2)} = 
		\begin{bmatrix}
			\sqrt{\mu} e_1 & \sqrt{\mu} e_1 \\
			\sqrt{\mu} e_2 & \sqrt{\mu} e_2 \\
			\mathbf{S}_4 & -\mathbf{S}_4
		\end{bmatrix},
		\mathbf{S}_4=
		\begin{bmatrix}
			\varphi^{\tau_i} \\
			\varphi^{\tau_j} \\
			\varphi^{\tau_k} \\
			\varphi^{\tau_m}
		\end{bmatrix},\\
		\mathbf{J}_{[6],S}^{(3)} = 
		\begin{bmatrix}
			\sqrt{\mu}\mathbf{I}_3 & \sqrt{\mu}\mathbf{I}_3 \\
			\mathbf{S}_3 & -\mathbf{S}_3
		\end{bmatrix},
		\mathbf{S}_3=
		\begin{bmatrix}
			\varphi^{\tau_i} \\
			\varphi^{\tau_j} \\
			\varphi^{\tau_k}
		\end{bmatrix}.
	\end{align*}
	where $\varphi^{\tau_i}$ denotes inter-robot bearing in the common frame at $\tau_i$.
	Then we have the following conclusion.
	\begin{theorem}
		\label{theorem:3}
		(i) $\textup{det}(\mathbf{J}_{[3N],S}^{(d)}) = 0$ for $d = 0,1$ and $2$.
		(ii) For $d = 3$, if and only if $\varphi^{\tau_i}, \varphi^{\tau_j}$ and $\varphi^{\tau_k}$ are coplanar, $\textup{det}(\mathbf{J}_{[3N],S}^{(3)}) = 0$.
	\end{theorem}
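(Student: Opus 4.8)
The plan is to exploit the fact that every row of $\mathbf{J}$ has one of only two shapes: an $\mathbf{N}$-row $[\sqrt{\mu}e_i\tp,\ \sqrt{\mu}e_i\tp]$, whose two length-$3$ halves are \emph{equal}, or a bearing row $v_s\tp = [\varphi\tp,\ -\varphi\tp]$, whose two halves are \emph{opposite}. This suggests right-multiplying the selected $6\times6$ minor by the invertible ``butterfly'' transform
$$\mathbf{T} = \begin{bmatrix} \mathbf{I}_3 & \mathbf{I}_3 \\ \mathbf{I}_3 & -\mathbf{I}_3 \end{bmatrix}\in\mathbb{R}^{6\times6}, \qquad \textup{det}(\mathbf{T}) = (-2)^3 \neq 0,$$
which sends each $\mathbf{N}$-row to $[2\sqrt{\mu}e_i\tp,\ \mathbf{0}]$ and each bearing row to $[\mathbf{0},\ 2\varphi\tp]$. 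Because $\textup{det}(\mathbf{T})\neq0$, the vanishing of $\textup{det}(\mathbf{J}^{(d)}_{[6],S})$ is equivalent to the vanishing of $\textup{det}(\mathbf{J}^{(d)}_{[6],S}\mathbf{T})$, and in the transformed matrix the $d$ selected $\mathbf{N}$-rows are supported only on columns $1$--$3$ while the remaining $6-d$ bearing rows are supported only on columns $4$--$6$. All subsequent work is then a block computation on this decoupled matrix.

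For part (i), I would argue by a dimension count. When $d\le 2$, the first three columns of $\mathbf{J}^{(d)}_{[6],S}\mathbf{T}$ are nonzero only in the $d$ rows coming from $\sqrt{\mu}\mathbf{N}\tp$; hence three column vectors live in a coordinate subspace of dimension at most $d\le 2$ and must be linearly dependent (for $d=0$ they are identically zero). A matrix with linearly dependent columns is singular, so $\textup{det}(\mathbf{J}^{(d)}_{[6],S}) = 0$ for $d=0,1,2$. Equivalently, one may simply bound the rank by $\min(d,3)+\min(6-d,3)<6$.

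For part (ii), take $d=3$. After reordering rows so the three $\mathbf{N}$-rows come first, the transformed matrix becomes block diagonal,
$$\mathbf{J}^{(3)}_{[6],S}\mathbf{T} = \begin{bmatrix} 2\sqrt{\mu}\,\mathbf{I}_3 & \mathbf{0} \\ \mathbf{0} & 2\mathbf{S}_3 \end{bmatrix},$$
so $\textup{det}(\mathbf{J}^{(3)}_{[6],S}) = c\,\textup{det}(\mathbf{S}_3)$ for a nonzero constant $c$ absorbing $\textup{det}(\mathbf{T})$, the scalar factors $2\sqrt{\mu}$ and $2$, and the sign of the row permutation. Therefore $\textup{det}(\mathbf{J}^{(3)}_{[6],S})=0$ if and only if $\textup{det}(\mathbf{S}_3)=0$, i.e. the three bearing vectors $\varphi^{\tau_i},\varphi^{\tau_j},\varphi^{\tau_k}$ forming the rows of $\mathbf{S}_3$ are linearly dependent. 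Since three direction vectors in $\mathbb{R}^3$ are linearly dependent exactly when they lie in a common plane through the origin, this is precisely the coplanarity condition claimed.

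The computation is essentially bookkeeping once the butterfly transform is in place; the only points requiring care are verifying that $\mathbf{T}$ acts on both row types exactly as stated and tracking that the row reordering for $d=3$ alters the determinant only by a nonzero constant, so it cannot affect whether the determinant is zero. The lone genuinely geometric step — equating linear dependence of three $\mathbb{R}^3$ vectors with coplanarity — is elementary, so I do not anticipate a substantive obstacle.
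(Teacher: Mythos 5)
Your proof is correct, and it reaches the theorem by a genuinely different mechanism than the paper's. The paper partitions each minor into $3\times3$ blocks $\left[\begin{smallmatrix}\mathbf{H}_d & \mathbf{P}_d\\ \mathbf{S}_3 & -\mathbf{S}_3\end{smallmatrix}\right]$, with three bearing rows forming the bottom block, and invokes a commuting-block determinant identity, $\textup{det}\left(\left[\begin{smallmatrix}\mathbf{A} & \mathbf{B}\\ \mathbf{C} & \mathbf{D}\end{smallmatrix}\right]\right)=\textup{det}(\mathbf{A}\mathbf{D}-\mathbf{B}\mathbf{C})$ when $\mathbf{C}\mathbf{D}=\mathbf{D}\mathbf{C}$ (its Lemma 1, applicable because $\mathbf{S}_3$ and $-\mathbf{S}_3$ commute); the determinant then collapses to $-\textup{det}\bigl((\mathbf{H}_d+\mathbf{P}_d)\mathbf{S}_3\bigr)$, which vanishes for $d\le 2$ since $\mathbf{H}_d+\mathbf{P}_d$ contains $3-d$ zero rows, and for $d=3$ equals a nonzero multiple of $\textup{det}(\mathbf{S}_3)$. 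You instead decouple the two row types by the invertible column transform $\mathbf{T}$ and finish with a rank count for $d\le 2$ and a block-diagonal factorization for $d=3$. Your route buys self-containedness (no appeal to the unproved Lemma 1 and its commutation hypothesis), uniformity in $d$ and in row ordering (the paper's partition tacitly regroups the selected rows three-plus-three with bearings at the bottom, which works because $6-d\ge 3$ but deserves a word), and a transparent handle on the $d=3$ constant: carried out explicitly, your factorization gives $\textup{det}(\mathbf{J}^{(3)}_{[6],S})=\textup{det}(2\sqrt{\mu}\mathbf{I}_3)\,\textup{det}(2\mathbf{S}_3)/\textup{det}(\mathbf{T})=-8\mu^{3/2}\textup{det}(\mathbf{S}_3)$, which incidentally corrects a harmless scalar slip in Appendix C (the paper reports $-2\mu^{3/2}\textup{det}(\mathbf{S})$, although $\textup{det}(-2\sqrt{\mu}\,\mathbf{S}_3)=(-2\sqrt{\mu})^3\textup{det}(\mathbf{S}_3)$). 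What the paper's route buys in exchange is brevity: once Lemma 1 is granted, both cases follow from a two-line computation. The two arguments converge at the final step, reducing part (ii) to the fact that $\textup{det}(\mathbf{S}_3)$ is the scalar triple product of the three bearings, which vanishes exactly when they are coplanar.
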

	\begin{proof}
		The proof is given in Appendix \ref{appendix_c}.
	\end{proof}
	The theorem implies that if and only if all inter-robot bearings are coplanar, i.e., two robots move coplanarly, $\textup{det}(\mathbf{J}_{[3N],S}^{(3)}) = 0$ for any $\mathbf{J}_{[3N],S}^{(3)}$. 
	Recall (\ref{equ:G}), we can identify degeneration cases as below.
	\begin{corollary} 
		For a two-robot systems, if and only if two robots move coplanarly, $\textup{det}(\mathbf{G}) = 0$.
	\end{corollary}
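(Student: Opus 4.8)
The plan is to read the corollary off directly from equation (\ref{equ:G}) together with Theorem \ref{theorem:3}, and then to translate the resulting algebraic coplanarity condition into the geometric statement about swarm motion. Recall that (\ref{equ:G}) reduces the vanishing of $\textup{det}(\mathbf{G})$ to the simultaneous vanishing of every $6\times 6$ minor $\textup{det}(\mathbf{J}_{[6],S})$, and that these minors split into the four families $\mathbf{J}_{[6],S}^{(d)}$, $d = 0,1,2,3$, according to how many rows of $\sqrt{\mu}\mathbf{N}^\mathrm{T}$ are retained. By Theorem \ref{theorem:3}(i), the families $d = 0,1,2$ yield identically zero minors for any motion, so they impose no constraint. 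Hence $\textup{det}(\mathbf{G}) = 0$ holds if and only if $\textup{det}(\mathbf{J}_{[6],S}^{(3)}) = 0$ for every admissible choice of three bearing rows, which by Theorem \ref{theorem:3}(ii) is equivalent to requiring that \emph{every} triple $\varphi^{\tau_i}, \varphi^{\tau_j}, \varphi^{\tau_k}$ of inter-robot bearings be coplanar.

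It then remains to show that ``every triple of bearings is coplanar'' is equivalent to ``the two robots move coplanarly.'' For the forward direction I would invoke the elementary fact that a finite set of vectors in $\mathbb{R}^3$ whose every triple is linearly dependent must lie in a single plane through the origin: if all bearings are parallel the claim is immediate, and otherwise one fixes two linearly independent bearings $\hat{\varphi}, \hat{\varphi}'$, and for any third bearing $w$ the coplanarity of $\{\hat{\varphi}, \hat{\varphi}', w\}$ forces $w \in \textup{span}(\hat{\varphi}, \hat{\varphi}')$, so all bearings lie in the fixed plane $\textup{span}(\hat{\varphi}, \hat{\varphi}')$. Since for $N = 2$ each bearing $\hat{\varphi}_{12}^\tau$ is the unit vector along the relative displacement $p_2(\tau) - p_1(\tau)$, confinement of all bearings to a common plane through the origin is exactly the statement that the relative trajectory, and hence the swarm motion, is coplanar. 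The converse is straightforward: if the robots move coplanarly, all relative displacements and therefore all bearings lie in one plane, so every triple is coplanar and, together with Theorem \ref{theorem:3}(i), every minor of $\mathbf{J}$ vanishes, giving $\textup{det}(\mathbf{G}) = 0$.

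The only genuinely delicate point is the triple-to-global coplanarity implication above; everything else is a matter of chaining the two equivalences. I would also flag one bookkeeping caveat: the $d = 3$ family is nonempty only when at least three bearing measurements are available ($\mathcal{L} \geq 3$), so the corollary is stated under the standing assumption that the window contains enough measurements for localization. For $\mathcal{L} < 3$ the matrix $\mathbf{J}$ has fewer than $3N$ rows, whence $\textup{det}(\mathbf{G}) = 0$ trivially by rank deficiency, a degenerate regime outside the intended scope. With these observations the ``if and only if'' follows immediately.
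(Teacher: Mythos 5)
Your proposal is correct and takes essentially the same route as the paper: it combines the Cauchy--Binet reduction (\ref{equ:G}) with Theorem \ref{theorem:3} to conclude that $\textup{det}(\mathbf{G})=0$ exactly when every triple of inter-robot bearings is coplanar, and then identifies this condition with coplanar motion of the two robots. Your two additions---the elementary argument that triple-wise coplanarity forces all bearings into a single plane through the origin, and the $\mathcal{L}\geq 3$ caveat---merely fill in details the paper glosses over with its ``i.e.'', and both are handled correctly.
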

	
	\subsubsection{N $\geq$ 3}
	\label{subsec:N_3}
	Extending above analysis of $N=2$ case to $N\geq3$ cases, it's straightforward to conclude that all robots moving coplanarly will lead to degeneration.
	Here, we provide a more generalized identification condition for $\textup{det}(\mathbf{J}_{[3N],S}) = 0, \forall S\in {[3+\mathcal{L}] \choose 3N}$.
	
	\begin{theorem}
		\label{theorem:4}
		Depart $\mathcal{G}$ into two disjoint subgraphs $\mathcal{G}_c = (\mathcal{V}_c, \mathcal{E}_c)$ and $\mathcal{G}_r = (\mathcal{V}_r, \mathcal{E}_r)$, such that $\mathcal{V}_c \cup \mathcal{V}_r = \mathcal{V}, \mathcal{V}_c \cap \mathcal{V}_r =  \emptyset$.
		Assume that robots in $\mathcal{V}_c$ move in coplanar, and at least one robot $i \in \mathcal{V}_c$, whose neighboring nodes in $\mathcal{G}$ satisfy that $\mathcal{N}_i \cup \mathcal{V}_r = \emptyset$, then $\textup{det}(\mathbf{J}_{[3N],S}) = 0, \forall S\in {[3+\mathcal{L}] \choose 3N}$.
	\end{theorem}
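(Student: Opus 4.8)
The plan is to prove the statement not by expanding $\det(\mathbf{J}_{[3N],S})$ case-by-case over the choices of rows $S$ (as was done for the $N=2$ case in Theorem~\ref{theorem:3}), but by exhibiting a single nonzero vector $w\in\mathbb{R}^{3N}$ in the right nullspace of the \emph{full} matrix $\mathbf{J}$. The observation that makes this work is that each $\mathbf{J}_{[3N],S}$ is assembled from a subset of the rows of $\mathbf{J}$ while keeping all $3N$ columns; hence if $\mathbf{J}w=\mathbf{0}$ then $\mathbf{J}_{[3N],S}\,w=\mathbf{0}$ as well, so $\mathbf{J}_{[3N],S}$ is singular and $\det(\mathbf{J}_{[3N],S})=0$ for \emph{every} $S$ at once. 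Equivalently, a nontrivial nullspace means $\mathbf{J}$ is rank-deficient, i.e. $\det(\mathbf{G})=0$, which by (\ref{equ:G}) is precisely the conclusion we want.

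Next I would characterize this nullspace explicitly. Writing $w=[w_1\tp,\dots,w_N\tp]\tp$ with $w_k\in\mathbb{R}^3$, the three rows $\sqrt{\mu}\mathbf{N}\tp$ impose $\sum_{k=1}^N w_k=\mathbf{0}$, while each bearing row $v_s\tp$ for an edge between robots $a$ and $b$ imposes $\hat{\varphi}_s\tp(w_a-w_b)=0$, directly from the block pattern of $v_s$ in (\ref{equ:gtK}). So it suffices to build a nonzero $w$ obeying the single sum-to-zero condition together with one orthogonality relation per bearing.

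The construction feeds on coplanarity only through the distinguished robot $i$. Let $n\neq\mathbf{0}$ be a normal to the common plane of the trajectories $\{p_k(t)\}_{k\in\mathcal{V}_c}$; since any two robots of $\mathcal{V}_c$ have a position difference lying in that plane, every such bearing satisfies $\hat{\varphi}\tp n=0$. I then set $w_i=n$ and $w_k=-\tfrac{1}{N-1}n$ for all $k\neq i$ (assuming $N\geq 2$). The sum-to-zero condition holds because $\sum_k w_k=n+(N-1)\bigl(-\tfrac{1}{N-1}n\bigr)=\mathbf{0}$; any edge not incident to $i$ has $w_a-w_b=\mathbf{0}$ and is satisfied trivially; and any edge incident to $i$ goes to a neighbor $j\in\mathcal{N}_i\subseteq\mathcal{V}_c$ (this is exactly where the hypothesis $\mathcal{N}_i\cap\mathcal{V}_r=\emptyset$ enters), so $w_i-w_j=\tfrac{N}{N-1}n$ is parallel to $n$ and $\hat{\varphi}_{ij}\tp(w_i-w_j)=0$ by coplanarity. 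As $w_i=n\neq\mathbf{0}$, the vector $w$ is a genuine element of $\ker\mathbf{J}$, which closes the argument.

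The main obstacle---better described as the key idea that collapses the problem---is recognizing that one global nullspace vector settles all $S$ simultaneously, and that assigning the \emph{same} value to every robot except $i$ trivializes every constraint not touching $i$. This is what lets the isolation hypothesis do its work: robot $i$'s distinguishing component along $n$ only ever pairs with bearings lying in the coplanar subspace (hence orthogonal to $n$), whereas the robots of $\mathcal{V}_r$, which may move arbitrarily, never produce a nonzero difference $w_a-w_b$. The only remaining points requiring care are the correct reading of the hypothesis as $\mathcal{N}_i\cap\mathcal{V}_r=\emptyset$ and verifying that coplanar motion indeed forces $\hat{\varphi}\tp n=0$ for all bearings internal to $\mathcal{V}_c$; both are straightforward, so I expect no serious technical difficulty beyond setting up the reduction cleanly.
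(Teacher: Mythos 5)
Your proof is correct and follows essentially the same route as the paper's: Appendix~D also constructs the nullspace vector $w = \zeta \otimes [-(N-1),1,\dots,1]\tp$ (your $w$ is the same vector up to the nonzero scale $-1/(N-1)$, with $\zeta$ the plane normal), reads the hypothesis as $\mathcal{N}_i \cap \mathcal{V}_r = \emptyset$, and verifies the three row types (the $\sqrt{\mu}\mathbf{N}\tp$ rows, bearings incident to the isolated robot, and all remaining bearings) exactly as you do. Your only cosmetic difference is phrasing the argument as $w \in \ker\mathbf{J}$ once-and-for-all rather than checking a generic row-submatrix $\mathbf{J}_{[3N],S}$, which amounts to the same computation.
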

	\begin{proof}
		The proof is given in Appendix \ref{appendix_d}.
	\end{proof}
	An example of degeneration case for $N = 5, |\mathcal{V}_c|=3$ is shown in Fig. \ref{fig:degeneration}.
	Actually, existing work (\cite{nguyen2023relative}) analysis under what condition the determinant of FIM equals zero in range-based swarm.
	It provides similar conclusion that two robots' coplanar motion will leads to singular configuration.
	In this paper, we study $\textup{det}(\mathbf{G}) = 0$ for bearing-based swarm and extend analysis to $N \geq 3$ cases.
	More importantly, our results show the significance of graph stucture: even though some robots don't move coplanarly, degeneration is also possible under some graph.
	
	\begin{figure}[t]
		\centering
		\includegraphics[width=0.5\textwidth]{./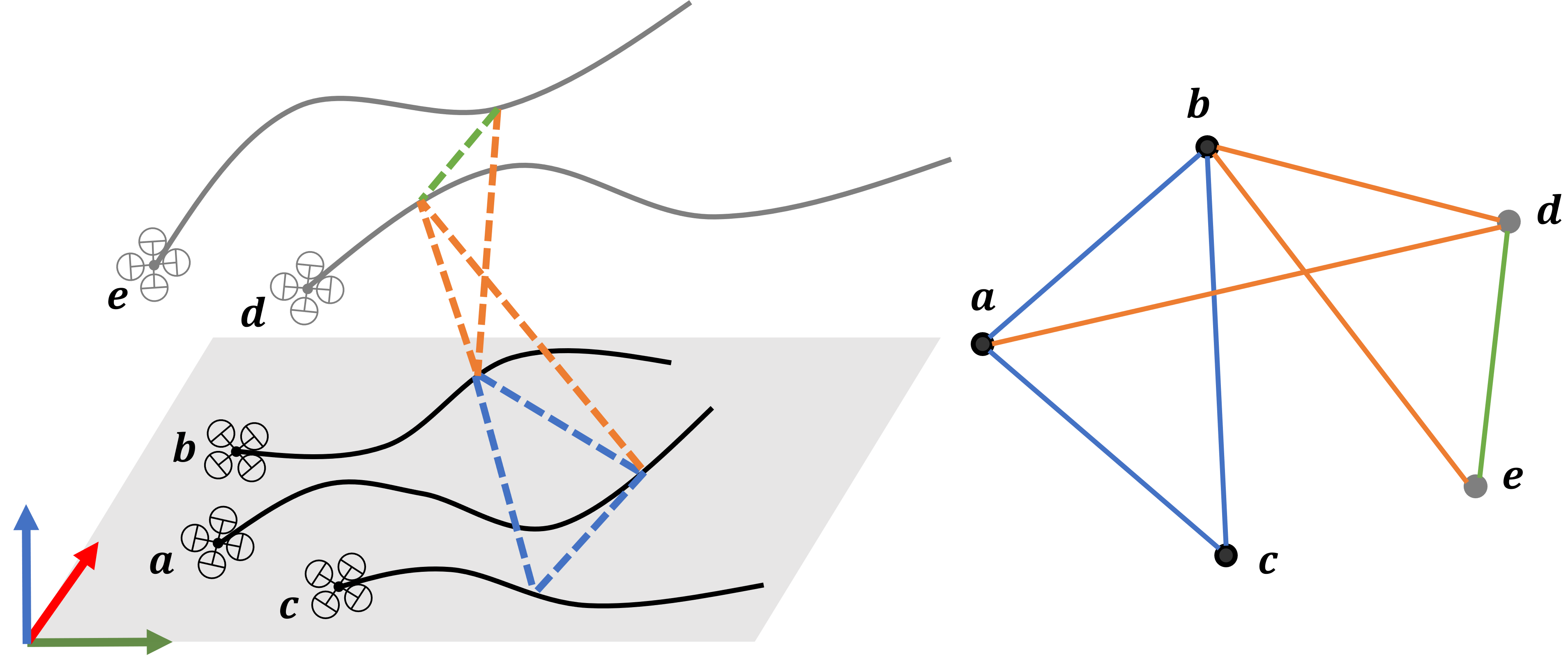}
		\caption{\label{fig:degeneration} An example of degeneration case for $N=5$, where 
			$\mathcal{V}_c = \{a,b,c\}$, $\mathcal{V}_r = \{d,e\}$, and $\mathcal{N}_c \cup \mathcal{V}_r = \emptyset$.
			Although robots $d,e$ move randomly, $\lambda_1(\hat{\mathbf{K}} + \mu \mathbf{NN\tp}) = 0$ for this swarm.}
		\vspace{-0.1cm}
	\end{figure}

	\subsection{Certificate Eigenvalue Bound}
	\label{subsec:error_bound}
	In Sec.\ref{subsec:degeneration}, our analysis on degeneration identification reveals that merely avoiding coplanar motion among robots is insufficient to prevent degeneration.
	Further, as detailed in Sec.\ref{subsec:robustness}, we demonstrate that $\lambda_4(\hat{\mathbf{K}})$ effectively serves as a metric for assessing noise resistance. 
	Consequently, optimizing swarm trajectories to enhance $\lambda_4(\hat{\mathbf{K}})$ emerges as a logical strategy to both avert degeneration and improve noise resistance. 
	However, an intriguing finding is that excessively maximizing $\lambda_4(\hat{\mathbf{K}})$ can result in highly distorted swarm trajectories, which are impractical for effective tracking and control in the real world.

	To address this issue, this section delves into determining the minimum threshold that $\lambda_4(\hat{\mathbf{K}})$ must achieve to maintain
	$\mathbf{K}$'s positive semi-definiteness, particularly when the maximum magnitude of detection noise, denoted as $\xi_{\text{max}}$, is known.
	We refer to this threshold as the \textit{certificate eigenvalue bound} $\mathcal{B}$.
	This bound stems from a practical consideration: detection noise is typically constrained and can be quantified through sensor calibration.
	Then, we can pre-calculate $\mathcal{B}$ using the evaluated maximum noise magnitude  $\xi_{\text{max}}$.
	Ensuring that the swarm trajectories with respect to $\mathcal{B}$ are tracked by robots, the preservation of estimation optimality can be ensured  even in the presence of noise, epitomizing the concept of certifiable swarm planning
	
	The derivation of $\mathcal{B}$ comes from the following inequality
	\begin{equation} \label{equ:error_bound1}
		\begin{aligned}
			\lambda_1(\mathbf{K} + \mu \mathbf{NN\tp})
			&=\lambda_1(\mathbf{\hat{K}} + \Delta \mathbf{K} + \mu \mathbf{NN\tp}) \\
			&\geq\lambda_1(\mathbf{\hat{K}} + \mu \mathbf{NN\tp}) + \lambda_1(\Delta \mathbf{K}) \\
			&\geq \lambda_1(\mathbf{\hat{K}} + \mu \mathbf{N} \mathbf{N}\tp) - \max |\lambda(\Delta \mathbf{K})| \\
			&= \lambda_4(\mathbf{\hat{K}}) - \max |\lambda(\Delta \mathbf{K})|. 
		\end{aligned}
	\end{equation}
	It is obvious that if $\lambda_4(\mathbf{\hat{K}}) \geq \max |\lambda(\Delta \mathbf{K})|$, $\mathbf{K}$ can be ensured to remain positive semi-definite.	
	For $\Delta \mathbf{K}$, bringing (\ref{equ:deltaQ1}) into (\ref{equ:deltaKs}) and (\ref{equ:deltaK}) gives
	\begin{equation}
		\begin{aligned}
			\Delta \mathbf{K}_{ij} &= 
			\begin{cases}
				\sum^{\mathcal{T}} (\hat{\varphi}_{ij} \delta_{ji}^\mathrm{T} + \delta_{ij} \varphi_{ji}^\mathrm{T} )  &(i,j)\in \mathcal{E}\\
				\mathbf{0}_{3\times 3} &(i,j)\notin \mathcal{E}
			\end{cases} \\
			\Delta \mathbf{K}_{ii} &= -\sum_{k\neq i}^N \Delta \mathbf{K}_{ik} 
		\end{aligned}
	\end{equation}
	where $\mathcal{T}$ is the sampling number.
	Then, the following theorem characterizes $|\lambda(\Delta \mathbf{K})|$ given $\xi_{\text{max}}$.
	\begin{theorem}
		\label{theorem:5}
		For a graph  $\mathbf{\mathcal{G}}  = (\mathbf{\mathcal{V}} , \mathbf{\mathcal{E}})$, if the detection noise is limited as $\xi_{\text{max}}$, then $|\lambda(\Delta \mathbf{K})| < 2d_{\textup{max}}\mathcal{T}\sqrt{2\xi_{\text{max}}^2 + \xi_{\text{max}}^3}$, where $d_{\textup{max}}$ is the maximal vertex degree in $\mathbf{\mathcal{G}}$.
	\end{theorem}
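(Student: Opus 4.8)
The plan is to reduce the spectral claim to a single-measurement estimate and then propagate it through the block structure of $\Delta\mathbf{K}$. Because $\mathbf{K}$ and $\hat{\mathbf{K}}$ are symmetric, so is $\Delta\mathbf{K}$, hence $\max\abs{\lambda(\Delta\mathbf{K})}=\norm{\Delta\mathbf{K}}_2$ and it suffices to bound the spectral norm. Viewing $\Delta\mathbf{K}$ as an $N\times N$ array of $3\times 3$ blocks, I would invoke the block bound $\norm{\Delta\mathbf{K}}_2\le\max_i\sum_j\norm{\Delta\mathbf{K}_{ij}}_2$; this is the block analogue of $\norm{\mathbf{A}}_2\le\sqrt{\norm{\mathbf{A}}_1\norm{\mathbf{A}}_\infty}$, and for the symmetric $\Delta\mathbf{K}$ the block row- and column-sum norms coincide, so the geometric mean collapses to the block row-sum norm. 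Justifying this inequality at the block level (a Schur-test argument) is the first technical point to nail down.

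Next I would exploit the Laplacian-type structure recalled just before the statement. Since $\Delta\mathbf{K}_{ij}=\mathbf{0}$ for $(i,j)\notin\mathcal{E}$ and the diagonal block satisfies $\Delta\mathbf{K}_{ii}=-\sum_{k\neq i}\Delta\mathbf{K}_{ik}$, the triangle inequality gives $\norm{\Delta\mathbf{K}_{ii}}_2\le\sum_{j\in\mathcal{N}_i}\norm{\Delta\mathbf{K}_{ij}}_2$, so the $i$-th block row-sum is at most $2\sum_{j\in\mathcal{N}_i}\norm{\Delta\mathbf{K}_{ij}}_2$; bounding $\abs{\mathcal{N}_i}\le d_{\textup{max}}$ furnishes the factor $2d_{\textup{max}}$. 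Each surviving off-diagonal block is a sum over the $\mathcal{T}$ sampled instants, $\Delta\mathbf{K}_{ij}=\sum^{\mathcal{T}}\mathbf{\Delta}_{ij}$ with $\mathbf{\Delta}_{ij}=\hat{\varphi}_{ij}\delta_{ji}\tp+\delta_{ij}\varphi_{ji}\tp$, and another triangle inequality gives the factor $\mathcal{T}$. Assembling these reductions, $\max\abs{\lambda(\Delta\mathbf{K})}\le 2d_{\textup{max}}\mathcal{T}\,\max\norm{\mathbf{\Delta}_{ij}}_2$, so the theorem follows once I establish the per-sample estimate $\norm{\mathbf{\Delta}_{ij}}_2<\sqrt{2\xi_{\text{max}}^2+\xi_{\text{max}}^3}$.

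The decisive step is this single-sample bound, and the key is to bound the spectral norm of $\mathbf{\Delta}_{ij}$ by its Frobenius norm rather than by the triangle inequality on its two rank-one summands. The latter would give $\norm{\hat{\varphi}_{ij}\delta_{ji}\tp}_2+\norm{\delta_{ij}\varphi_{ji}\tp}_2=\xi_{ji}+\xi_{ij}\norm{\varphi_{ji}}$, i.e. the looser $2\xi_{\text{max}}+\xi_{\text{max}}^2$; whereas expanding $\norm{\mathbf{\Delta}_{ij}}_F^2=\xi_{ji}^2+2(\hat{\varphi}_{ij}\tp\delta_{ij})(\delta_{ji}\tp\varphi_{ji})+\xi_{ij}^2\norm{\varphi_{ji}}^2$ lets the two rank-one contributions add in quadrature. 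Using that the bearings are unit ($\norm{\hat\varphi}=1$), that $\norm{\delta_{ij}},\norm{\delta_{ji}}\le\xi_{\text{max}}$, and Cauchy--Schwarz on the cross term, the two diagonal terms contribute $2\xi_{\text{max}}^2$ while the cross term and the excess in $\norm{\varphi_{ji}}^2$ are absorbed into the cubic slack $\xi_{\text{max}}^3$, yielding $\norm{\mathbf{\Delta}_{ij}}_2\le\norm{\mathbf{\Delta}_{ij}}_F\le\sqrt{2\xi_{\text{max}}^2+\xi_{\text{max}}^3}$. I expect the main obstacle to be precisely the careful treatment of these lower-order terms --- controlling the sign and magnitude of $(\hat{\varphi}_{ij}\tp\delta_{ij})(\delta_{ji}\tp\varphi_{ji})$ and of $\norm{\varphi_{ji}}^2-1$ under the measurement model --- since a crude handling either breaks the strict inequality or reverts to the loose linear-in-$\xi$ bound; the explicit spectrum of $\Delta\mathbf{Q}_s$ from Theorem \ref{theorem:2} can serve as an independent check on the constants.
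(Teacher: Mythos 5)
Your reduction is structurally the same as the paper's. The block bound $\max\abs{\lambda(\Delta\mathbf{K})}\le\max_i\sum_{j}\norm{\Delta\mathbf{K}_{ij}}$ that you propose to justify by a Schur-test argument is exactly the paper's Lemma 2, proved there Gerschgorin-style (take an eigenpair $\Delta\mathbf{K}x=\lambda x$ and pick the block index $i$ maximizing $\norm{x_i}$); the Laplacian identity $\Delta\mathbf{K}_{ii}=-\sum_{k\neq i}\Delta\mathbf{K}_{ik}$ together with $\abs{\mathcal{N}_i}\le d_{\textup{max}}$ gives the factor $2d_{\textup{max}}$, and the triangle inequality over the $\mathcal{T}$ samples gives the factor $\mathcal{T}$, in both proofs. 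Where you genuinely diverge is the per-sample estimate: the paper bounds $\norm{\mathbf{\Delta}_{ij}v}$ directly over unit vectors $v$, parametrizing by the angles of $v$ against $\varphi_{ji}$ and $\delta_{ji}$, whereas you pass to the Frobenius norm. Your route is viable and in fact yields a tighter constant --- but only once the cross term is pinned down, and that is precisely the step your proposal leaves open.

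The gap is that Cauchy--Schwarz on the cross term does not suffice. It only gives $2\abs{(\hat{\varphi}_{ij}\tp\delta_{ij})(\delta_{ji}\tp\varphi_{ji})}\le 2\xi_{ij}\xi_{ji}\norm{\varphi_{ji}}$, a second-order quantity, so your expansion totals $\bigl(\xi_{ji}+\xi_{ij}\norm{\varphi_{ji}}\bigr)^2\approx 4\xi_{\text{max}}^2$, which exceeds $2\xi_{\text{max}}^2+\xi_{\text{max}}^3$ for any realistic $\xi_{\text{max}}<2$; as you yourself anticipated, this reverts to the loose bound and the theorem does not follow. The missing ingredient --- the one the paper's proof rests on --- is that both the true and the measured bearings are unit vectors, which pins the inner products exactly rather than just bounding them: $\norm{\hat{\varphi}_{ij}+\delta_{ij}}=\norm{\hat{\varphi}_{ij}}=1$ forces $\hat{\varphi}_{ij}\tp\delta_{ij}=-\xi_{ij}^2/2$, and hence $\varphi_{ji}\tp\delta_{ji}=+\xi_{ji}^2/2$. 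With these identities the cross term equals $-\xi_{ij}^2\xi_{ji}^2/2\le 0$, and $\norm{\varphi_{ji}}^2=1$ so there is no ``excess'' to absorb; your expansion becomes $\norm{\mathbf{\Delta}_{ij}}_F^2=\xi_{ij}^2+\xi_{ji}^2-\xi_{ij}^2\xi_{ji}^2/2\le 2\xi_{\text{max}}^2$, strictly below $2\xi_{\text{max}}^2+\xi_{\text{max}}^3$ whenever $\xi_{\text{max}}>0$. So with that one repair your proof closes, and it even delivers the strict inequality claimed in the theorem, which the paper's own chain of non-strict inequalities technically does not.
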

	\begin{proof}
		The proof is given in Appendix \ref{appendix_e}
	\end{proof}
	Therefore, we can take $2d_{\textup{max}}\mathcal{T}\sqrt{2\xi_{\text{max}}^2 + \xi_{\text{max}}^3}$ as the certificate eigenvalue bound $\mathcal{B}$.
	As long as $\lambda_4(\mathbf{\hat{K}}) \geq 2d_{\textup{max}}\mathcal{T}\sqrt{2\xi_{\text{max}}^2 + \xi_{\text{max}}^3}$, we can ensure that $\lambda_1(\mathbf{K} + \mu \mathbf{NN\tp}) > 0$.
	The bound is related to the graph structure.
	Three important graphs, complete graph, star graph and cycle graph, are shown in Fig.\ref{fig:graph}.
	Their $d_{\textup{max}}$ are $N-1$, $N-1$ and 2, respectively.
	Hence, $\mathcal{B} = 2 (N-1) \mathcal{T}\sqrt{2\xi_{\text{max}}^2 + \xi_{\text{max}}^3}$  for complete graph and star graph , and 
	$\mathcal{B} = 4 \mathcal{T}\sqrt{2\xi_{\text{max}}^2 + \xi_{\text{max}}^3}$ for cycle graph.
	The feature that a complete graph has the same $\mathcal{B}$ as a star graph will be utilized in trajectories optimization in Sec.\ref{subsec:backend}.
	\begin{figure}[t]
		\centering
		\includegraphics[width=0.5\textwidth]{./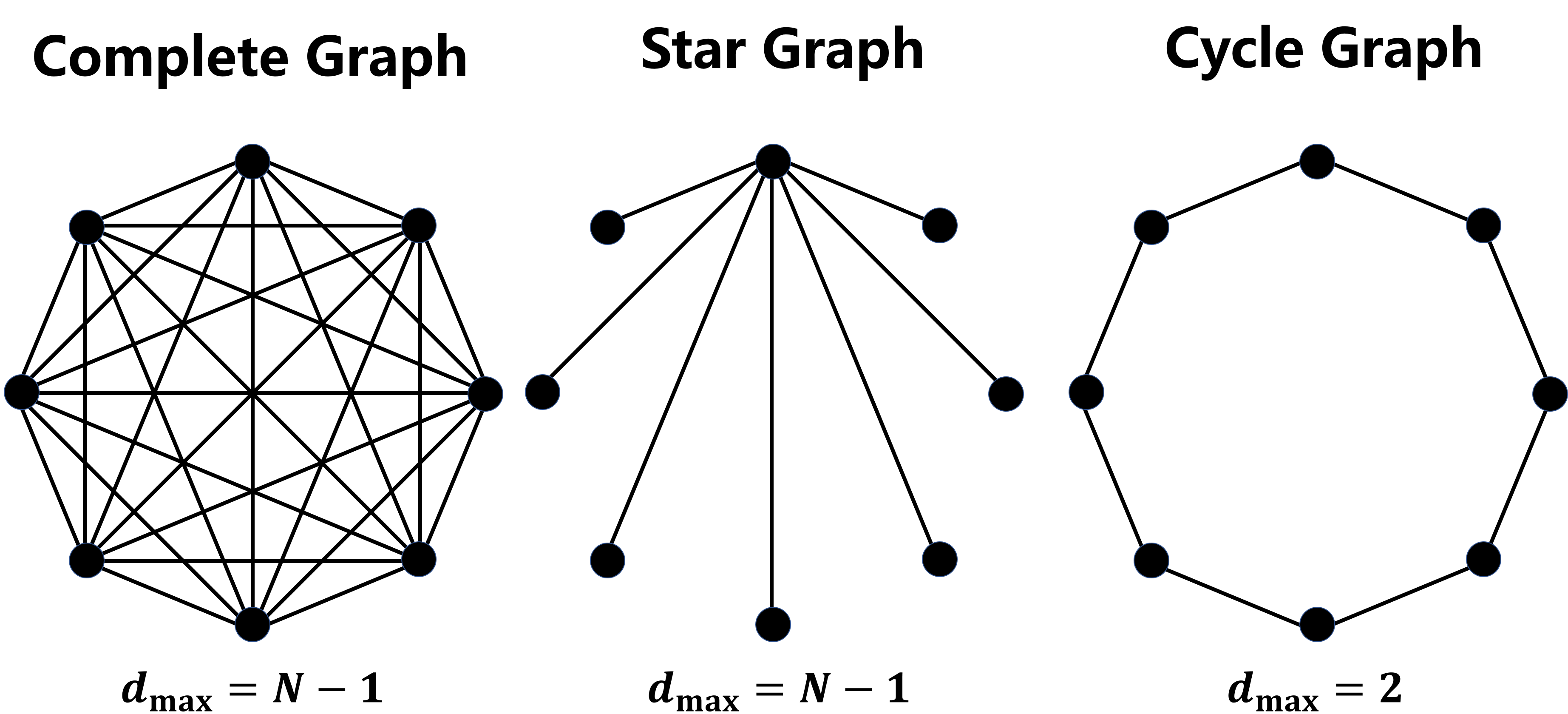}
		\caption{\label{fig:graph} Three important graphs and their $d_{\text{max}}$.}
		\vspace{-0.3cm}
	\end{figure}

%

	\section{Certifiable Trajectory Planning for Bearing-based Swarm}
	\label{sec:planning} 
	In this section, a complete trajetory planning method for bearing-based swarm is presented.
	Here, we define a set of trajectories as
	\begin{definition}
		(Certifiable swarm trajectories) Given swarm trajectories, if robots can perform certifiable optimal mutual localization as they move along them even in the presence of noise, these trajectories are certifiable.
	\end{definition}
	Based on the theoretical analysis in Sec.\ref{sec:theoretical}, we conclude that swarm trajectories that satisy $\lambda_4(\mathbf{\hat{K}}) \geq 2d_{\textup{max}}\mathcal{T}\sqrt{2\xi_{\text{max}}^2 + \xi_{\text{max}}^3}$ are certifiable.
	In addition to the certifiability requirement, we have also following considerations for expected trajectories.
	First, inter-robot visibility should be guaranteed in bearing-based swarm since it provides the observation basis. 
	Second, safety and dynamical feasibility should be satisfied for swarm navigation in clustered environments.
	Hence, we propose a hierarchical planning method, including visibility-guaranteed path searching detailed in Sec.\ref{subsec:frontend}, and trajectory optimization detailed in Sec.\ref{subsec:backend}.
	
	
	\subsection{Visibility-guaranteed Swarm Path Searching}
	\label{subsec:frontend}
	Our estimator requires that each robot needs to be observed by at least one another robot. 
	If any pair of robots is inter-visible, it means that robots must move in a same topology, which restict the movements of robots in clustered environment.
	To allow robots move more freely and still guarantee connected graph.
	We designate a special robot as the center robot and only require other robots ensure visibility to it.
	To achieve it, we introduce a special polytope.
	\begin{definition}
		Star convex polytope. A region $\mathcal{P}_S$ is star convex if there exists a point $p$ in $\mathcal{P}_S$ such that the line segment from $p$ to any point in $\mathcal{P}_S$ is contained in $\mathcal{P}_S$.
	\end{definition}
	
	Hence, if a region $\mathcal{P}_S$ is star convex for a point $p_c$, then any point in $\mathcal{P}_S$ is visible to $p_c$.
	It means that $\mathcal{P}_S$ represent a visible region for $p_c$.
	Given obstacle points and $p_c$, we can obtain its star convex polytope following method in (\cite{katz2007direct}).
	By the H-representation, the star convex polytope of $p_c$ is modeled as
	\begin{align}
		\mathcal{P}_S = \{x\in\mathbb{R}^3 | \mathbf{A}x \preceq b\}
	\end{align}
	where $\mathbf{A}=[n_1^T,\cdots,n_K^T] \in \mathbb{R}^{K\times3}$ is build by the outer normal vector $n_i$ of each face and $b = [n_1^T a_1, \cdots, n_K^T a_K] \in \mathbb{R}^K$ is formed by arbitray points $a_i$ on each faces. 
	A point $p$ is in $\mathcal{P}_S$ if it satisfies that $\max_{k\in[1,K]}\{d_k(p_c)\} \leq 0$	where 
	\begin{align}
			 d_k(p_c) &=n_k^T \left(f(p,p_c)-a_k \right), \\
			 f(p,p_c) &=  (2r / \norm{p-p_c} - 1)(p-p_c).
	\end{align}
	$f(p,p_c)$ is a sphere flipping transformation, and $r$ is a user-defined parameter.
	
	\begin{figure}[t]
		\centering
		\includegraphics[width=0.5\textwidth]{./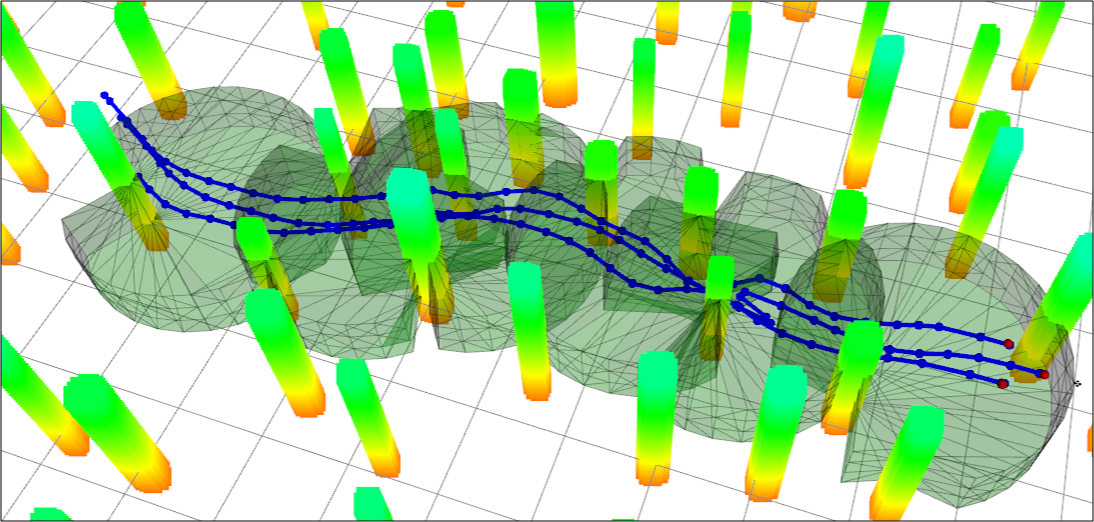}
		\caption{\label{fig:star_convex} A sequence of star convex polytopes and generated front-end path with visibility guarantee.}
		\vspace{-0.4cm}
	\end{figure}
	

	
	
	With visible region representation, we propose a centralized method for visibility-guaranteed swarm path searching.
	First, the designated center robot searches a collision-free path for itself by kinodynamic path searching.
	Next, waypoints are sampled densely along the searched path, which serve as $p_c$ to build a sequence of star convex polytopes.
	Then, some pre-determined directions are designated for other robots.
	Raycasting from each waypoint along these directions is performed, until these rays go beyond the polyhedron.
	These raycasting points are taken as waypoints for these robots.
	Following the obtained waypoints, these robots are visible to the center robot, which is suffcient to provide bearing measurements for mutual localization.
	An example of star convex polytopes and generated front-end paths is shown in Fig. \ref{fig:star_convex}.
	\subsection{Trajectory Optimization}
	\label{subsec:backend}
	Frontend paths are utilized as initial values to formulate the trajectory generation as a numerical optimization problem. 
	This problem aims to find safe, dynamically feasible, smooth, and certifiable trajectories $p_i(t): \mathbb{R} \in [0, T_f] \rightarrow \mathbb{R}^3$, with a specified flight time $T_f$.
	Given the initail state $s_i^0$ and the terminal state $s_i^f$, the trajectory optimization is formulated as 
	\begin{align}
		\min_{\{p_i(t)\}, T_f} &\  \sum_{i=1}^N \int_{0}^{T_f} \Vert p_i^{(3)}(t) \Vert^2 dt   + T_f \label{equ:traj1} \\
		s.t. \ \ \ & p_i^{[2]}(0) = s_i^0, \ p_i^{[2]}(T_f) = s_i^f \label{equ:traj3} \\
		&\mathcal{G}(p_i(t), ...,p_i^{(s)}(t)) \preceq \mathbf{0} \label{equ:traj4} \\
		&\mathcal{G}_c (\{p_i(t)\}) \preceq \mathbf{0}  \label{equ:traj5} 
	\end{align}
	The optimization problem in (\ref{equ:traj1}) optimizes both robots' trajectories and the flight time $T_f$ to minimize the total control efforts and the flight time.
	Minimizing control efforts is to produce smooth trajectories that are friendly to track and control, while minimizing the total time $T_f$ generates more aggressive trajectories.
	The boudary conditions (\ref{equ:traj3}) involve robots' state of position, velocity and acceleration $p_i^{[2]}(t) = [p_i(t), p_i^{(1)}(t), p_i^{(2)}(t)]$.
	Safety, dynamical feasibility, and inter-robot visibility are considered in (\ref{equ:traj4}).
	Safety guarantees collision avoidance with obstacles and other robots, dynamical feasibility ensures the robots' actuation capabilities, and inter-robot visibility ensures that robots remain visible to the central robot.
	These requirements are met by the continuous-time constraint $\mathcal{G}$ in (\ref{equ:traj4}), which must be satisfied at all times.
	Additionally, the certifiability requirement, a constraint for the entire swarm's trajectories, is incorporated into constraint $\mathcal{G}_c$ in (\ref{equ:traj5}).
	
	The trajectory generation in (\ref{equ:traj1}) is a nonlinear constrained optimization problem. 
	We follow our previous work (\cite{Wang2022geometrically}) to parameterize trajectories and solve the problem.
	A trajectory $p_i(t)$ is parameterized by $M$-piece polynomials $p_{ij}(t): \mathbb{R} \in [0, t_j] \rightarrow \mathbb{R}^3 , j \in [1,M]$, where a 5th-order polynomial with $\mathcal{C}^3$ continuity is used to connect two adjacent waypoints $q_{j-1}$ and $q_{j}$.
	Then, a trajectory with minimum control effort can be uniquely determined by boundary states $s_i^0, s_i^f$, waypoints $\mathbf{q}_i = [q_{i1}, \cdots, q_{i(M-1)}]\tp$ and passing time $\mathbf{T}_i = [t_{i1}, \cdots, t_{iM}]\tp$, having the boudary conditions naturally satisfied.
	To deal with the continuous-time constraint $\mathcal{G}$ in (\ref{equ:traj4}), we sample enough states along each trajectory, design feasible objective functions with states, and relax constraints into soft penalties. 
	The related objective functions are listed in Table.\ref{tab:cost}, where $v_{\textup{max}}$ and $a_{\textup{max}}$ the maximum velocity and acceleration, $d(p)$ the distance to nearest obstacle evaluated by Euclidean Signed Distance Field (ESDF), $d_s$ and $d_r$ the clearance and $\psi(*) = \textup{max}(*,0)^3$.
	In visibility, we use differentiable sum-of-log function to replace the undifferentiable maximum function.

	\begin{table}[t]
		\centering
		\caption{\label{tab:cost} Objective Functions}
		\begin{tabular}{cc}
			\hline
			\textbf{Item} &  \textbf{Objective Function}   \\
			\hline
			\multicolumn{1}{c}{\begin{tabular}[c]{@{}c@{}}Dynamic\\ Feasibility\end{tabular}} &
			\multicolumn{1}{c}{
				\begin{tabular}[c]{@{}c@{}}
					\\$\mathcal{P}_d = \psi(\Norm{p^{(1)}_i(\tau)}^2 - v_\textup{max}^2)$ \\
					$+ \psi(\Norm{p^{(2)}_i(\tau)}^2 - a_\textup{max}^2)$\\ \\
			\end{tabular}} \\
			\hline
			\multicolumn{1}{c}{\begin{tabular}[c]{@{}c@{}}Obstacle\\ Avoidance\end{tabular}} &
			\multicolumn{1}{c}{
				\begin{tabular}[c]{@{}c@{}}
					\\$\mathcal{P}_s = \psi\left(d_s - d(p_i(\tau))\right)$\\ \\
			\end{tabular}} \\
			\hline
			\multicolumn{1}{c}{\begin{tabular}[c]{@{}c@{}}Reciporal\\ Avoidance\end{tabular}} &
			\multicolumn{1}{c}{
				\begin{tabular}[c]{@{}c@{}}
					\\$\mathcal{P}_r = \sum_{j \neq i}^{N} \psi(d_r - \Norm{ p_i(\tau) - p_j(\tau)})$\\ \\
			\end{tabular}} \\
			\hline
			Visibility&
			\multicolumn{1}{c}{
				\begin{tabular}[c]{@{}c@{}}
					\\$\mathcal{P}_v = \psi(d_v - \textup{log}(\sum_{k=1}^K e^{\alpha d_k(p_c)})/\alpha)$\\ \\
			\end{tabular}} \\
			\hline
		\end{tabular}
	\end{table}
	
	For certifiability constraint, we sample the positions of robots to construct $\mathbf{\hat{K}}$ and transform the constraint into a penalty. 
	The objective function for this penalty is formulated as follows:
	\begin{align}
		\mathcal{P}_c &= \psi(2 d_{\textup{max}} n_t \sqrt{2\xi_{\text{max}}^2 + \xi_{\text{max}}^3} - \lambda_4(\mathbf{\hat{K}})),
	\end{align}
	where $n_t = \textup{int}(T_f / \Delta t)$ is the sampling number, $ \Delta t$ is the sampling time interval.
	Here, we take $d_{\textup{max}} = N-1$ for a star graph since we consider that all robots remain visible to a central robot. 
	Moreover, even in scenarios where two non-central robots are inter-visible, certifiability is still upheld due to the fact that the certificate eigenvalue bound remains the same for both complete graph and star graph.
	The matrix $\mathbf{\hat{K}}$ is constructed as below:
		\begin{gather}
			\mathbf{\hat{K}}_{ij} = \sum_{\alpha=1}^{n_t} \mathbf{\Phi}_{ij}(\alpha \Delta t), \ i \neq j, \\
			\mathbf{\hat{K}}_{ii} = -\sum_{k\neq i} \mathbf{\hat{K}}_{ik}, \\
			\mathbf{\Phi}_{ij}(t) = -\varphi_{ij}(t) \varphi_{ij}(t)^T, \\
			\varphi_{ij}(\tau) = \frac{p_j(\tau)-p_i(\tau)}{\norm{p_j(\tau)-p_i(\tau)}}.
		\end{gather}
	
	Finally, the constrained trajectory generation in (\ref{equ:traj1}) is transformed into a unconstrained nonlinear optimization problem, whose decision variables are waypoints $\mathbf{q}_i$ and passing time $\mathbf{T}_i$ for $i \in [1,N]$.
	It can be solved by a quasi-Newton method (\cite{Wang2022geometrically}).
	For efficient optimization, it is imperative to obtain the gradients of the involved objective functions. 
	The derivation of $\mathcal{P}_d$, $\mathcal{P}_s$, $\mathcal{P}_r$ and $\mathcal{P}_v$ are covered in our previous work.
	For $\mathcal{P}_c$, the derivative of the $\mathbf{\hat{K}}$ with respect to any coordinate $\rho_i \in \{x_i, y_i, z_i\}$ of the robot $i$ located at $p_i = [x_i,y_i,z_i]\tp$ need be evaluated.
	Define the notation $\rho_{ij} = \rho_j - \rho_i$ and $\gamma_{ij} = \frac{1}{d_{ij}^2}$. 
	Then, for $\mathbf{\Phi}_{ij}, i \neq j$, we have 
	\begin{gather}
		\frac{\partial \mathbf{\Phi}_{ij}}{\partial x_i} = \frac{2}{d_{ij}^2}
		\begin{bmatrix}
			\frac{x_{ij}^3}{d_{ij}^2} - x_{ij}  & \frac{x_{ij}^2y_{ij}}{d_{ij}^2}- \frac{y_{ij}}{2} & \frac{x_{ij}^2z_{ij}}{d_{ij}^2}- \frac{z_{ij}}{2}\\
			\star & x_{ij}y_{ij}^2 &  x_{ij}y_{ij}z_{ij} \\
			\star & \star & x_{ij}z_{ij}^2
		\end{bmatrix} \\
		\frac{\partial \mathbf{\Phi}_{ij}}{\partial y_i} = \frac{2}{d_{ij}^2}
		\begin{bmatrix}
			x_{ij}y_{ij}^2  &\frac{x_{ij}y_{ij}^2}{d_{ij}^2}- \frac{x_{ij}}{2} & x_{ij}y_{ij}z_{ij}\\
			\star & \frac{y_{ij}^3}{d_{ij}^2} - y_{ij} &  \frac{y_{ij}^2z_{ij}}{d_{ij}^2}- \frac{z_{ij}}{2} \\
			\star & \star & y_{ij}z_{ij}^2
		\end{bmatrix} \\
		\frac{\partial \mathbf{\Phi}_{ij}}{\partial z_i} = \frac{2}{d_{ij}^2}
		\begin{bmatrix}
			x_{ij}^2z_{ij}  & x_{ij}y_{ij}z_{ij} & \frac{x_{ij}z_{ij}^2}{d_{ij}^2}- \frac{x_{ij}}{2} \\
			\star & y_{ij}^2z_{ij} & \frac{y_{ij}z_{ij}^2}{d_{ij}^2}- \frac{y_{ij}}{2} \\
			\star & \star & \frac{z_{ij}^3}{d_{ij}^2} - z_{ij}
		\end{bmatrix} 
	\end{gather}
	where the symbol $\star$ represents symmetric terms.
	These expressions are sufficient to compute the gradient of $\lambda_4(\mathbf{\hat{K}})$ respect to $p_i(t)$ as below
	\begin{align}
		\frac{\partial \lambda_4(\mathbf{\hat{K}})}{\partial p_i(t)} &= u^T \frac{\partial \mathbf{\hat{K}}}{\partial p_i(t)} u \\
		&= \sum_{(j,k)\in \mathcal{E}} (u_j - u_k)^T \frac{\partial \mathbf{\hat{K}}_{jk}}{\partial p_i(t)} (u_j - u_k) \\
		&= \sum_{j \in \mathcal{N}_i} (u_i - u_j)^T \frac{\partial \mathbf{\Phi}_{ij}(t)}{\partial p_i(t)} (u_i - u_j) 
	\end{align}
	where $u$ is the eigenvector corresponding to $\lambda_4(\mathbf{\hat{K}})$.

	\section{Simulation Experiments}
	\label{sec:simulation}
	Simulation experiments are meticulously designed to evaluate the effectiveness of our proposed methods, including the estimator and the planner.
	In Sec.\ref{subsec:result rpe}, we assess the optimality, certifiablity, accurancy and efficiency of our estimator, benchmarking it against state-of-the-art methods.
	In Sec.\ref{subsec:result metric}, we validate our theoretical conclusions. 
	This involves generating swarm trajectories under varying levels of maximum detection noise $\xi_{\text{max}}$ and conducting estimations to illustrate the impact of swarm motion on the estimator's performance.
	Additionally, we compare our swarm planner with other existing planners to highlight its effectiveness in ensuring visibility and estimation optimality.
	In Sec.\ref{subsec:overall}, our estimator and planner are integrated into a complete system.
	This system is then applied in an extensive 700+ meter swarm navigation in complex environment to verify the overall performance of our methods.
	The detai of simulation experiments is presented in the supplementaried videos.
	\begin{figure*}[!t]
		\centering
		\includegraphics[width=1.0 \textwidth]{./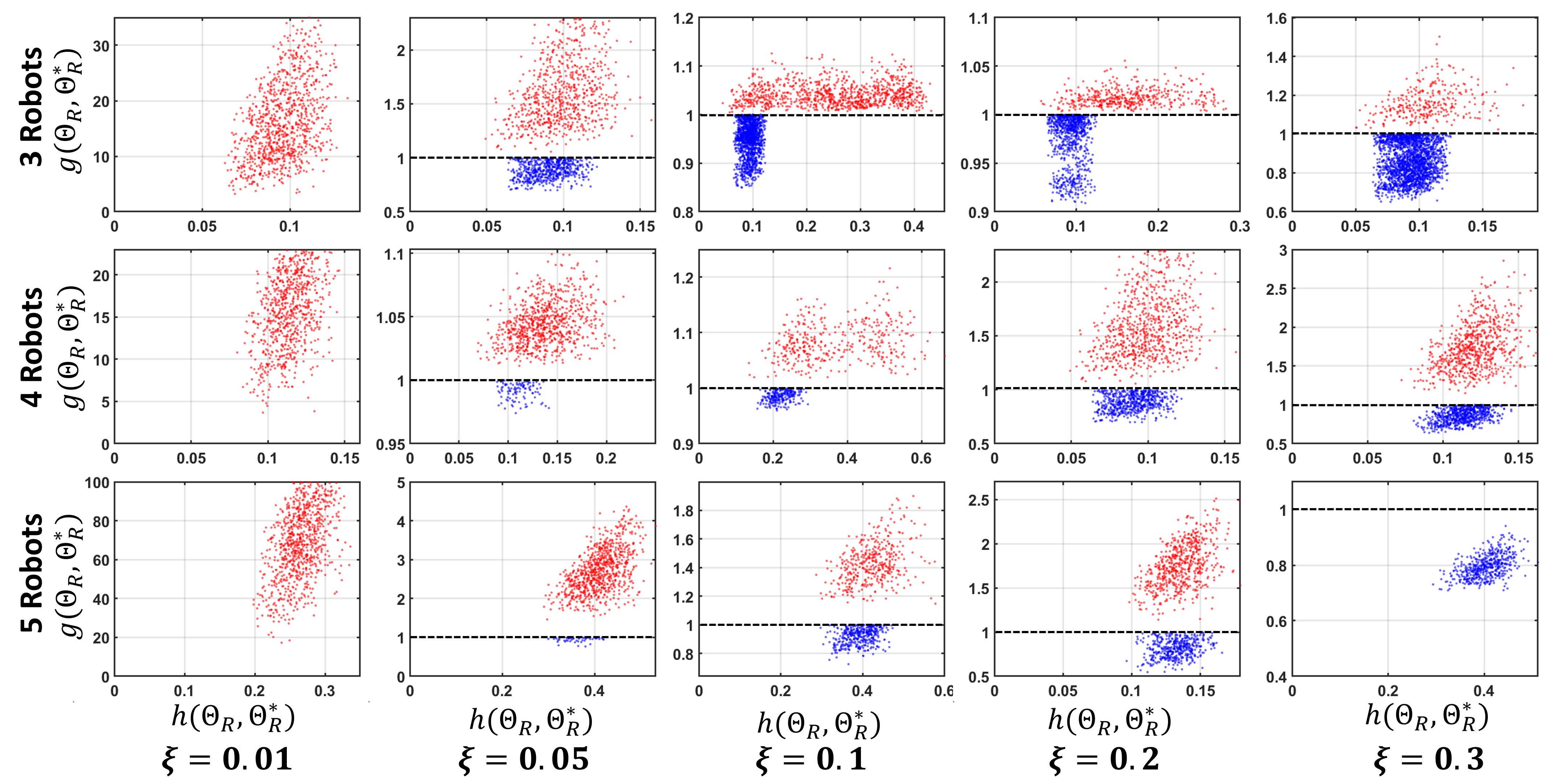}
		\caption{\label{fig:certifiabilit} Results of certifiability evaluation over different magnitude of added noise $\xi$ and numbers of robots.
			Our estimator can always correctly certify the global optimality of obtained solutions.}
		\vspace{-0.5cm}
	\end{figure*}
	\begin{figure}[t]
		\centering
		\includegraphics[width=0.5 \textwidth]{./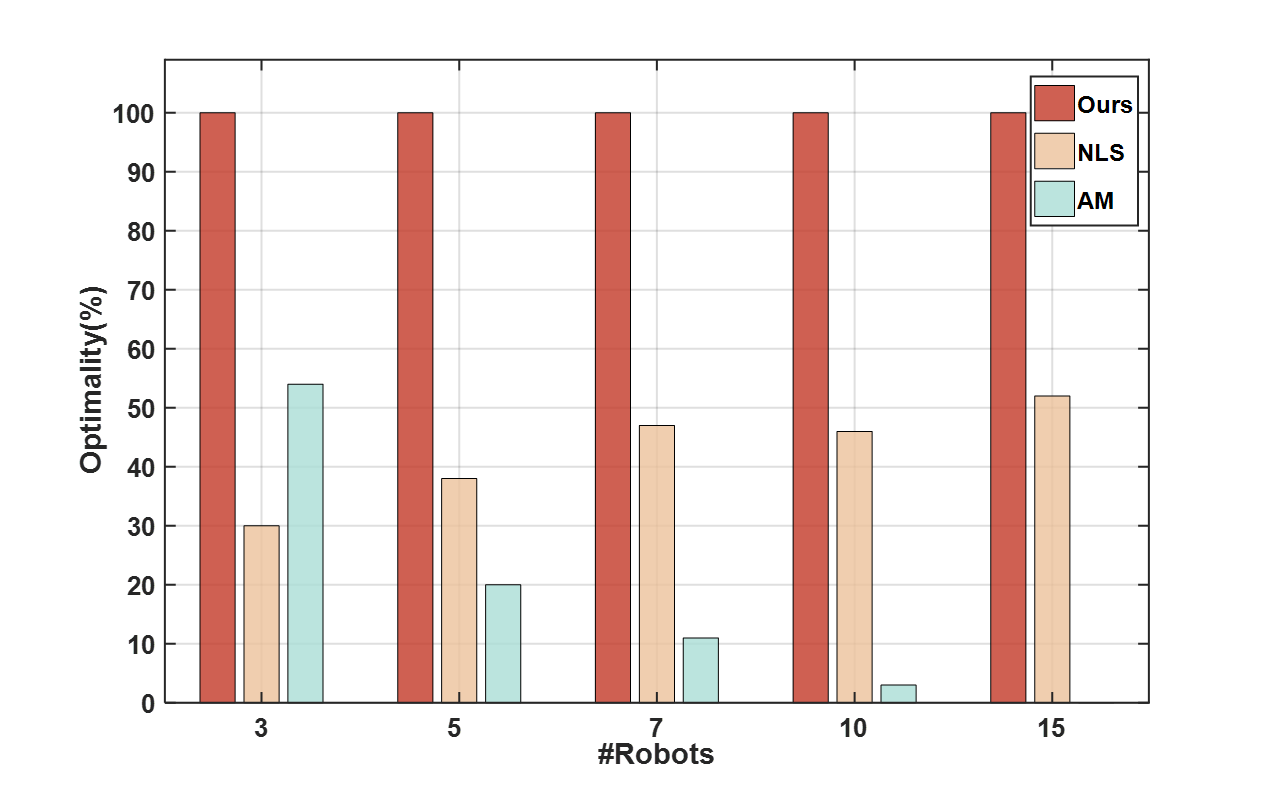}
		\caption{\label{fig:optimality} Results of optimality benchmark between our estimator and local optimization based methods.
			Our estimator can guarantee global optimality in noise-free cases.}
		\vspace{-0.3cm}
	\end{figure}
	
	\begin{figure*}[!t]
		\centering
		\includegraphics[width=1.0 \textwidth]{./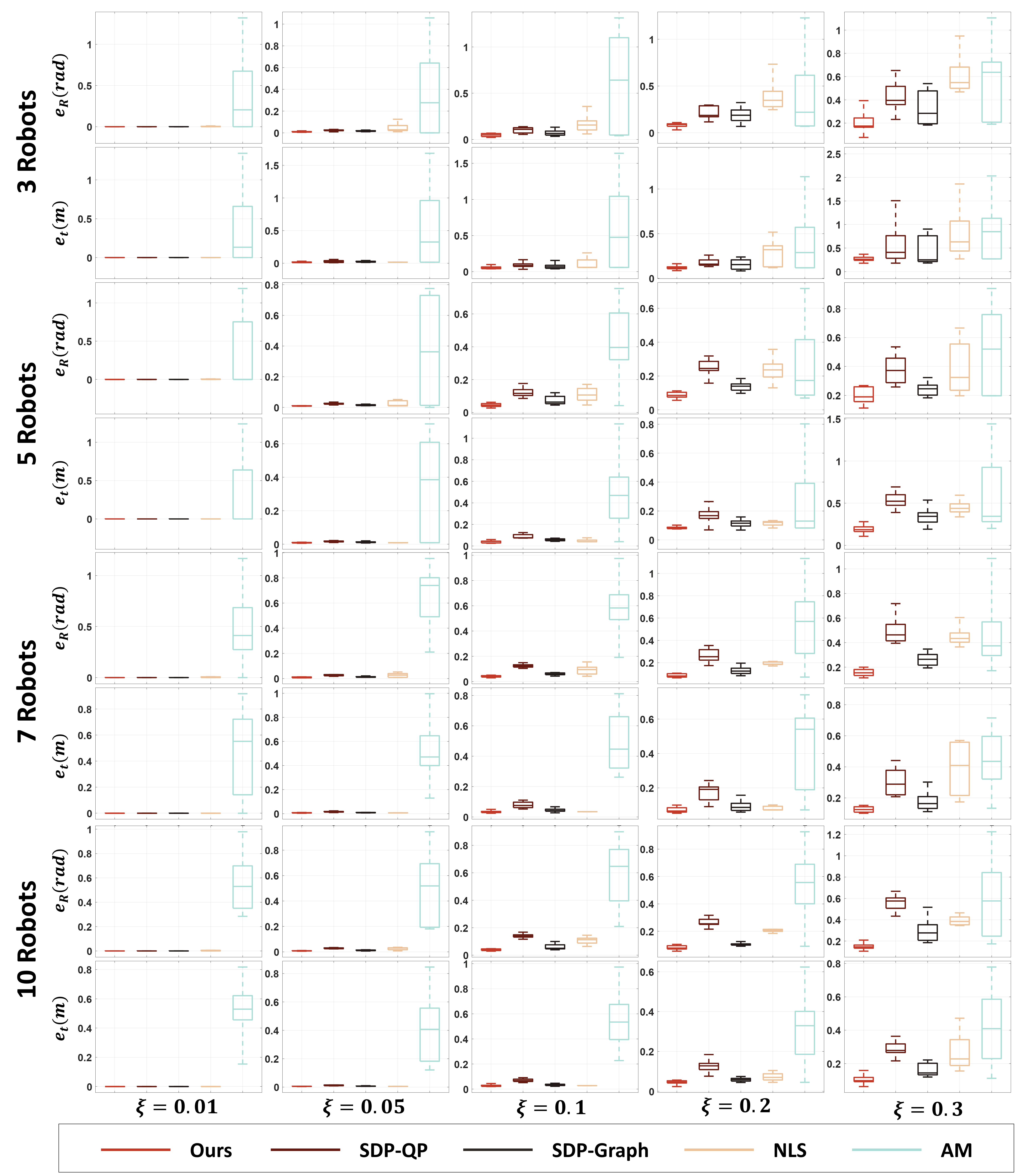}
		\caption{\label{fig:simulation1} Distribution of estimation errors using different methods.
			Simulations are performed with different magnitude of added noise $\xi$ and numbers of robots.
			Our estimator consistently outperforms existing methods in accurancy and consistency.}
		\vspace{-0.5cm}
	\end{figure*}
	
	\subsection{Evaluation of Certifiable Mutual Localization} 
	\label{subsec:result rpe}
	\subsubsection{Implementation Details}
	In our benchmark experiments, we select several methods for comparison, including the Nonlinear Least Squares (NLS) method, thr Alternating Minimization (AM) (\cite{jang2021multirobot}) method and prior SDP-based methods (\cite{wang2022certifiably}) and (\cite{wang2023bearing}).
	The NLS method employs euler angles as optimization variables and solve the original non-convex problem using the Levenberg-Marquardt algorithm.
	Local optimization-based methods, such as NLS and AM, require initial guesses, which are generated through random sampling, since we consider no prior knowledge about the problem.
	The method in (\cite{wang2022certifiably}) relaxes a MIQCQP problem to recover data correspondence. 
	We remove data association consideration and denote the simplified method as SDP-QP.
	Another SDP-based method (\cite{wang2023bearing}) performs relaxation on a directed graph.
	We denote the method as SDP-Graph.
	AM and SDP-QP rely on a parameter $s$, the distance between two data frames, which significantly influences accuracy.
	We set this parameter to a constant value of 5 in all experiments.
	All methods are implemented in MATLAB, utilizing SeDuMi for the SDP solver and \textit{lsqnonlin} for the NLS solver.
	
	In simulated data generation, we produce 10 waypoints for each robot.
	These waypoints were then used to construct an $SE(3)$ B-spline, following the method in (\cite{geneva2020openvins}). 
	The distance between consecutive waypoints can be changed to adjust the smoothness.
	In terms of local odometry, we sampled 100 poses along each robot's trajectory.
	Based on these poses, noisy bearing measurements are generated following the noise model in (\ref{equ:noise_bearing}). 
	The magnitude of detection error are denoted is $\xi$.

	\subsubsection{Optimality Benchmark}
	To validate the effectiveness of our certifiable estimator in recovering ground-truth relative poses, we compare its optimality with local optimization-based methods (NLS and AM).
	Experiments are conducted with varying numbers of robots, and we set $\xi = 0$ to generate noise-free bearings.
	In each simulation, we perform 100 Monte-Carlo tests with radomly generated swarm trajectories.
	Given that the measurements are noise-free, we consider the solution $\mathbf{\Theta_R}$ to be globally optimal if $\norm{\mathbf{\Theta_R} - \mathbf{\hat{\Theta}_R}}_F \leq 10^{-3}$.
	The proportion of optimal solutions obtained in these simulations is depicted in Fig. \ref{fig:optimality}.
	The results demonstrate that our estimator consistently achieves optimal solutions across all robot numbers, while both NLS and AM are possible to be trapped in local minima due to random initial values.
	
	\subsubsection{Certifiability Evaluation}
	To evaluate the certifiability of our method, we perform relative pose estimation using various magnitudes of added noise and different numbers of robots. 
	After obtaining a solution $\mathbf{\Theta_R^*}$ with our estimator, we certify its optimality by checking the positive semi-definiteness of $\mathbf{K}$.
	To confirm the correctness of our certification, we sample 1000 random relative rotations $\mathbf{\Theta_R}$, calculate their costs $f(\mathbf{\Theta_R})$ and obtain the following values:
	\begin{align}
		g(\mathbf{\Theta_R}, \mathbf{\Theta_R^*}) &= \frac{f(\mathbf{\Theta_R})}{f(\mathbf{\Theta_R^*})} \\
		h(\mathbf{\Theta_R}, \mathbf{\Theta_R^*}) &= \norm{\mathbf{\Theta_R} - \mathbf{\Theta_R^*}}_F
	\end{align}
	If any sampled candidate satisfies $g(\mathbf{\Theta_R}, \mathbf{\Theta_R^*}) < 1$, it indicates that the solution $\mathbf{\Theta_R^*}$ is not globally optimal.
	The distribution of these sampled candidates is depicted in Fig. \ref{fig:certifiabilit}.
	If a solution certified as globally optimal, the sampled candidates are represented with red points, while points corresponding to unoptimal solutions are in blue. 
	For clarity, we omit information-less blue points where $g(\mathbf{\Theta_R}, \mathbf{\Theta_R^*}) \geq 1$.	
	The results show that for all values of $\xi$ and robot numbers, if the obtained solution is certified to be globally optimal, all sampled candidates have higher costs.
	On the other hand, if we certify a solution not globally optimal, then there must exist one relative rotation with a lower cost.
	This experiment validates our method's ability to effectively certify the global optimality of obtained solutions.
	By enabling us to dismiss non-optimal solutions and prevent coordinate confusion, our approach ensures certifiability in mutual localization.

	\subsubsection{Accuracy Benchmark}
	
	\begin{figure}[t]
		\centering
		\includegraphics[width=0.45 \textwidth]{./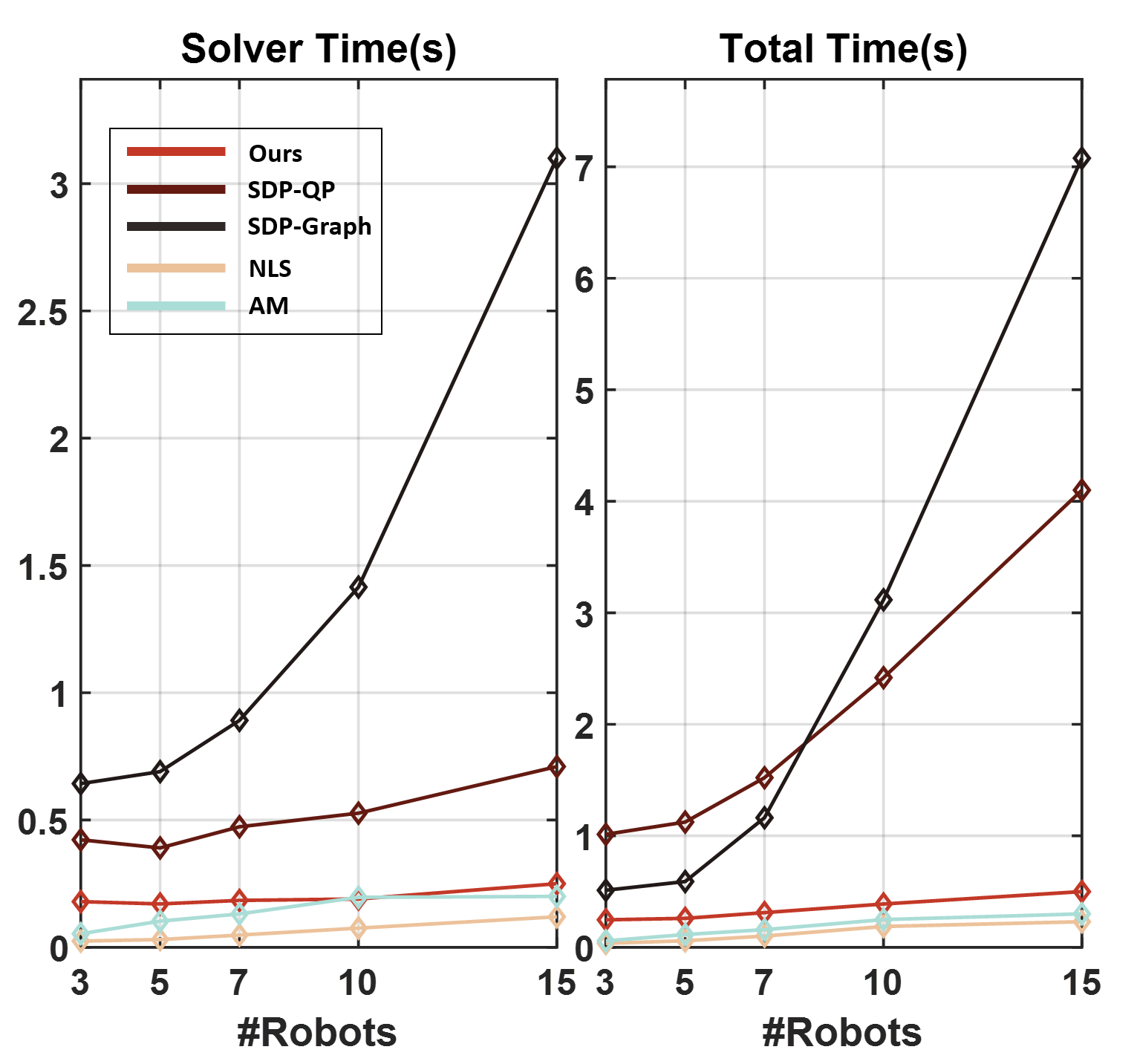}
		\caption{\label{fig:robot_num} Average time consumption with different robot numbers.
		Our estimator can always run in real-time.}
		\vspace{-0.5cm}
	\end{figure}
	In accuracy benchmark experiments, we change $\xi$ to simulate more generalized cases from noise-free cases ($\xi=0$) to extreme noised cases ($\xi = 0.2$).
	We evaluate the translation and rotation parameters for all methods, with the translation and rotation error denoted as
	\begin{align}
		e_t &= \sum_{i}^{N}\Norm{t_i - \hat{t}_i} / N \\
		e_r &= \sum_{i}^{N}\Norm{\angle(\mathbf{R}_i^\mathrm{T} \mathbf{\hat{R}}_i)} / N,
	\end{align}
	where $\angle(\star)$ is the angle of the rotation error (\cite{zhang2018atutorial}).
	The distribution of translation and rotation estimation errors are illustrated in Fig.\ref{fig:simulation1}.
	The results show that local optimization based methods (NLS and AM) always show randomness and could not guarantee accuracy of solution. 
	The disadvantage of relying on good initial guess for these methods are demonstrated.
	Among SDP-based methods, noise level $\xi$ affects accuracy in similar manner but at different scales.
	In the noise-free cases, all SDP-based methods can recover relative poses consistently and accruately.
	When $\xi$ is limited ($\xi \leq 0.1$), their solutions are all usable.
	But under extreme noise level ($\xi = 0.2$), SDP-QP and SDP-Graph begin to show randomness, meaning their relaxtion are not tight any more.
	Overall, our estimator significantly excel at accuracy and consistence.
	The reason can be attributed to two aspects.
	First, our problem formulation is more concise by mutual distance elimination, which reduces source of error from distance variables.
	Second, compared with SDP-QP and SDP-Graph, which use data of two frames to formulate error, our error formulation only use single frame data, avoiding degeneration in motionless cases.
	Thus, our estimator can be considered more useful and robust in most situations.

	\subsubsection{Efficiency Benchmark}
	
	\begin{figure*}[!t]
		\centering
		\includegraphics[width=1.0 \textwidth]{./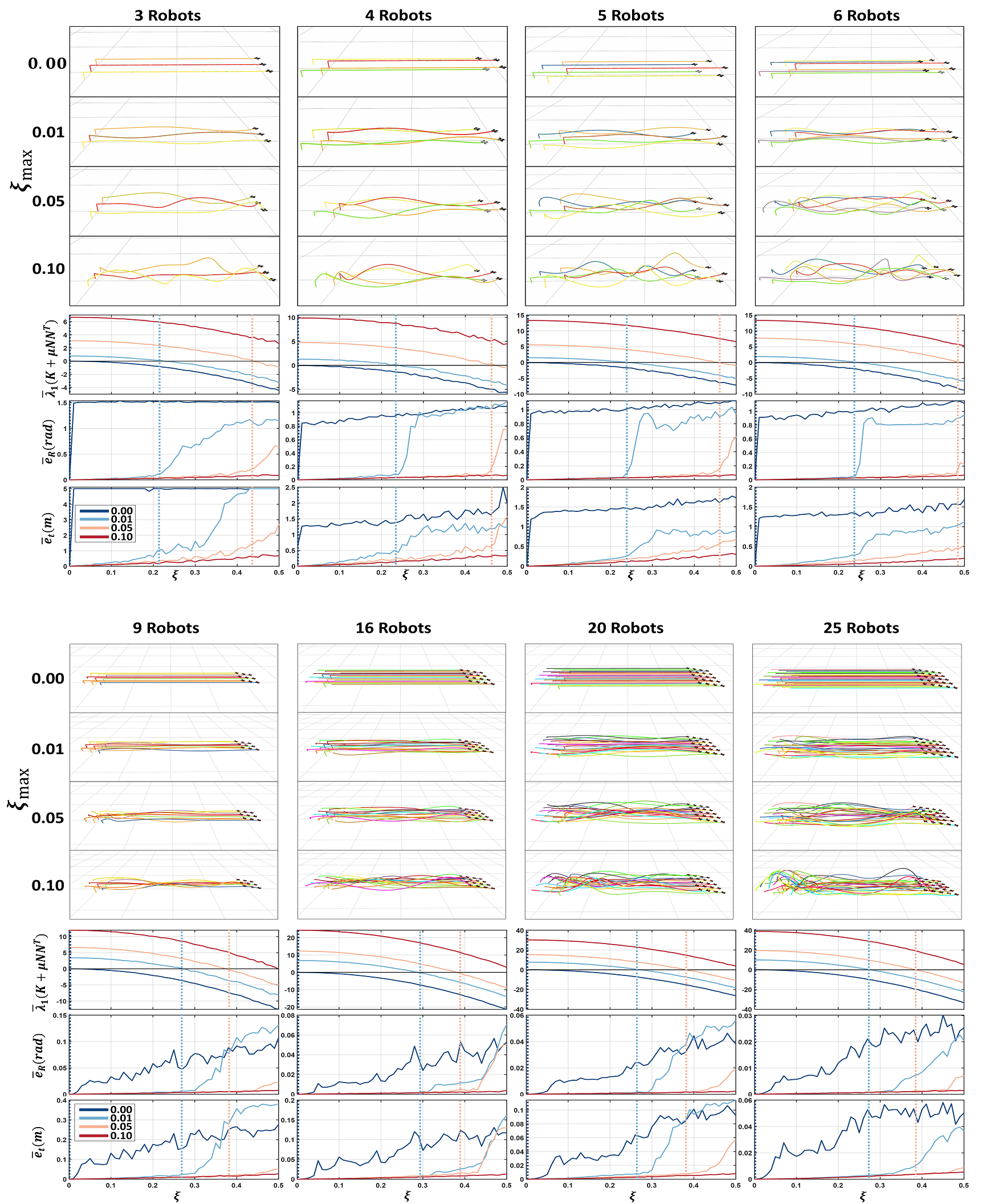}
		\caption{\label{fig:multi_traj} Demonstration of different swarm trajectories. 
			The column denotes different number of robots.
			The row denotes different desired maximum allowed noise.
			Optimality and average estimation error for different swarm trajectories. 
			Different color denotes swarm trajectories with different $\xi_{\text{max}}$.
			The solid dot represent the first time when the optimality is not hosted.}
		\vspace{-0.1cm}
	\end{figure*}

	To assess the efficiency of our algorithm with varying numbers of robots, we fix $\xi = 0.05$ and measure the time consumed during estimation.
	We ocus on comparing both the solver time $t_{opt}$ and the total time $t_{total}$, which includes data preparation and pre-processing time.
	100 simulations are conducted for each robot count, and average time consumption is presented in Fig.\ref{fig:robot_num}.
	The results show that the number of robots affects the time consumption at different scales for different methods.
	When the number of robots is limited (3-5), most methods can run in real-time.
	However, as the number of robots increases, both $t_{opt}$ and $t_{total}$ for SDP-QP increase rapidly, owing to the increase in variables proportional to the square of the robot count.
	NLS is consistenly the fastest, closely matched by AM.
	Our estimator ranks between local optimization-based methods and previous SDP-based methods in efficiency.
	Thanks to its concise problem formulation, our estimator can always run in real-time across all tested robot counts.

	\subsection{Evaluation for Certifiable Swarm Planning} 
	\label{subsec:result metric}

	\begin{figure*}[!th]
		\centering
		\includegraphics[width=1.0 \textwidth]{./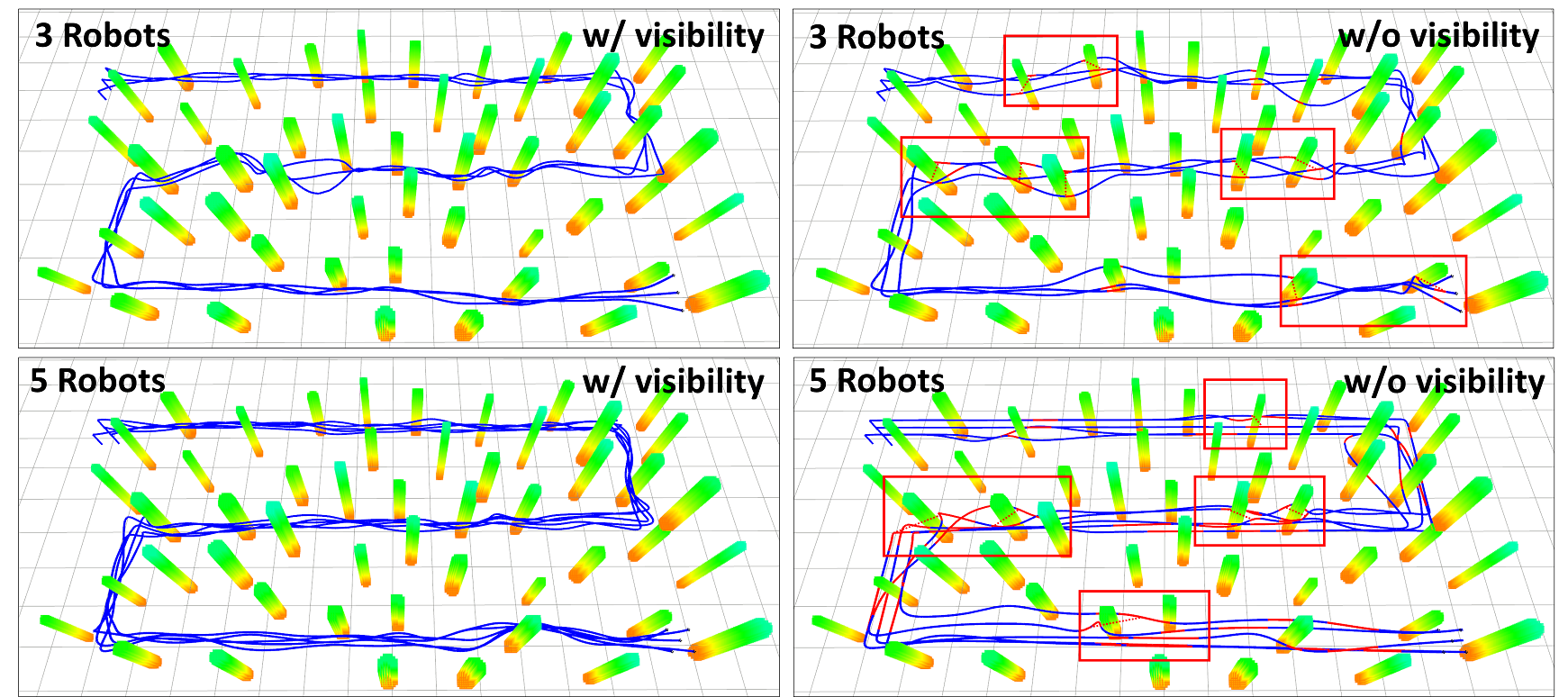}
		\caption{\label{fig:vis_compare_35}  Comparision of swarm trajectoies with and without inter-robot visibility.
		 Our planner can always guarantee that each robot is visible to at least one of other robots.
	 	}
		\vspace{-0.1cm}
	\end{figure*}

	\begin{figure*}[!th]
		\centering
		\includegraphics[width=1.0 \textwidth]{./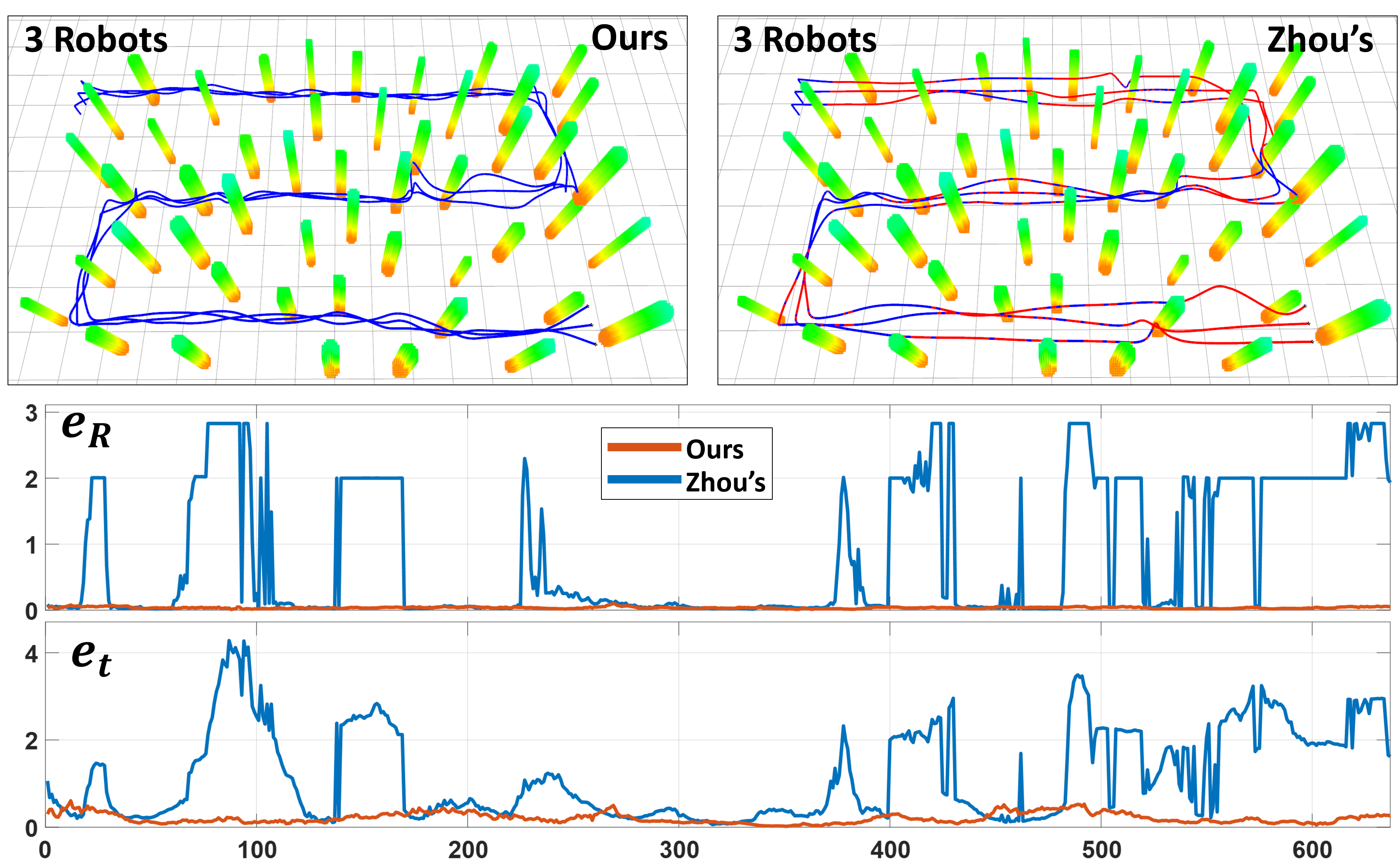}
		\caption{\label{fig:planning_benchmark_3} Swarm trajectories and estimation error during 3-robot swarm flight.
			Our planner can generate trajectories that always guarantee estimation optimality and accuracy.}
		\vspace{-0.4cm}
	\end{figure*}

	\subsubsection{Impact of Swarm Motion on Mutual Localization}
	\label{subsubsec:impact_of_motion}
	To evaluate the impact of swarm motion on our estimator, we generate swarm trajectories with different $\xi_{\text{max}}$ for robots, use them to produce noisy bearing measurements and then perform mutual localization.
	To highlight the effect of certifiable swarm planning, we eliminate consideration of obstacle avoidance and inter-robot visibility.
	The robots are programmed to take off from some points near $(0,0,1)$ and arrive respective targets around $(20,0,1)$, with parameters $v_{\text{max}}=2.0, a_{\text{max}}=3.0,d_r = 5.0$.
	By altering $\xi_{\text{max}}$ and the number of robots, a variety of swarm trajectories are generated.
	Based on them, we add noises of different $\xi$ to construct $\mathbf{K}$ and perform relative pose estimation.
	For each $\xi$, we conduct 10 tests, and calculate the average certificate eigenvalue  $\bar{\lambda}_1(\mathbf{K+\mu NN\tp})$ and the average estimation error $\bar{e}_t$ and $\bar{e}_r$.
	The results, including swarm trajectories, average certificate eigenvalue, and average estimation error respect to the noise magnitude $\xi$, are shown in Fig. \ref{fig:multi_traj}.
	
	The variation in swarm trajectories indicates that as $\xi_{\text{max}}$ increase, swarm trajectories become more twisted and complex to provide enough motion excitement.
	The change of $\bar{\lambda}_1(\mathbf{K+\mu NN\tp})$across different noise magnitudes corroborate our theoretical findings: the swarm trajectories generated with higher  $\xi_{\text{max}}$ demonstrate improved noise resistance to maintain estimation optimality.
	Especially, the swarm trajectories with $\xi_{\text{max}} = 0$ show no resistance to detection noise. 
	We define  the actual maximum noise magnitude $\mathring{\xi}_{\text{max}}$ under which  estimation optimality is preserved (corresponding to the dotted line).
	We observe that $\mathring{\xi}_{\text{max}}$ consistently exceeds our pre-computed $\xi_{\text{max}}$.
	This discrepancy arises from approximations and inequalities involved in deriving the certificate eigenvalue bound $\mathcal{B}$.
	 Interestingly, the actual maximum allowed noise $\mathring{\xi}_{\text{max}}$ for the same $\xi_{\text{max}}$ is similar regardless of the number of robots involved.
	It may come from that $\mathcal{B}$ already accounts for robot quantity.
	
	The estimation error results demonstrate the effect of swarm motion on accuracy of mutual localization.
	When the estimation optimality holds, the estimation error remains limited and the obtained solution is enough accurate to align robots' reference frames. 
	However, when noise surpasses $\mathring{\xi}_{\text{max}}$, the estimation errors increase sharply .
	It comes from unoptimal and erroneous solutions cased by the negtive impact of detection error. 
	Thus, raising the certificate eigenvalue $\lambda_4(\mathbf{\hat{K}})$ can effectively increase the maximum  allowable noise magnitude while maintaining estimation optimality and accuracy. 
	It validates our theoretical conclusions and provides motivation for enhancing $\lambda_4(\mathbf{\hat{K}})$ through  certifiable swarm planning.
	
	\subsubsection{Swarm Trajectories Comparision on Visibility}
	We conducted comparative experiments to validate that our planner can ensure the inter-robot visibility.
	In these experiments,	50 obstacles are generated randomly in 30m $\times$ 50m rectangular environment, and robots are instructed to navigate through specified waypoints in sequence.
	We compare two set of swarm trajectories: one considering inter-robot visibility as per our planner, and the other not considering it.
	These trajectories were thoroughly tested in scenarios involving 3-robot and 5-robot swarms. 
	The swarm trajectories are shown in Fig. \ref{fig:vis_compare_35}, where blue paths represent scenarios where each robot remains visible to at least one other robot, and red paths indicate instances where at least one robot is not observable by any other robot.
	
	The results show in complex environments, swarms that do not account for inter-robot visibility often encounter situations where lines of sight are obstructed by obstacles.
	In contrast, our planner, by incorporating inter-robot visibility, enables the swarm to maintain stable mutual observations, thereby providing ample data for effective mutual localization.
	\begin{figure*}[!th]
		\centering
		\includegraphics[width=1.0 \textwidth]{./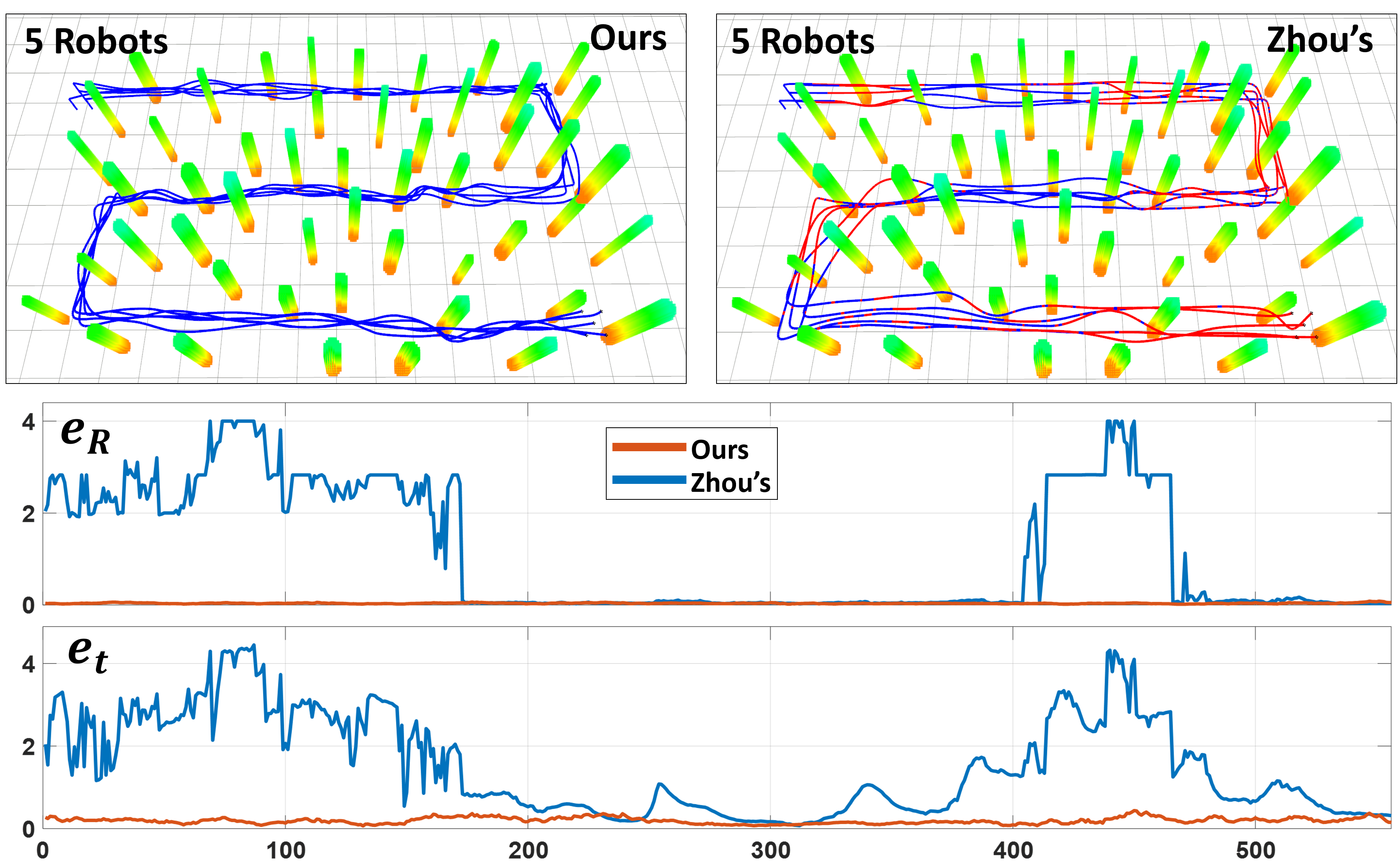}
		\caption{\label{fig:planning_benchmark_5} Swarm trajectories and estimation error during 5-robot swarm flight.
			Our planner can generate trajectories that always guarantee estimation optimality and accuracy.}
		\vspace{-0.1cm}
	\end{figure*}
	
	
	\subsubsection{Swarm Trajectories Comparision on Estimation Performance}
	We assessed the efficacy of our certifiable swarm planning method by comparing it with Zhou's (\cite{zhou2021ego}) swarm planner, which does not take estimation optimality into account. 
	During the swarm flight in cluttered environment, robot swarm leverage ground-truth odometry and noisy bearing with $\xi = 0.05$ (a common noise level in the real world) to perform mutual localization.
	We conducted tests with both 3-robot and 5-robot configurations.
	The resulting swarm trajectories and estimation errors illustrated in Fig. \ref{fig:planning_benchmark_3} and Fig. \ref{fig:planning_benchmark_5}, respectively.
	Blue paths indicate points where the estimation solution was certified as globally optimal, while red paths denote the opposite.
	
	The results show that robots following Zhou's trajectories frequently fail to achieve globally optimal solutions during motion.
	In contrast, our planner consistently generates trajectories that ensure estimation optimality.
	This difference is also reflected in the estimation error comparisons: robots on Zhou's trajectories often exhibit inaccuracies in both rotation and translation due to unoptimal solutions.
	Although optimal estimations are sporadically achievable, neither their frequency nor their accuracy are sufficient to reliably align the robots' reference frames. 
	In contrast, robots navigating our planned trajectories consistently exhibit low estimation errors, demonstrating the superior performance of our certifiable swarm planning method.
	
	\subsection{Integrated Evaluation in Swarm Navigation}
	\label{subsec:overall}
	\begin{figure*}[!th]
		\centering
		\includegraphics[width=1.0 \textwidth]{./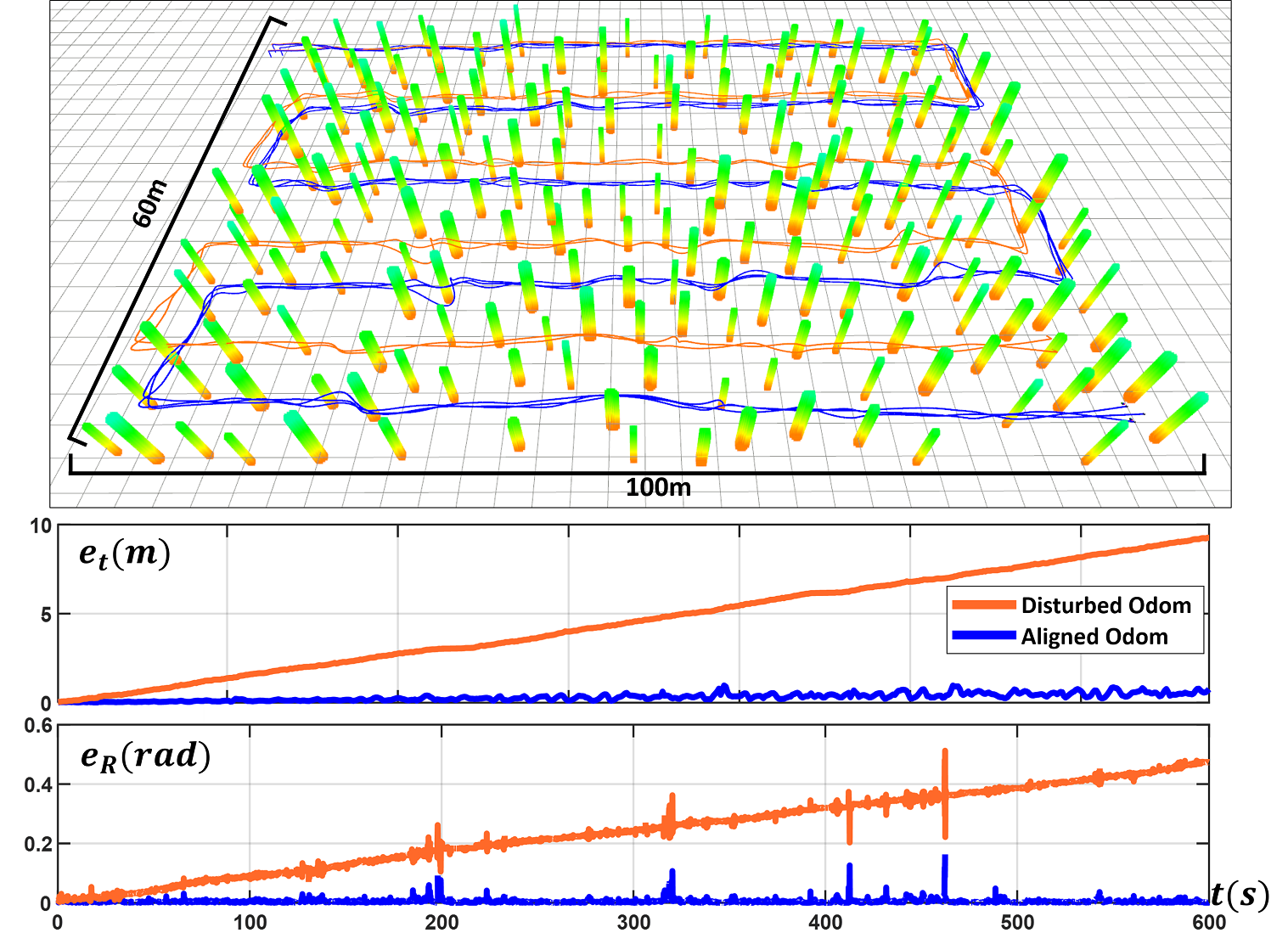}
		\caption{\label{fig:long_navigation} Swarm trajectories and estimation error in a long-range navigation.
		Our estimator and planner effecitively mitigate the accumulated odometry drift.}
		\vspace{-0.1cm}
	\end{figure*}
	We conducted a long-range swarm navigation experiment to evaluate the overall performance of our estimator and planner. 
	As depicted in Fig.\ref{fig:long_navigation}, three robots navigate through a complex 60m $\times$ 100m environment.
	 To simulate odometry drift typically encountered in long-range flights, we follow the method in (\cite{geneva2020openvins}) to obtian noisy position  $p_n(t)$ and noisy yaw $\theta_n(t)$:
	 \begin{align*}
	 	p_n(t) = p(t) + b_p(t),\  \theta_n(t) = \theta(t) + b_\theta(t).
	 \end{align*}
	 $p(t)$ and $\theta(t)$ are the ground-truth position and yaw.
	 $b_p(t)$ and $b_\theta(t)$ are random walk biases:
	\begin{align*}
		&b_p(t+\Delta t) = b_p(t) + \sigma_p \sqrt{v(t) \Delta t}\ \text{Gennoise}(0,1,3), \\
		&b_\theta(t+\Delta t) = b_\theta(t) + \sigma_\theta \sqrt{\omega(t) \Delta t}\ \text{Gennoise}(0,1,1),
	\end{align*}
	where $v(t)$ and $\omega(t)$ are linear and angular velocity, and $\text{Gennoise}(m,n,v)$ generates a $\mathbb{R}^v$ vector with mean $m$ and variance $n$.
	We considered only yaw disturbances as they are typically unobservable in inertial-based odometry
	Two of the robots' odometry were disturbed, allowing us to compare localization errors before and after alignment with the undisturbed robot.
	We set $\sigma_p = 0.1$ and $\sigma_\theta = 0.01$.
	
	During navigation, the robots used 10Hz LiDAR to gather environmental data and build local occupancy maps.
	They generated swarm trajectories considering $\xi_{\text{max}} = 0.05$ with a replanning framework to reach their target points. 
	Throughout the flight, the robots effectively avoided collisions and maintained mutual visibility, with trajectories designed to satisfy the certificate eigenvalue requirement.
	
	Noisy bearing measurements with $\xi = 0.05$ were collected for estimation. 
	By optimizing over a sliding window, the robots recovered relative poses and obtained aligned odometry to mitigate accumulated drift.
	The absolute estimation error of translation and rotation for both disturbed and aligned odometry is illustrates in Fig.\ref{fig:long_navigation}.
	For translation and yaw, the disturbed odometry showed increasing error due to accumulated drift, while odometry aligned using our estimator maintained extremely low error.
	Overall, the results demonstrate that our estimator and planner can effectively mitigate accumulated odometry drift and enhance swarm consistency in long-range navigation."
	
	\section{Real-world Experiments}
	\label{sec:real-world experiments}

	In this section, we apply our proposed method in real-world experiments.
	In Sec.\ref{subsec:hardware}, we introduc the hardware platform.
	In Sec.\ref{subsec:exp_1_estimation} and Sec.\ref{subsec:exp_2_planning}, we evaluate the certifiable mutual localization, and certifiable swarm planning planner using the real-world data, respectively.
	In Sec.\ref{subsec:exp_3_mapfusion}, we apply our methods in several missions, including large-scale map fusion and long-range swarm navigation, to present the effectiveness and practical robustness of our methods.
	The detai of the real-world  experiments is presented in the supplementaried videos.
	
	\begin{table*}[t]
		\centering
		\caption{\label{tab:real_estimation}Estimation errors of different methods with real-world data}
		\begin{threeparttable}
			\begin{tabular}{c|c|c|cc|cc|cc|cc|cc}
				\hline
				\multirow{2}{*}{Exp.}   &\multirow{2}{*}{Configuration} & \multirow{2}{*}{Metrics} & \multicolumn{2}{c|}{NLS} & \multicolumn{2}{c|}{SDP-QP} & \multicolumn{2}{c|}{SDP-Graph} & \multicolumn{2}{c|}{AM} & \multicolumn{2}{c}{Proposed}  \\ \cline{4-13} 
				&  &      & Pos  & Rot   & Pos       & Rot     & Pos         & Rot     & Pos        & Rot       & Pos         & Rot     \\ \hline
				\multirow{2}{*}{1} & \multirow{2}{*}{2robots (2465)}
				& AE & 1.162 & 24.0$^\circ$  & 0.482  & 7.5$^\circ$  &  0.171  & 1.7$^\circ$  & 0.221  & 1.5$^\circ$ & \textbf{0.152} & \textbf{1.4$^\circ$ } \\ 
				& & RE & 1.374 & 26.1$^\circ$  & 0.761  & 7.9$^\circ$  &  0.223  & 1.8$^\circ$  & 0.263  & 1.6$^\circ$ & \textbf{0.203} & \textbf{1.3$^\circ$ } \\ \hline
				\multirow{2}{*}{2} & \multirow{2}{*}{2robots (806)}
				& AE & 1.891 & 31.1$^\circ$  & 0.630  & 6.2$^\circ$  & 0.575   & 5.2$^\circ$  & 0.604 & 9.8$^\circ$ & \textbf{0.115} & \textbf{3.2$^\circ$ } \\ 
				& & RE & 1.182 & 22.3$^\circ$  & 0.615  & 5.8$^\circ$  & 0.537   & 4.6$^\circ$  & 0.572 & 6.8$^\circ$ & \textbf{0.100} & \textbf{1.9$^\circ$}  \\ \hline
				\multirow{2}{*}{3} & \multirow{2}{*}{3robots (321)}
				& AE & 1.323  & 21.0$^\circ$ & 0.358  & 7.9$^\circ$  & 0.230   & 5.0$^\circ$  & 0.254 & 6.4$^\circ$ & \textbf{0.122} & \textbf{2.0$^\circ$ } \\ 
				& & RE & 1.091  & 25.7$^\circ$  & 0.328  & 4.9$^\circ$ & 0.193   & 1.8$^\circ$  & 0.206 & 2.9$^\circ$  & \textbf{0.123} & \textbf{1.2$^\circ$ } \\ \hline
				\multirow{2}{*}{4} & \multirow{2}{*}{3robots (923)}
				& AE & 1.378 & 14.1$^\circ$  & 0.206  & 3.1$^\circ$  & 0.103   & 2.9$^\circ$  & 0.308 & 5.9$^\circ$ & \textbf{0.171} & \textbf{2.2$^\circ$ } \\ 
				& & RE & 1.168 & 16.9$^\circ$  & 0.184   & 2.0$^\circ$ & 0.158   &1.9$^\circ$   & 0.431 & 4.2$^\circ$  & \textbf{0.127} & \textbf{1.4$^\circ$ } \\ \hline
				\multirow{2}{*}{5} & \multirow{2}{*}{3robots (951)}
				& AE & 1.584 & 19.9$^\circ$  &  0.213 & 2.5$^\circ$  & 0.385  & 4.3$^\circ$   & 1.682  & 6.1$^\circ$   & \textbf{0.138} & \textbf{2.3$^\circ$ } \\ 
				& & RE & 1.980 & 13.9$^\circ$  &  0.154 & 1.2$^\circ$   & 0.292  & 2.6$^\circ$ & 1.493 & 5.9$^\circ$   & \textbf{0.105} & \textbf{1.0$^\circ$ } \\ \hline
				\multirow{2}{*}{6} & \multirow{2}{*}{4robots (973)}
				& AE & 1.882 & 19.3$^\circ$  & 0.349  & 2.9$^\circ$  & 0.411 & 3.2$^\circ$  & 1.669   & 38.1$^\circ$ & \textbf{0.137} & \textbf{2.1$^\circ$ } \\
				& & RE & 1.583 & 23.0$^\circ$  &  0.275  & 1.9$^\circ$  & 0.377 & 2.4$^\circ$  & 1.576  & 31.9$^\circ$  & \textbf{0.103} & \textbf{1.1$^\circ$ } \\ \hline
				\multirow{2}{*}{7} & \multirow{2}{*}{4robots (221)}
				& AE & 1.121 & 36.3$^\circ$ &  0.192 & 3.1$^\circ$   & 0.334   & 4.7$^\circ$  & 1.362 & 34.5$^\circ$   & \textbf{0.161} & \textbf{1.8$^\circ$ } \\
				& & RE & 1.283 & 37.0$^\circ$ & 0.187 & 2.7$^\circ$   & 0.212  & 3.4$^\circ$  & 0.973  & 35.9$^\circ$   & \textbf{0.123} & \textbf{2.1$^\circ$ } \\ \hline
			\end{tabular}
		\end{threeparttable}
	\end{table*}
	\subsection{Hardware Platforms and Implementation Details}
	\label{subsec:hardware}
	\begin{figure}[!t]
		\centering
		\includegraphics[width=0.5 \textwidth]{./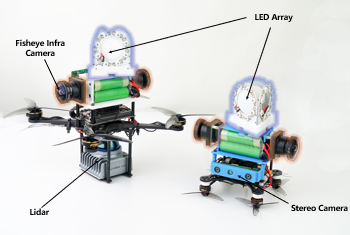}
		\caption{\label{fig:hardware} The heterogeneous sensing quadrotor platforms and the omnidirectional detection equipments used in our experiments.}
		\vspace{-0cm}
	\end{figure}
	
	The  hardware platforms in our real-world experiments is shown in Fig.\ref{fig:hardware}.
	We follow our previous inter-robot detection equipment (\cite{xun2023crepes}). 
	Each robots are equiped with a near-infra LED array and two fisheye infra cameras with $185^\circ$ FOV.
	During the flight, robots' LED array blinks with different frequency to idetify their id.
	Detected by the fisheye infra cameras, robots can obtain inter-robot bearing measurenments in 20Hz.
	We leverage motion capture to calibrate the bearing detection precision before experiments.
	When the distance between two robots is in a normal range (1m $\sim$ 5m), the average magnitude of the detection error in the real world is about $\xi = 0.04$.

	\begin{figure}[t]
		\centering
		\includegraphics[width=0.5 \textwidth]{./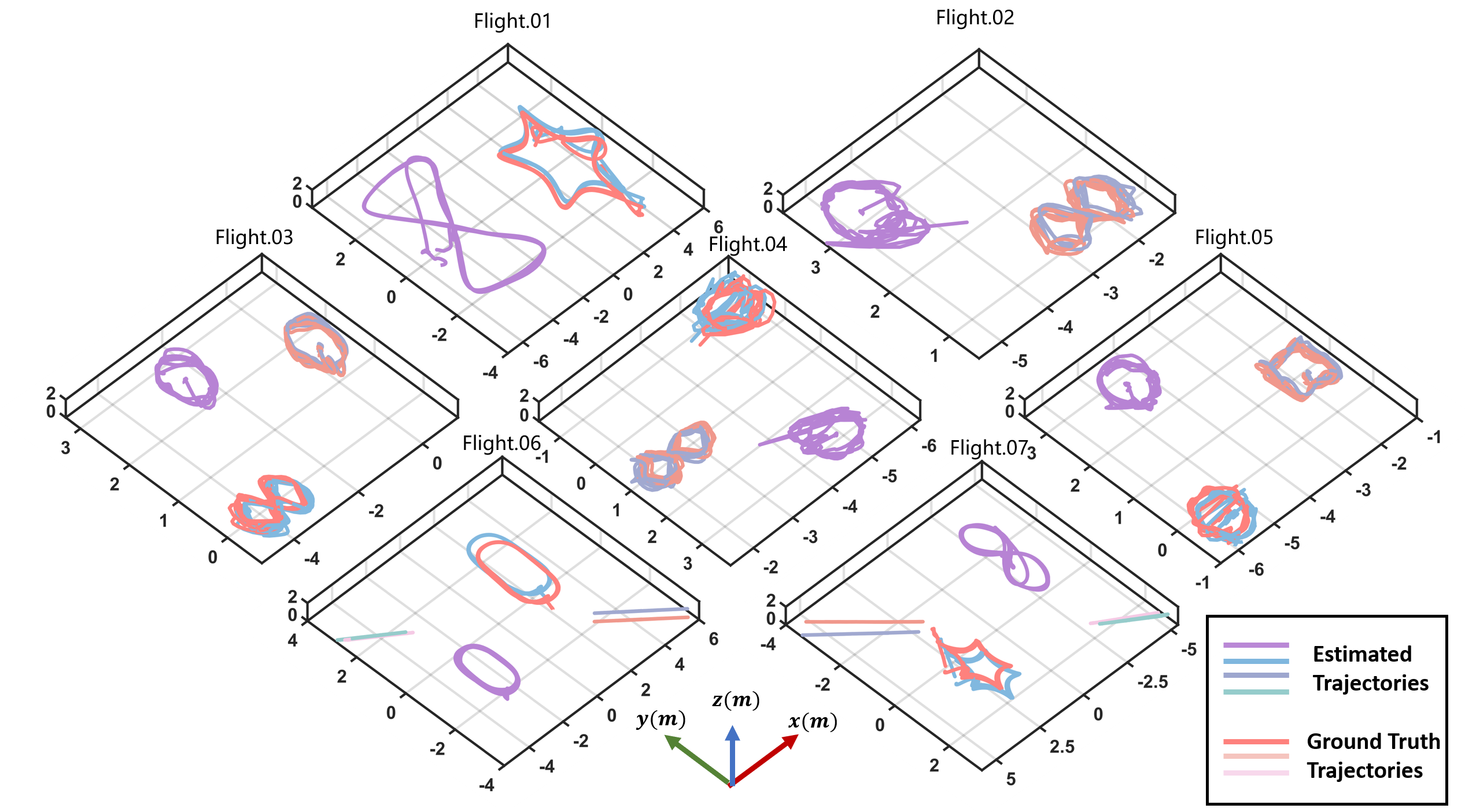}
		\caption{\label{fig:real_estimation} Demonstration of the ground-truth and the estimated trajectories in real-world experiments. }
		\vspace{-0.1cm}
	\end{figure}
	\subsection{Swarm Localization with Real-world Data}
	\label{subsec:exp_1_estimation}
	
	In this experiments, we aim to compare our estimator with previous methods using real-world data.
	We conduct experiments using two robots, three robots and four robots, respectively. 
	These robots are allowed to move along 3D pre-determined trajectories and observe each other in a motion capture room that provide the ground-truth poses at 120Hz.
	These trajectories are enough twisted to perform mutual localization.
	We then collect the local poses from visual-inertial odometry (VIO) or lidar-inertial odometry (LIO), and actual bearing measurements to recover relative poses.
	Except the absolute estimation error (AE), we also consider the relative estimation error (RE), defined as 
	\begin{align}
		e_{R}^* &= \sum_{(i,j) \in \mathcal{E}} \sum_{\tau \in J_{ij}} \Norm{\mathbf{R}_{ij}^\tau \tp \mathbf{\hat{R}}_{ij}^\tau} / N \\
		e_{t}^* &= \sum_{(i,j) \in \mathcal{E}} \sum_{\tau \in J_{ij}} \Norm{t_{ij}^\tau - \hat{t}_{ij}^\tau} / N.
	\end{align}

	Compared with AE, RE can better represent the consistency of the swarm at each moment in the movement.
	The estimation errors are show in Table \ref{tab:real_estimation}.
	As it shown, our proposed method surpasses the other methods in all experiments in accuracy and consistency.
	Fig.\ref{fig:real_estimation} illustrates the groud-truth trajectories and the estimated trajectories in the experiments.
	In all tests, our estimator can consistently and accrately recover the correct relative poses

	\subsection{Certifiable Swarm Planning in the Real World}
	\label{subsec:exp_2_planning}
	
	\begin{figure*}[!t]
		\centering
		\includegraphics[width=1.0 \textwidth]{./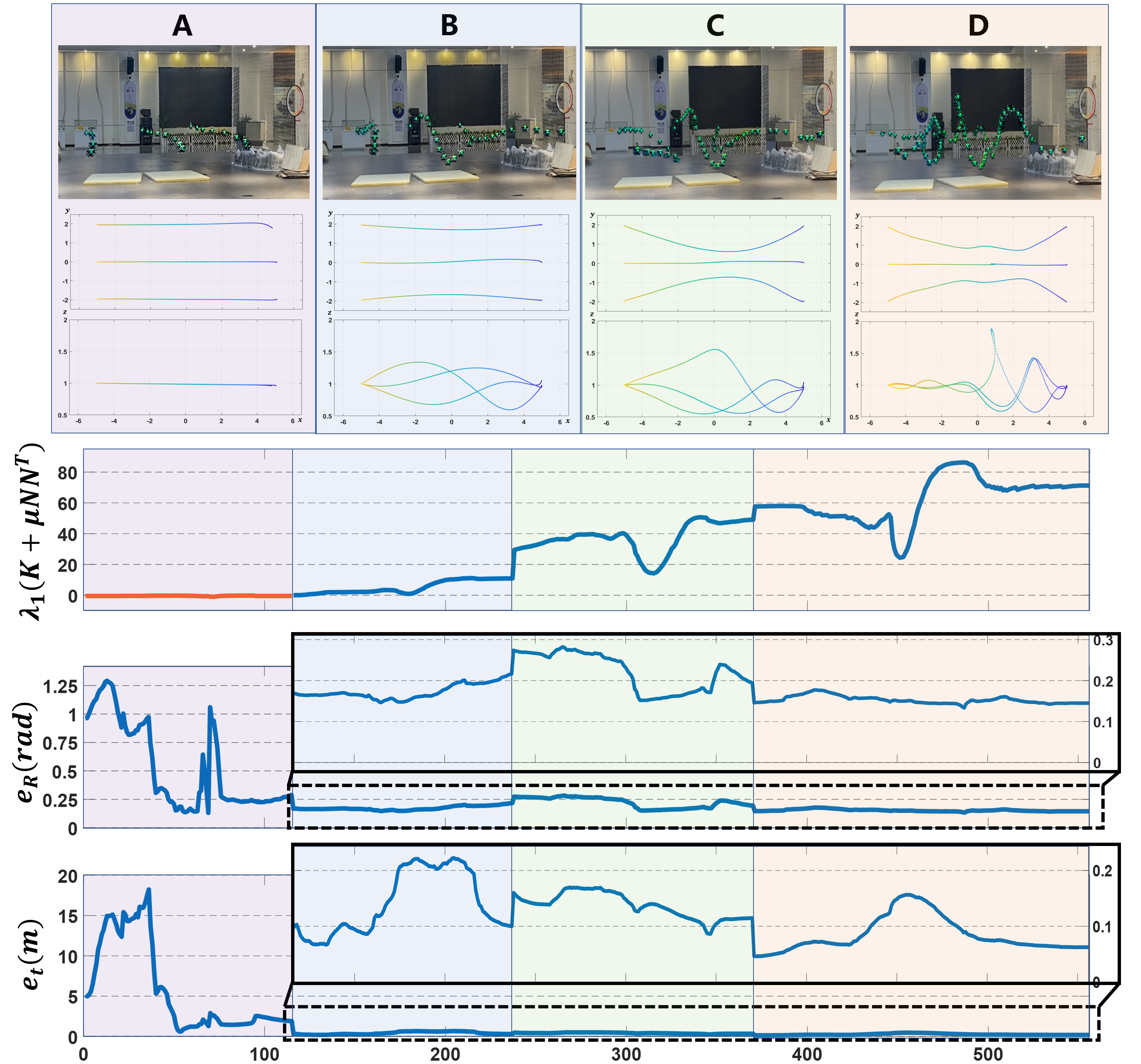}
		\caption{\label{fig:real_trajectory_planning} Demonstration of swarm trajectories, certificate eigenvalue and estimation error in four different swarm motion. 
		In the figure of certificate eigenvalue, blue curves denote that $\mathbf{K} \succeq 0$, otherwise red, and the estimation error in the motion of subplot B,C,D are zoomed for easy viewing.
		Our certificate swarm planning effectively guarantee the estimation optimality and improve the estimation accuracy in the real world.}
		\vspace{-0cm}
	\end{figure*}
	
	We conducted a real-world evaluation of our certifiable swarm planning using three quadrotors equipped with stereo cameras, navigating along 3D trajectories in a motion capture room
	TThe central robot was designated to move from  (-5,0,1) to (5,0,1).
	We set four different valuse for $\xi_{\text{max}}$ (0, 0.01, 0.05 and 0.1) and generate different swarm trajecotries for tracking.
	The snapshot of swarm trajectories are illustrated in Fig.\ref{fig:real_trajectory_planning}.
	Different colors represent swarm trajecotries with different $\xi_{\text{max}}$.
	
	While tracking the generated trajectories, the robots obtained real-world bearing measurements using blinking LED arrays and fisheye infrared cameras to perform relative pose estimation. 
	The estimation results, including the certificate eigenvalue of the obtained solution, absolute rotation error, and absolute translation error, are presented in Fig.\ref{fig:real_trajectory_planning}.
	These results clearly indicate that the performance of the estimation significantly varies with different swarm trajectories.
	For instance, when robots fly forward in parallel (as depicted in Fig.\ref{fig:real_trajectory_planning}-A), our theoretical analysis predicts that unfriendly estimation conditions due to the near-zero $\lambda_1(\hat{\mathbf{K}} + \mu \mathbf{NN\tp})$  will result in poor noise resistance.
	This prediction is confirmed in our experiments, where the certificate eigenvalue of the obtained solution falls below zero (indicated in red).
	Meanwhile, the unoptimal estimation directly leads to huge errors in rotation and translation and render the solution unusable.

\begin{figure*}[!t]
	\centering
	\includegraphics[width=1.0 \textwidth]{./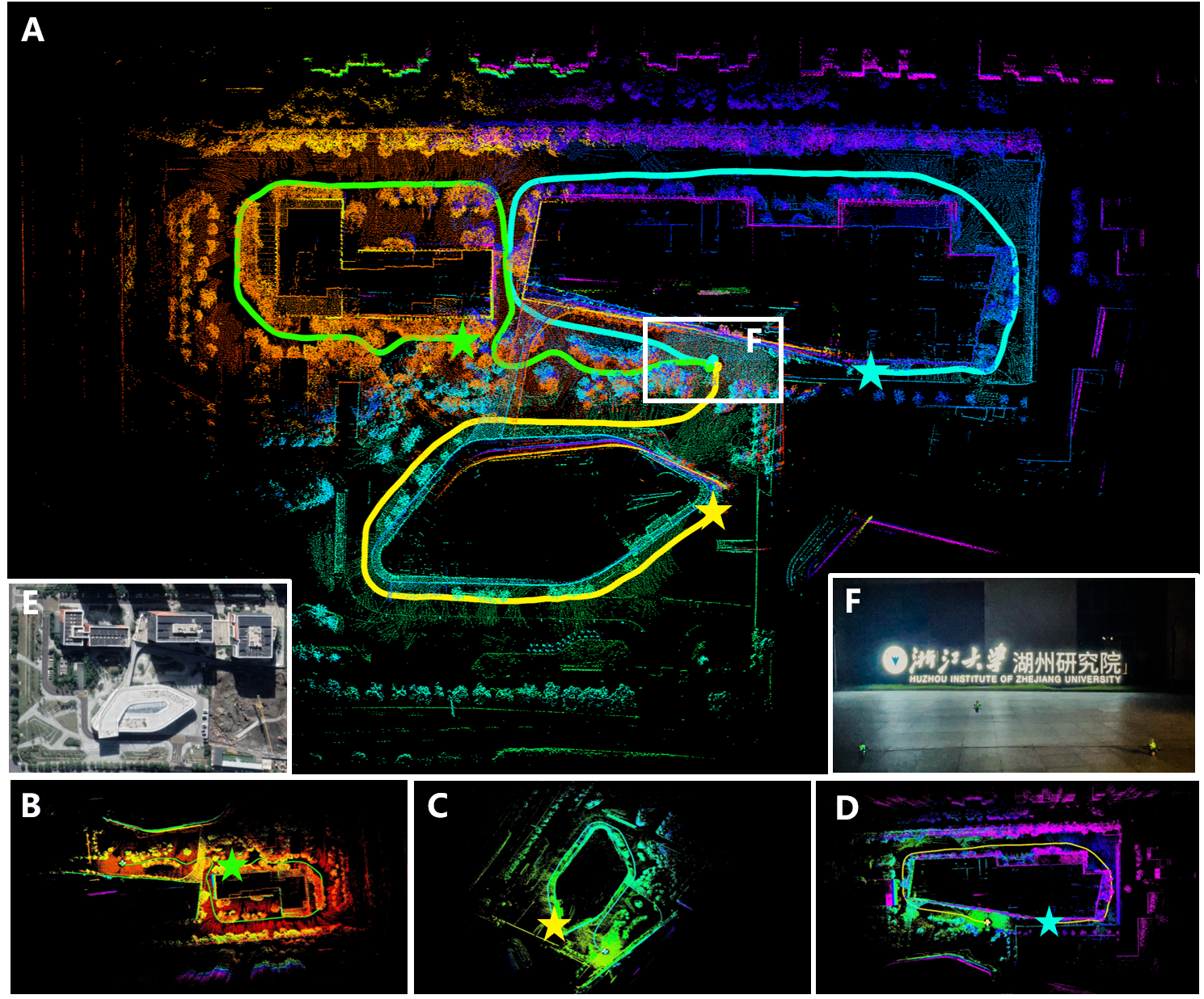}
	\caption{\label{fig:mapfusion} Large-scale map fusion with a redendezous-based strategy. 
		Robots start from three different points (denoted as colored stars) and move to construct local maps. 
		They meet each other at a designated location and fly along trajectories optimized from certifiable swarm planning.
		Generated bearing measurements and local odometry are used for certifiable mutual localization and map fusion.
		Subplot A: the fused global point map using each robot's local map and estimated relative pose by our estimator. 
		Subplot B,C,D: the local point maps of robots.
		Subplot E: a snapshot of robots in the rendezvous location.
		Subplot F: the satellite image of buildings from Google Earth.}
	\vspace{-0cm}
\end{figure*}

	In contrast, when $\xi_{\text{max}}$ increases (as shown in Fig.\ref{fig:real_trajectory_planning}-B,C,D), the certificate eigenvalues of the obtained solutions remain consistently positive, ensuring the optimality of the estimations. 
	It indicates that the impact of noise on estimation accuracy is effectively mitigated, resulting in accurate relative poses suitable for safe reference frame alignment. 
	This experiment validates that trajectories generated through certifiable swarm planning can successfully counteract real-world errors and consistently provide high-quality estimation results.
	
	Furthermore, these experimental results align with findings from our simulation experiments in Sec.\ref{subsubsec:impact_of_motion}, affirming that swarm trajectories planned with a certain $\xi_{\text{max}}$ can resist noise levels exceeding that specific $\xi_{\text{max}}$.  
	For instance, trajectories generated with $\xi_{\text{max}}=0.01$ demonstrate sufficient resistance to real-world noise. 
	This finding allows us to select a smaller $\xi_{\text{max}}$ in our swarm planning to strike a balance the needs for noise resistance and motion smoothnesss.
	This balance is crucial, as trajectories with excessively high $\xi_{\text{max}}$ tend to be overly twisted, making them difficult to track, as illustrated in Fig.\ref{fig:real_trajectory_planning}-D.

	\begin{figure*}[!t]
		\centering
		\includegraphics[width=1.0 \textwidth]{./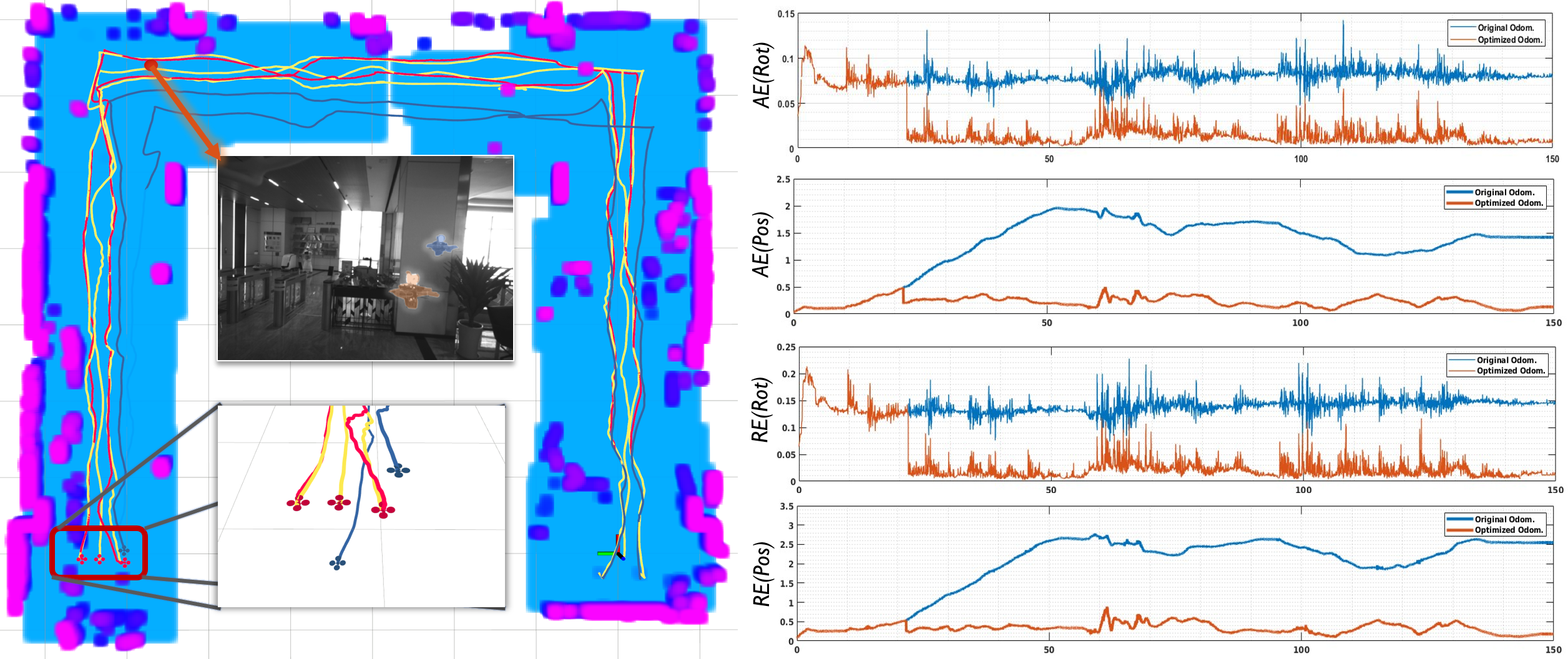}
		\caption{\label{fig:navigation} Swarm navigation to decrease accumated odometry drift.
			The left figure shows the occupancy map and robots' trajectories during 150m flight.
			The right figure shows the localization error of original and optimized odometry, respectively.}
		\vspace{-0cm}
	\end{figure*}
	\subsection{Applications}
	\label{subsec:exp_3_mapfusion}
	Both certifiable mutual localization and certifiable swarm planning are verified through respective real-world experiments.
	We further conduct two experiments to demonstrate the practical application of our algorithms, including large-scale map fusion and long-range swarm navigation.
	
	Map fusion is a crucial component in multi-robot tasks like exploration and reconstruction. 
	Typically, robots are deployed from different locations to independently conduct map building. 
	When the global starting locations are unknown, fusing these individual maps is challenging due to the absence of initial relative pose estimates.
	To address this, we employed our certifiable mutual localization method combined with a rendezvous-based strategy (\cite{gao2021meeting}).
	Three quadrotors, each equipped with LiDAR, take off from different locations and follow pre-determined trajectories.
	During movement, each robot estimates its state by a LIO (\cite{xu2022fast}) and construct local point maps.
	Upon completion, the robots reconvened at a common location. 
	There, they follow trajectories generated by our certifiable swarm planning and produce inter-robot bearing measurements.
	Then they perform certifiable mutual localization to recover the relative poses of their starting  points.
	Utilizing the solved relative poses, we fuse local maps into a global map.
	As shown in Fig.\ref{fig:mapfusion}, we effectively construct a global point cloud map (Fig.\ref{fig:mapfusion}-A) of buildings on a 400m $\times$ 300m institute campus (Fig.\ref{fig:mapfusion}-E), using individual maps from the robots (Fig.\ref{fig:mapfusion}-B,C,D).
	There is some misalignments in fused map, which might due to the accumulated odometry drifts and the estimation error.
	This experiment presents the potential of our methods in multi-robot reconstruction within large-scale environments such as underground areas and cities.
	
	Navigation is a fundamental capability of any robot system.
	For a swarm, robots odometry drift after long-range movement is unavoidable when a global localization is unavailable.
	Therefore, mutual localization is required, and its robustness and accuracy is extremely essential.
	Here, we apply our certifiable swarm planning on this mission.
	Three robots, one equiped with lidar and two equiped with stereo cameras, need to navigate through three waypoints in a clustered environments.
	The LiDAR-equipped robot runs a LIO (\cite{xu2022fast}), while the other two robots run a VIO (\cite{qin2018vins}) to estimate their states.
	Meanwhile, robots construct occupancy grid map and Euclidean signed distance field for safety.
	Following our previous work (\cite{hou2022enhanced}), we share incremental map among robots for perception consistency. 
	Then, robots perform certifiable swarm planning and certifiable mutual localzation simultaneouly during a 150m flight, as shown in Fig.\ref{fig:navigation}
	In the experiment, the optimized estimation-friendly trajectories provide a favorable condition for stable and high-quality bearing measurement generation, while the periodic mutual localization provides relative pose estimation with optimality guarantee.
	Despite obvious accumulated drift after long-range flight, the robots can effectively mitigrate the drift and still maintain a correct consesus of reference frame.
	This experiments present the plug-and-play application of our methods in a heterogeneous sensing robot swarm.
	In actual, our proposed method can be migrated to any other platforms, such as swarm of wheeled robot or legged robot, without any fine-tuning.

	\section{Conclusion}
	\label{sec:conclusion}
	Complete and systematic methods for improving the flexibility, practicality and robustness of bearing-based robot swarn is proposed in this paper.
	We firstly provide a certifiable mutual localization algorithm.
	In noise-limited cases, it can produce globally optimal solutions for relative pose estimation.
	Meanwhile, it provide a way to certify whether a candidate solution is globally optimal in generally noisy cases.
	To free the estimator from the burden of detection noise and environment interaction, we then study how to leverage swarm planning to improve mutual estimation.
	A series of theroretical conclusions are put forth, including the dual description, noise analysis, degeneration identification and eigenvalue bound, which reveal that appropriate swarm motion can guarantee estimation optimality in any noisy cases.
	Based on it, we design a complete swarm planner, considering both inter-robot visibility and noise resistance.
	Our proposed estimator, theroretical findings and swarm planner are comprehensively evaluated in both simulation and real-world experiments.
	Our estimator outperfomrs all previous methods and our planner effectively provide performance assurance for estimation.
	Thanks to the systematic improvement in this paper, our bearing-based robot swarm begin to show a strong potential in significant applications, such as large-scale map fusion and long-range swarm navigation.
	We believe that our work makes the widespread application of the bearing-based swarm a promising direction in robotics field.
	
	\bibliography{ijrr2023}

\begin{thebibliography}{53}
\providecommand{\natexlab}[1]{#1}
\providecommand{\url}[1]{\texttt{#1}}
\providecommand{\urlprefix}{URL }
\expandafter\ifx\csname urlstyle\endcsname\relax
  \providecommand{\doi}[1]{DOI:\discretionary{}{}{}#1}\else
  \providecommand{\doi}{DOI:\discretionary{}{}{}\begingroup
  \urlstyle{rm}\Url}\fi

\bibitem[{Aholt et~al.(2012)Aholt, Agarwal and Thomas}]{aholt2012qcqp}
Aholt C, Agarwal S and Thomas R (2012) A qcqp approach to triangulation.
\newblock In: \emph{Computer Vision--ECCV 2012: 12th European Conference on
  Computer Vision, Florence, Italy, October 7-13, 2012, Proceedings, Part I
  12}. Springer, pp. 654--667.

\bibitem[{Bahr et~al.(2012)Bahr, Leonard and Martinoli}]{bahr2012dynamic}
Bahr A, Leonard JJ and Martinoli A (2012) Dynamic positioning of beacon
  vehicles for cooperative underwater navigation.
\newblock In: \emph{{IEEE/RSJ} International Conference on Intelligent Robots
  and Systems (IROS)}. IEEE, pp. 3760--3767.

\bibitem[{Bishop et~al.(2010)Bishop, Fidan, Anderson, Do{\u{g}}an{\c{c}}ay and
  Pathirana}]{bishop2010optimality}
Bishop AN, Fidan B, Anderson BD, Do{\u{g}}an{\c{c}}ay K and Pathirana PN (2010)
  Optimality analysis of sensor-target localization geometries.
\newblock \emph{Automatica} 46(3): 479--492.

\bibitem[{Boyd et~al.(2004)Boyd, Boyd and Vandenberghe}]{boyd2004convex}
Boyd S, Boyd SP and Vandenberghe L (2004) \emph{Convex optimization}.
\newblock Cambridge university press.

\bibitem[{Broida and Williamson(1989)}]{broida1989comprehensive}
Broida JG and Williamson SG (1989) \emph{A comprehensive introduction to linear
  algebra}, volume~4.
\newblock Addison-Wesley Boston, MA, USA.

\bibitem[{Cano and Le~Ny(2021)}]{cano2021improving}
Cano J and Le~Ny J (2021) Improving ranging-based location estimation with
  rigidity-constrained crlb-based motion planning.
\newblock In: \emph{{IEEE/RSJ} International Conference on Intelligent Robots
  and Systems (IROS)}. IEEE, pp. 5758--5764.

\bibitem[{Chang et~al.(2011)Chang, Wang and Wang}]{chang2011vision}
Chang CH, Wang SC and Wang CC (2011) Vision-based cooperative simultaneous
  localization and tracking.
\newblock In: \emph{{IEEE} International Conference on Robotics and Automation
  (ICRA)}. IEEE, pp. 5191--5197.

\bibitem[{Choudhary et~al.(2017)Choudhary, Carlone, Nieto, Rogers, Christensen
  and Dellaert}]{choudhary2017distributed}
Choudhary S, Carlone L, Nieto C, Rogers J, Christensen HI and Dellaert F (2017)
  Distributed mapping with privacy and communication constraints: Lightweight
  algorithms and object-based models.
\newblock \emph{The International Journal of Robotics Research} 36(12):
  1286--1311.

\bibitem[{Cifuentes et~al.(2020)Cifuentes, Agarwal, Parrilo and
  Thomas}]{cifuentes2020local}
Cifuentes D, Agarwal S, Parrilo PA and Thomas RR (2020) On the local stability
  of semidefinite relaxations.
\newblock \emph{Mathematical Programming} : 1--35.

\bibitem[{Cossette et~al.(2022)Cossette, Shalaby, Saussi{\'e}, Le~Ny and
  Forbes}]{cossette2022optimal}
Cossette CC, Shalaby MA, Saussi{\'e} D, Le~Ny J and Forbes JR (2022) Optimal
  multi-robot formations for relative pose estimation using range measurements.
\newblock In: \emph{{IEEE/RSJ} International Conference on Intelligent Robots
  and Systems (IROS)}. IEEE, pp. 2431--2437.

\bibitem[{Cunningham et~al.(2013)Cunningham, Indelman and
  Dellaert}]{cunningham2013ddf}
Cunningham A, Indelman V and Dellaert F (2013) Ddf-sam 2.0: Consistent
  distributed smoothing and mapping.
\newblock In: \emph{{IEEE} International Conference on Robotics and Automation
  (ICRA)}. pp. 5220--5227.

\bibitem[{Cunningham et~al.(2010)Cunningham, Paluri and
  Dellaert}]{cunningham2010ddf}
Cunningham A, Paluri M and Dellaert F (2010) Ddf-sam: Fully distributed slam
  using constrained factor graphs.
\newblock In: \emph{{IEEE/RSJ} International Conference on Intelligent Robots
  and Systems (IROS)}. pp. 3025--3030.

\bibitem[{Eren et~al.(2004)Eren, Goldenberg, Whiteley, Yang, Morse, Anderson
  and Belhumeur}]{eren2004rigidity}
Eren T, Goldenberg O, Whiteley W, Yang YR, Morse AS, Anderson BD and Belhumeur
  PN (2004) Rigidity, computation, and randomization in network localization.
\newblock In: \emph{IEEE INFOCOM 2004}, volume~4. IEEE, pp. 2673--2684.

\bibitem[{Eriksson et~al.(2018)Eriksson, Olsson, Kahl and
  Chin}]{eriksson2018rotation}
Eriksson A, Olsson C, Kahl F and Chin TJ (2018) Rotation averaging and strong
  duality.
\newblock In: \emph{{IEEE} Conference on Computer Vision and Pattern
  Recognition}. pp. 127--135.

\bibitem[{Gao et~al.(2022)Gao, Wang, Zhong, Yang, Wang, Xu, Wang, Xu and
  Gao}]{gao2021meeting}
Gao Y, Wang Y, Zhong X, Yang T, Wang M, Xu Z, Wang Y, Xu C and Gao F (2022)
  Meeting-merging-mission: A multi-robot coordinate framework for large-scale
  communication-limited exploration.
\newblock pp. 13700--13707.

\bibitem[{Geneva et~al.(2020)Geneva, Eckenhoff, Lee, Yang and
  Huang}]{geneva2020openvins}
Geneva P, Eckenhoff K, Lee W, Yang Y and Huang G (2020) Openvins: A research
  platform for visual-inertial estimation.
\newblock In: \emph{{IEEE} International Conference on Robotics and Automation
  (ICRA)}. IEEE, pp. 4666--4672.

\bibitem[{Guo et~al.(2019)Guo, Li and Xie}]{guo2019ultra}
Guo K, Li X and Xie L (2019) Ultra-wideband and odometry-based cooperative
  relative localization with application to multi-uav formation control.
\newblock \emph{IEEE transactions on cybernetics} 50(6): 2590--2603.

\bibitem[{Guo et~al.(2017)Guo, Qiu, Meng, Xie and Teo}]{guo2017ultra}
Guo K, Qiu Z, Meng W, Xie L and Teo R (2017) Ultra-wideband based cooperative
  relative localization algorithm and experiments for multiple unmanned aerial
  vehicles in gps denied environments.
\newblock \emph{International Journal of Micro Air Vehicles} 9(3): 169--186.

\bibitem[{Hammel et~al.(1989)Hammel, Liu, Hilliard and
  Gong}]{hammel1989optimal}
Hammel S, Liu P, Hilliard E and Gong K (1989) Optimal observer motion for
  localization with bearing measurements.
\newblock \emph{Computers \& Mathematics with Applications} 18(1-3): 171--180.

\bibitem[{Hou et~al.(2022)Hou, Zhou, Gan and Gao}]{hou2022enhanced}
Hou J, Zhou X, Gan Z and Gao F (2022) Enhanced decentralized autonomous aerial
  robot teams with group planning.
\newblock \emph{{IEEE} Robotics and Automation Letters} 7(4): 9240--9247.

\bibitem[{Jang et~al.(2021)Jang, Oh, Lee and Kim}]{jang2021multirobot}
Jang Y, Oh C, Lee Y and Kim HJ (2021) Multirobot collaborative monocular slam
  utilizing rendezvous.
\newblock \emph{{IEEE} Transactions on Robotics} 37(5): 1469--1486.

\bibitem[{Jiang et~al.(2019)Jiang, Anderson and Hmam}]{jiang20193}
Jiang B, Anderson BD and Hmam H (2019) 3-d relative localization of mobile
  systems using distance-only measurements via semidefinite optimization.
\newblock \emph{IEEE Transactions on Aerospace and Electronic Systems} 56(3):
  1903--1916.

\bibitem[{Katz et~al.(2007)Katz, Tal and Basri}]{katz2007direct}
Katz S, Tal A and Basri R (2007) Direct visibility of point sets.
\newblock In: \emph{ACM SIGGRAPH}. pp. 24--es.

\bibitem[{Lasserre(2001)}]{lasserre2001global}
Lasserre JB (2001) Global optimization with polynomials and the problem of
  moments.
\newblock \emph{SIAM Journal on optimization} 11(3): 796--817.

\bibitem[{Le~Ny and Chauvi{\`e}re(2018)}]{le2018localizability}
Le~Ny J and Chauvi{\`e}re S (2018) Localizability-constrained deployment of
  mobile robotic networks with noisy range measurements.
\newblock In: \emph{2018 Annual American Control Conference (ACC)}. IEEE, pp.
  2788--2793.

\bibitem[{Li et~al.(2022{\natexlab{a}})Li, Ning, He, Lee and
  Zhao}]{li2022three}
Li J, Ning Z, He S, Lee CH and Zhao S (2022{\natexlab{a}}) Three-dimensional
  bearing-only target following via observability-enhanced helical guidance.
\newblock \emph{{IEEE} Transactions on Robotics} 46(3): 479--492.

\bibitem[{Li et~al.(2022{\natexlab{b}})Li, Lam and Sun}]{li20223}
Li M, Lam TL and Sun Z (2022{\natexlab{b}}) 3-d inter-robot relative
  localization via semidefinite optimization.
\newblock \emph{{IEEE} Robotics and Automation Letters} 7(4): 10081--10088.

\bibitem[{Li et~al.(2020)Li, Liang, Luo, Qian and Lam}]{li2020robot}
Li M, Liang G, Luo H, Qian H and Lam TL (2020) Robot-to-robot relative pose
  estimation based on semidefinite relaxation optimization.
\newblock In: \emph{{IEEE/RSJ} International Conference on Intelligent Robots
  and Systems (IROS)}. pp. 4491--4498.

\bibitem[{Martin and Maes(1979)}]{martin1979multivariate}
Martin N and Maes H (1979) Multivariate analysis.
\newblock \emph{London, UK: Academic} .

\bibitem[{Martinelli et~al.(2005)Martinelli, Pont and
  Siegwart}]{martinelli2005multi}
Martinelli A, Pont F and Siegwart R (2005) Multi-robot localization using
  relative observations.
\newblock In: \emph{{IEEE} International Conference on Robotics and Automation
  (ICRA)}. IEEE, pp. 2797--2802.

\bibitem[{Nguyen et~al.(2020)Nguyen, Mohta, Taylor and
  Kumar}]{nguyen2020vision}
Nguyen T, Mohta K, Taylor CJ and Kumar V (2020) Vision-based multi-mav
  localization with anonymous relative measurements using coupled probabilistic
  data association filter.
\newblock In: \emph{{IEEE} International Conference on Robotics and Automation
  (ICRA)}. IEEE, pp. 3349--3355.

\bibitem[{Nguyen and Xie(2023)}]{nguyen2023relative}
Nguyen TH and Xie L (2023) Relative transformation estimation based on fusion
  of odometry and uwb ranging data.
\newblock \emph{{IEEE} Transactions on Robotics} .

\bibitem[{Papalia et~al.(2022)Papalia, Thumma and
  Leonard}]{papalia2022prioritized}
Papalia A, Thumma N and Leonard J (2022) Prioritized planning for cooperative
  range-only localization in multi-robot networks.
\newblock In: \emph{{IEEE} International Conference on Robotics and Automation
  (ICRA)}. IEEE, pp. 10753--10759.

\bibitem[{Qin et~al.(2018)Qin, Li and Shen}]{qin2018vins}
Qin T, Li P and Shen S (2018) Vins-mono: A robust and versatile monocular
  visual-inertial state estimator.
\newblock \emph{{IEEE} Transactions on Robotics} 34(4): 1004--1020.

\bibitem[{Rosen et~al.(2019)Rosen, Carlone, Bandeira and Leonard}]{rosen2019se}
Rosen DM, Carlone L, Bandeira AS and Leonard JJ (2019) Se-sync: A certifiably
  correct algorithm for synchronization over the special euclidean group.
\newblock \emph{The International Journal of Robotics Research} 38(2-3):
  95--125.

\bibitem[{Roth(1981)}]{roth1981rigid}
Roth B (1981) Rigid and flexible frameworks.
\newblock \emph{The American Mathematical Monthly} 88(1): 6--21.

\bibitem[{Schmuck and Chli(2019)}]{schmuck2019ccm}
Schmuck P and Chli M (2019) Ccm-slam: Robust and efficient centralized
  collaborative monocular simultaneous localization and mapping for robotic
  teams.
\newblock \emph{Journal of Field Robotics} 36(4): 763--781.

\bibitem[{Walls et~al.(2015)Walls, Chaves, Galceran and
  Eustice}]{walls2015belief}
Walls JM, Chaves SM, Galceran E and Eustice RM (2015) Belief space planning for
  underwater cooperative localization.
\newblock In: \emph{{IEEE/RSJ} International Conference on Intelligent Robots
  and Systems (IROS)}. IEEE, pp. 2264--2271.

\bibitem[{Walter et~al.(2018)Walter, Saska and Franchi}]{walter2018fast}
Walter V, Saska M and Franchi A (2018) Fast mutual relative localization of
  uavs using ultraviolet led markers.
\newblock In: \emph{2018 International Conference on Unmanned Aircraft Systems
  (ICUAS)}. IEEE, pp. 1217--1226.

\bibitem[{Walter et~al.(2019)Walter, Staub, Franchi and
  Saska}]{walter2019uvdar}
Walter V, Staub N, Franchi A and Saska M (2019) Uvdar system for visual
  relative localization with application to leader--follower formations of
  multirotor uavs.
\newblock \emph{{IEEE} Robotics and Automation Letters} 4(3): 2637--2644.

\bibitem[{Walter et~al.(2023)Walter, Vrba, Licea and
  Saska}]{walter2023distributed}
Walter V, Vrba M, Licea DB and Saska M (2023) Distributed formation-enforcing
  control for uavs robust to observation noise in relative pose measurements.
\newblock \emph{arXiv preprint arXiv:2304.03057} .

\bibitem[{Wang et~al.(2023)Wang, Wen, Cao, Xu and Gao}]{wang2023bearing}
Wang Y, Wen X, Cao Y, Xu C and Gao F (2023) Bearing-based relative localization
  for robotic swarm with partially mutual observations.
\newblock \emph{{IEEE} Robotics and Automation Letters} 8(4): 2142--2149.

\bibitem[{Wang et~al.(2022{\natexlab{a}})Wang, Wen, Yin, Xu, Cao and
  Gao}]{wang2022certifiably}
Wang Y, Wen X, Yin L, Xu C, Cao Y and Gao F (2022{\natexlab{a}}) Certifiably
  optimal mutual localization with anonymous bearing measurements.
\newblock \emph{{IEEE} Robotics and Automation Letters} 7(4): 9374--9381.

\bibitem[{Wang et~al.(2022{\natexlab{b}})Wang, Zhou, Xu and
  Gao}]{Wang2022geometrically}
Wang Z, Zhou X, Xu C and Gao F (2022{\natexlab{b}}) Geometrically constrained
  trajectory optimization for multicopters.
\newblock \emph{{IEEE} Transactions on Robotics} 38(5): 3259--3278.

\bibitem[{Xu et~al.(2022)Xu, Cai, He, Lin and Zhang}]{xu2022fast}
Xu W, Cai Y, He D, Lin J and Zhang F (2022) Fast-lio2: Fast direct
  lidar-inertial odometry.
\newblock \emph{{IEEE} Transactions on Robotics} 38(4): 2053--2073.

\bibitem[{Xun et~al.(2023)Xun, Huang, Li, Xu, Gao and Cao}]{xun2023crepes}
Xun Z, Huang J, Li Z, Xu C, Gao F and Cao Y (2023) Crepes: Cooperative relative
  pose estimation towards real-world multi-robot systems.
\newblock In: \emph{{IEEE/RSJ} International Conference on Intelligent Robots
  and Systems (IROS)}. pp. 4979--4985.

\bibitem[{Zelazo et~al.(2012)Zelazo, Franchi, Allg{\"o}wer, B{\"u}lthoff and
  Giordano}]{zelazo2012rigidity}
Zelazo D, Franchi A, Allg{\"o}wer F, B{\"u}lthoff HH and Giordano PR (2012)
  Rigidity maintenance control for multi-robot systems.
\newblock \emph{Robotics, Science, and Systems (RSS)} 60.

\bibitem[{Zelazo et~al.(2015)Zelazo, Franchi, B{\"u}lthoff and
  Robuffo~Giordano}]{zelazo2015decentralized}
Zelazo D, Franchi A, B{\"u}lthoff HH and Robuffo~Giordano P (2015)
  Decentralized rigidity maintenance control with range measurements for
  multi-robot systems.
\newblock \emph{The International Journal of Robotics Research} 34(1):
  105--128.

\bibitem[{Zhang and Scaramuzza(2018)}]{zhang2018atutorial}
Zhang Z and Scaramuzza D (2018) A tutorial on quantitative trajectory
  evaluation for visual(-inertial) odometry.
\newblock In: \emph{{IEEE/RSJ} International Conference on Intelligent Robots
  and Systems (IROS)}. pp. 7244--7251.

\bibitem[{Zhou and Roumeliotis(2011)}]{zhou2011multirobot}
Zhou K and Roumeliotis SI (2011) Multirobot active target tracking with
  combinations of relative observations.
\newblock \emph{{IEEE} Transactions on Robotics} 27(4): 678--695.

\bibitem[{Zhou et~al.(2021)Zhou, Zhu, Zhou, Xu and Gao}]{zhou2021ego}
Zhou X, Zhu J, Zhou H, Xu C and Gao F (2021) Ego-swarm: A fully autonomous and
  decentralized quadrotor swarm system in cluttered environments.
\newblock In: \emph{{IEEE} International Conference on Robotics and Automation
  (ICRA)}. pp. 4101--4107.

\bibitem[{Zhou and Roumeliotis(2012)}]{zhou2012determining}
Zhou XS and Roumeliotis SI (2012) Determining 3-d relative transformations for
  any combination of range and bearing measurements.
\newblock \emph{{IEEE} Transactions on Robotics} 29(2): 458--474.

\bibitem[{Ziegler et~al.(2021)Ziegler, Karrer, Schmuck and
  Chli}]{ziegler2021distributed}
Ziegler T, Karrer M, Schmuck P and Chli M (2021) Distributed formation
  estimation via pairwise distance measurements.
\newblock \emph{{IEEE} Robotics and Automation Letters} 6(2): 3017--3024.

\end{thebibliography}
	\section{Appendix A: Proof of Theorem \ref{theorem:1}}
	\label{appendix_a}
	
	Differentiating (\ref{equ:dual0}) gives the necessary optimal condition
	\begin{align*}
		(\mathbf{M} - \mathbf{\Lambda^*}) \mathbf{\Theta_R^{*\mathrm{T}}} = \mathbf{0}.
	\end{align*}
	It states that the rows of a local minimizer $\mathbf{\Theta_R^*}$ lies in the nullpace of the matrix $\mathbf{M} - \mathbf{\Lambda^*}$.
	Expanding it gives
	\begin{align*}
		\mathbf{\Lambda}_i^* \mathbf{R}_i^* \tp= \sum_{j=1}^N \mathbf{M}_{ij} \mathbf{R}_j^*\tp
	\end{align*}
	Thus, we can obtain $\mathbf{\Lambda}_i^*$ in closed-form $\mathbf{\Lambda}_i^* = \sum_{j=1}^N \mathbf{M}_{ij} \mathbf{R}^*_j\tp \mathbf{R}_i^*$.
	
	Next, the standard duality theory reveals that the problem (\ref{equ:r3}) gives a lower bound on the original problem (\ref{equ:pro}). 
	If $\mathbf{M-\Lambda^*} \succeq 0$, then $\mathbf{\Lambda}^*$ is feasible in (\ref{equ:r3}).
	According to $\mathbf{\Lambda}_i^* = \sum_{j=1}^N \mathbf{M}_{ij} \mathbf{R}^*_j\tp \mathbf{R}_i^*$, we have 
	\begin{align*}
		\textup{Tr}(\mathbf{\Lambda^*}) = \textup{Tr}\left(\mathbf{M \Theta_R^*\tp \Theta_R^*}\right) = \textup{Tr}\left(\mathbf{\Theta_R^* M \Theta_R^*\tp}\right).
	\end{align*}
	It implies that there is a zero duality gap between (\ref{equ:r3}) and (\ref{equ:pro}). 
	
	Then, since $\mathbf{M-\Lambda^*}\succeq 0 $, for any $\mathbf{\Theta_R} \in \mathcal{P_R}$, we have
	\begin{align*}
		0 &\leq \textup{Tr}\left(\mathbf{\Theta_R (M-\Lambda^*) \Theta_R^\mathrm{T}}\right)  \\
		&= \textup{Tr}\left(\mathbf{ \Theta_R M \Theta_R^\mathrm{T}}\right)-\textup{Tr}\left(\mathbf{\Lambda^*}\right)  \\
		&= \textup{Tr}\left(\mathbf{\Theta_R M \Theta_R^\mathrm{T}}\right) - \textup{Tr}\left(\mathbf{\Theta_R^* M \Theta_R^*\tp}\right).
	\end{align*}
	It means that $\mathbf{\Theta_R^*}$ is a global minimum for problem (\ref{equ:pro}).

	\section{Appendix B: Proof of Theorem \ref{theorem:2}}
	\label{appendix_b}
	Splitting $\Delta \mathbf{Q}_s$ into two matrices as $\Delta \mathbf{Q}_s = \mathbf{W}_1 + \mathbf{W}_2$, where
	\begin{align*}
		\mathbf{W}_1 &= 
		\begin{bmatrix}
			-\hat{\varphi}_{ij} \delta_{ji}^\mathrm{T} & \hat{\varphi}_{ij} \delta_{ji}^\mathrm{T} \\
			\delta_{ji} \hat{\varphi}_{ij}^\mathrm{T} & -\delta_{ji} \hat{\varphi}_{ij}^\mathrm{T}
		\end{bmatrix} \\
		\mathbf{W}_2 &= 
		\begin{bmatrix}
			-\delta_{ij} \varphi_{ji}^\mathrm{T}  & \delta_{ij} \varphi_{ji}^\mathrm{T}  \\
			\varphi_{ji} \delta_{ij}^\mathrm{T} & -\varphi_{ji} \delta_{ij}^\mathrm{T}
		\end{bmatrix}.
	\end{align*}
	We perform eigenvalue decomposition for $\mathbf{W}_1$ and $\mathbf{W}_2$ respectively.
	For $\mathbf{W}_1$, we have
	\begin{align*}
		\mathbf{W}_1 
		\begin{bmatrix}
			\hat{\varphi}_{ij} \\
			-\frac{\delta_{ji}}{\xi_{ji}}
		\end{bmatrix}  
		&= 
		\begin{bmatrix}
			-\hat{\varphi}_{ij} (\delta_{ji} \cdot \hat{\varphi}_{ij}) - \hat{\varphi}_{ij} (\delta_{ji} \cdot \frac{\delta_{ji}}{\xi_{ji}}) \\
			\delta_{ji} (\hat{\varphi}_{ij} \cdot \hat{\varphi}_{ij}) + \delta_{ji} (\hat{\varphi}_{ij} \cdot \frac{\delta_{ji}}{\xi_{ji}})
		\end{bmatrix} \\
		&=
		\begin{bmatrix}
			(- \xi_{ji} - \delta_{ji} \cdot \hat{\varphi}_{ij})\hat{\varphi}_{ij} \\
			\delta_{ji} +  (\hat{\varphi}_{ij} \cdot \frac{\delta_{ji}}{\xi_{ji}}) \delta_{ji}
		\end{bmatrix}  \\
		&= (- \xi_{ji} - \delta_{ji} \cdot \hat{\varphi}_{ij}) 
		\begin{bmatrix}
			\hat{\varphi}_{ij} \\
			- \frac{\delta_{ji}}{\xi_{ji}}
		\end{bmatrix}  
	\end{align*}
	\begin{align*}
		\mathbf{W}_1 
		\begin{bmatrix}
			\hat{\varphi}_{ij} \\
			\frac{\delta_{ji}}{\xi_{ji}}
		\end{bmatrix}  
		&= 
		\begin{bmatrix}
			-\hat{\varphi}_{ij}  (\delta_{ji} \cdot \hat{\varphi}_{ij}) + \hat{\varphi}_{ij}  (\delta_{ji} \cdot \frac{\delta_{ji}}{\xi_{ji}}) \\
			\delta_{ji} (\hat{\varphi}_{ij} \cdot \hat{\varphi}_{ij}) - \delta_{ji} (\hat{\varphi}_{ij} \cdot \frac{\delta_{ji}}{\xi_{ji}})
		\end{bmatrix} \\
		&=
		\begin{bmatrix}
			(\xi_{ji} - \delta_{ji} \cdot \hat{\varphi}_{ij})\hat{\varphi}_{ij} \\
			\delta_{ji} - (\hat{\varphi}_{ij} \cdot \frac{\delta_{ji}}{\xi_{ji}}) \delta_{ji}
		\end{bmatrix}  \\
		&= (\xi_{ji} - \delta_{ji} \cdot \hat{\varphi}_{ij}) 
		\begin{bmatrix}
			\hat{\varphi}_{ij} \\
			\frac{\delta_{ji}}{\xi_{ji}}
		\end{bmatrix}  
	\end{align*}
	\begin{align*}
		\mathbf{W}_1 
		\begin{bmatrix}
			\phi_{ij} \\
			-\phi_{ij}
		\end{bmatrix}  
		&= 
		\begin{bmatrix}
			-\hat{\varphi}_{ij}  (\delta_{ji} \cdot \phi_{ij}) - \hat{\varphi}_{ij}  (\delta_{ji} \cdot \phi_{ij}) \\
			\delta_{ji} (\hat{\varphi}_{ij} \cdot \phi_{ij}) + \delta_{ji} (\hat{\varphi}_{ij} \cdot \phi_{ij})
		\end{bmatrix} \\
		&=
		\begin{bmatrix}
			0\\
			0
		\end{bmatrix}  = 
		0
		\begin{bmatrix}
			\phi_{ij} \\
			-\phi_{ij}
		\end{bmatrix}  
	\end{align*}
	Thus, we obtain that $\mathbf{W}_1 = \mathbf{V}_1 \mathbf{U}_1 \mathbf{V}_1^\mathrm{T}$ where
	\begin{align*}
		\mathbf{V}_1 &= 
		\left[\begin{array}{c|c|c|c|c|c}
			\hat{\varphi}_{ij}           & \hat{\varphi}_{ij}            &  \phi_{ij}   &  e_1 & e_2 & e_3 \\
			-\frac{\delta_{ji}}{\xi_{ji}} & \frac{\delta_{ji}}{\xi_{ji}} &  -\phi_{ij}  &  e_1 & e_2 & e_3 \\
		\end{array}\right], \\
		\mathbf{U}_1 =&\ \textup{Diag}([-\xi_{ji} - \delta_{ji} \cdot \hat{\varphi}_{ij} , \xi_{ji} - \delta_{ji} \cdot \hat{\varphi}_{ij},\mathbf{0}_{1\times4}])
	\end{align*}
	Similar to $\mathbf{W}_1$, for $\mathbf{W}_2$ we have
	\begin{align*}
		\mathbf{W}_2 
		\begin{bmatrix}
			-\frac{\delta_{ij}}{\xi_{ij}}\\
			\varphi_{ji} 
		\end{bmatrix}  
		&= 
		\begin{bmatrix}
			\delta_{ij} (\varphi_{ji} \cdot \frac{\delta_{ij}}{\xi_{ij}}) + \delta_{ij} (\varphi_{ji} \cdot \varphi_{ji}) \\
			-\varphi_{ji} (\delta_{ij} \cdot \frac{\delta_{ij}}{\xi_{ij}}) - \varphi_{ji} (\delta_{ij}\cdot \varphi_{ji})
		\end{bmatrix} \\
		&=
		\begin{bmatrix}
			\delta_{ij} + (\varphi_{ji} \cdot \frac{\delta_{ij}}{\xi_{ij}}) \delta_{ij}  \\
			(-\xi_{ij} - \delta_{ij} \cdot \varphi_{ji}) \varphi_{ji} 
		\end{bmatrix}  \\
		&= (-\xi_{ij} - \delta_{ij} \cdot \varphi_{ji})
		\begin{bmatrix}
			-\frac{\delta_{ij}}{\xi_{ij}}\\
			\varphi_{ji} 
		\end{bmatrix} 
	\end{align*}
	\begin{align*}
		\mathbf{W}_2 
		\begin{bmatrix}
			\frac{\delta_{ij}}{\xi_{ij}}\\
			\varphi_{ji} 
		\end{bmatrix}  
		&= 
		\begin{bmatrix}
			-\delta_{ij} (\varphi_{ji} \cdot \frac{\delta_{ij}}{\xi_{ij}}) + \delta_{ij} (\varphi_{ji} \cdot \varphi_{ji}) \\
			\varphi_{ji} (\delta_{ij} \cdot \frac{\delta_{ij}}{\xi_{ij}}) - \varphi_{ji} (\delta_{ij}\cdot \varphi_{ji})
		\end{bmatrix} \\
		&=
		\begin{bmatrix}
			\delta_{ij} - (\varphi_{ji} \cdot \frac{\delta_{ij}}{\xi_{ij}}) \delta_{ij}  \\
			(\xi_{ij} - \delta_{ij} \cdot \varphi_{ji}) \varphi_{ji} 
		\end{bmatrix}  \\
		&= (\xi_{ij} - \delta_{ij} \cdot \varphi_{ji})
		\begin{bmatrix}
			\frac{\delta_{ij}}{\xi_{ij}}\\
			\varphi_{ji} 
		\end{bmatrix} 
	\end{align*}
	\begin{align*}
		\mathbf{W}_2 
		\begin{bmatrix}
			\phi_{ji}\\
			-\phi_{ji} 
		\end{bmatrix}  
		&= 
		\begin{bmatrix}
			-\delta_{ij} (\varphi_{ji} \cdot \phi_{ji}) - \delta_{ij} (\varphi_{ji} \cdot \phi_{ji}) \\
			\varphi_{ji} (\delta_{ij} \cdot \phi_{ji}) + \varphi_{ji} (\delta_{ij}\cdot \phi_{ji})
		\end{bmatrix} \\
		&=
		\begin{bmatrix}
			0 \\
			0
		\end{bmatrix} = 0
		\begin{bmatrix}
			\phi_{ji}\\
			-\phi_{ji}  
		\end{bmatrix} 
	\end{align*}
	Thus, we obtain that $\mathbf{W}_2 = \mathbf{V}_2 \mathbf{U}_2 \mathbf{V}_2^\mathrm{T}$ where
	\begin{align*}
		\mathbf{V}_2 &= 
		\left[\begin{array}{c|c|c|c|c|c}
			-\frac{\delta_{ij}}{\xi_{ij}} & \frac{\delta_{ij}}{\xi_{ij}}  &  \phi_{ji}   &  e_1 & e_2 & e_3 \\
			\varphi_{ji}                 & \varphi_{ji}                  &  -\phi_{ji}  &  e_1 & e_2 & e_3 \\
		\end{array}\right], \\
		\mathbf{U}_2 =&\ \textup{Diag}([-\xi_{ij} - \delta_{ij} \cdot \varphi_{ji}, \xi_{ij} - \delta_{ij} \cdot \varphi_{ji},\mathbf{0}_{1\times4}])
	\end{align*}
	
	\section{Appendix C: Proof of Theorem \ref{theorem:3}}
	\label{appendix_c}
	For $d = 0,1,2$, $\mathbf{J}_{[6],S}^{(d)}$ can be written as 
	\begin{align*}
		\mathbf{J}_{[6],S}^{(d)} = 
		\begin{bmatrix}
			\mathbf{H}_d & \mathbf{P}_d \\
			\mathbf{S}_3 & -\mathbf{S}_3
		\end{bmatrix}, 
		\mathbf{S}_3=
		\begin{bmatrix}
			\varphi^{\tau_i} \\
			\varphi^{\tau_j} \\
			\varphi^{\tau_k}
		\end{bmatrix},
	\end{align*}
	where 
	\begin{align*}
		\mathbf{H}_d = 
		\begin{bmatrix}
			(*)_{d\times3} \\
			(**)_{(3-d)\times3}
		\end{bmatrix},
		\mathbf{P}_d = 
		\begin{bmatrix}
			(*)_{d\times3} \\
			-(**)_{(3-d)\times3}
		\end{bmatrix}.
	\end{align*}
	Here, each row in $(*)_{d\times3}$ is sampled rows from $\mu\mathbf{N}\tp$, and each row in $(**)_{(3-d)\times3}$ is sampled from $\{\varphi^\tau\}$.
	Thus, we have 
	\begin{align*}
		\mathbf{H}_d + \mathbf{P}_d = 
		\begin{bmatrix}
			2(*)_{d\times3} \\
			\mathbf{0}_{(3-d)\times3}
		\end{bmatrix}.
	\end{align*}
	Then, we consider the following lemma
	\begin{lemma}
		\label{lemma:1} $\mathbf{A}, \mathbf{B}, \mathbf{C}, \mathbf{D}$ are $n \times n$ matrices, if $\mathbf{CD} = \mathbf{DC}$, then
		\begin{align*}
			\textup{det}\left(\begin{bmatrix}
				\mathbf{A} & \mathbf{B} \\
				\mathbf{C} & \mathbf{D}
			\end{bmatrix}\right) =  \textup{det}(\mathbf{AD-BC}).
		\end{align*}
	\end{lemma}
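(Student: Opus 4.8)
The plan is to reduce the identity to the Schur-complement formula and then remove the invertibility hypothesis by a polynomial-continuity argument. First I would assume $\mathbf{D}$ is invertible. Applying the standard block-determinant (Schur complement) identity for the lower-right block gives $\textup{det}\left(\begin{bmatrix} \mathbf{A} & \mathbf{B} \\ \mathbf{C} & \mathbf{D} \end{bmatrix}\right) = \textup{det}(\mathbf{D})\,\textup{det}(\mathbf{A} - \mathbf{B}\mathbf{D}^{-1}\mathbf{C})$. Because both factors are $n\times n$, I would absorb $\textup{det}(\mathbf{D})$ into the second factor to obtain $\textup{det}\left((\mathbf{A} - \mathbf{B}\mathbf{D}^{-1}\mathbf{C})\mathbf{D}\right) = \textup{det}(\mathbf{A}\mathbf{D} - \mathbf{B}\mathbf{D}^{-1}\mathbf{C}\mathbf{D})$.

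The key step is to invoke the commutation hypothesis. Since $\mathbf{C}\mathbf{D} = \mathbf{D}\mathbf{C}$, left-multiplying by $\mathbf{D}^{-1}$ yields $\mathbf{D}^{-1}\mathbf{C}\mathbf{D} = \mathbf{C}$, so the cross term collapses to $\mathbf{B}\mathbf{C}$ and the expression becomes exactly $\textup{det}(\mathbf{A}\mathbf{D} - \mathbf{B}\mathbf{C})$, which is the desired equality for invertible $\mathbf{D}$.

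To handle singular $\mathbf{D}$, I would use a density argument. Set $\mathbf{D}_t = \mathbf{D} + t\mathbf{I}_n$. Perturbing only the diagonal block preserves commutation, since $\mathbf{C}(\mathbf{D} + t\mathbf{I}_n) = \mathbf{C}\mathbf{D} + t\mathbf{C} = \mathbf{D}\mathbf{C} + t\mathbf{C} = (\mathbf{D} + t\mathbf{I}_n)\mathbf{C}$, and $\textup{det}(\mathbf{D}_t)$ is a nonzero polynomial in $t$, hence nonzero for all but finitely many $t$. Applying the invertible case to each such $t$, both $\textup{det}\left(\begin{bmatrix} \mathbf{A} & \mathbf{B} \\ \mathbf{C} & \mathbf{D}_t \end{bmatrix}\right)$ and $\textup{det}(\mathbf{A}\mathbf{D}_t - \mathbf{B}\mathbf{C})$ are polynomials in $t$ that agree on a cofinite set, so they agree identically; evaluating at $t = 0$ recovers the original claim.

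The main obstacle is precisely the singular case, because the clean Schur-complement manipulation requires $\mathbf{D}^{-1}$. I expect the polynomial-continuity trick to be the cleanest route, and I would take care to verify that perturbing only $\mathbf{D}$ leaves both the commutation relation and the right-hand side $\textup{det}(\mathbf{A}\mathbf{D} - \mathbf{B}\mathbf{C})$ intact as functions of $t$, so that the limit $t \to 0$ is legitimate.
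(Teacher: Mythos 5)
Your proof is correct and complete. One thing to be aware of: the paper does not actually prove this lemma --- it is stated bare in Appendix~C as a known fact and immediately applied to the matrices $\mathbf{J}_{[6],S}^{(d)}$ (where the hypothesis holds trivially, since the bottom blocks are $\mathbf{S}_3$ and $-\mathbf{S}_3$). The statement is the classical commuting-block determinant formula (see, e.g., Silvester's 2000 note on determinants of block matrices), and your argument is the standard proof of it: the Schur-complement identity $\textup{det}(\mathbf{D})\,\textup{det}(\mathbf{A}-\mathbf{B}\mathbf{D}^{-1}\mathbf{C})=\textup{det}(\mathbf{A}\mathbf{D}-\mathbf{B}\mathbf{D}^{-1}\mathbf{C}\mathbf{D})$, collapsed via $\mathbf{D}^{-1}\mathbf{C}\mathbf{D}=\mathbf{C}$, followed by the polynomial-density extension to singular $\mathbf{D}$. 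The delicate points are all handled: you verified that the perturbation $\mathbf{D}_t=\mathbf{D}+t\mathbf{I}_n$ preserves the commutation hypothesis, that $\textup{det}(\mathbf{D}_t)$ is a nonzero (indeed monic, degree-$n$) polynomial in $t$ and so vanishes at only finitely many $t$, and that both sides of the identity are polynomials in $t$ agreeing on a cofinite subset of $\mathbb{R}$, hence identically, which makes the evaluation at $t=0$ legitimate. In short, your proposal supplies exactly the justification the paper omits, and it does so by the standard route; nothing is missing.
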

	Based on the lemma, we can obtain that
	\begin{align*}
		\textup{det}(\mathbf{J}_{[6],S}^{(d)}) &= \textup{det}(-\mathbf{H}_d \mathbf{S}_3 - \mathbf{P}_d \mathbf{S}_3) \\
		&= -\textup{det}\left((\mathbf{H}_d + \mathbf{P}_d) \mathbf{S}_3\right) \\
		&= -\textup{det}(
		\begin{bmatrix}
			2(*)_{d\times3} \\
			\mathbf{0}_{(3-d)\times3}
		\end{bmatrix}) \textup{det}(\mathbf{S}_3) \\
		&= 0.
	\end{align*}

	Next, for $d = 3$, $\mathbf{J}_{[6],S}^{(3)}$ can be written as
	\begin{align*}
		\mathbf{J}_{[6],S}^{(3)} = 
		\begin{bmatrix}
			\sqrt{\mu}\mathbf{I}_3 & \sqrt{\mu}\mathbf{I}_3 \\
			\mathbf{S}_3 & -\mathbf{S}_3
		\end{bmatrix},
		\mathbf{S}_3=
		\begin{bmatrix}
			\varphi^{\tau_i} \\
			\varphi^{\tau_j} \\
			\varphi^{\tau_k}
		\end{bmatrix}.
	\end{align*}
	Similar to $d = 0,1,2$, based on the lemma, we can also obtain that
	\begin{align*}
		\textup{det}(\mathbf{J}_{[6],S}^{(3)}) &= -2\textup{det}(\sqrt{\mu}\mathbf{I}_3) \cdot \textup{det}(\mathbf{S}) = -2 \mu^{3/2} \text{det}(\mathbf{S}).
	\end{align*}
	
	For $\mathbf{S}$, $\text{det}(\mathbf{S}) = (\varphi^{\tau_i} \times \varphi^{\tau_j}) \cdot \varphi^{\tau_k} = \vert \text{sin}(\alpha_{ij}) \vert \text{sin}(\beta_{ijk})$, where $\alpha_{ij} = \measuredangle(\varphi^{\tau_i}, \varphi^{\tau_j})$ and $\beta_{ijk} = \frac{\pi}{2} - \measuredangle(\varphi^{\tau_i} \times \varphi^{\tau_j}, \varphi^{\tau_k})$.
	The geometric interpretation of $\text{det}(\mathbf{S})$ is the volume occupied by $\varphi^{\tau_i}, \varphi^{\tau_j}$ and $\varphi^{\tau_k}$.
	Thus, if and only if the three bearings are coplanar, then $\text{det}(\mathbf{S}) = 0$ and $\textup{det}(\mathbf{J}_{[6],S}^{(3)}) = 0$.

	\section{Appendix D: Proof of Theorem \ref{theorem:4}}
	\label{appendix_d}
	For the sake of simplicity,  let's  assume the first robot $A$ in $\mathcal{V}_c$ has no connections with the robots in $\mathcal{V}_r$.
	Consequently, the bearings between the robot $A$ and the other robots in $\mathcal{V}_c$ are denoted as $\varphi_{c_i}$.
	Additionally, the bearings between the robots in $\mathcal{V}_c / \{A\}$ and the robots in $\mathcal{V}_r$, as well as the bearings between any pair of robots in $\mathcal{V}_r$, are collectively denoted as $\varphi_{r_i}$.
	Based on these definitions, a generalized $\mathbf{J}_{[3N],S}$ can be formulated as 
	\begin{align*}
		\mathbf{J}_{[3N],S} = 
		\begin{bmatrix}
			\mathbf{1}_N\tp \otimes  \sqrt{\mu}\mathbf{I}_d \\
			\mathbf{E} \\ 
			\mathbf{F} \\ 
		\end{bmatrix} \in \mathbb{R}^{3N \times 3N},
	\end{align*}
	where $\mathbf{1}_N \in \mathbb{R}^N$ is the all-ones vector, $d\in[3]$.
	In this context, the matrix $\mathbf{E}$ corresponds to the bearings between the robot $A$ and the other robots in $\mathcal{V}_c / \{A\}$, while the matrix  $\mathbf{F}$ is associated with the bearings $\varphi_{r_i}$.
	The structure of these two matrices is as follows:
	\begin{align*}
		\mathbf{E} &= 
		\begin{bmatrix}
			\begin{bmatrix}
				\cdots \varphi_{c_1}^\mathrm{T}  \cdots  -\varphi_{c_1}^\mathrm{T}\cdots\\
				\vdots \\
				\cdots \varphi_{c_m}^\mathrm{T}  \cdots  -\varphi_{c_m}^\mathrm{T}\cdots\\
			\end{bmatrix} & \mathbf{0}_{m\times 3(N-|\mathcal{V}_c|)} 
		\end{bmatrix} \in \mathbb{R}^{m\times 3N} ,\\
		\mathbf{F} &= 
		\begin{bmatrix}
			\mathbf{0}_{n\times3} & 
			\begin{bmatrix}
				\cdots \varphi_{r_1}^\mathrm{T}  \cdots  -\varphi_{r_1}^\mathrm{T}\cdots\\
				\vdots \\
				\cdots \varphi_{r_n}^\mathrm{T}  \cdots  -\varphi_{r_n}^\mathrm{T}\cdots\\
			\end{bmatrix}
		\end{bmatrix} \in \mathbb{R}^{n\times 3N}.
	\end{align*}
	In the matrix $\mathbf{E}$, since the robot $A$ has no connections with the robots in $\mathcal{V}_r$, the right block of the matrix is entirely zeros.
	Regarding the matrix $\mathbf{F}$, in line with the definition of $\varphi_{r_i}$, its first column is completely zero. 
	Here $m$ and $n$ are the numbers of bearings $\{\varphi_{c_i}\}$ and $\{\varphi_{r_i}\}$ respectively, and $m+n+d = 3N$.
	
	Our goal is to find a non-zero vector $w$  such that $\mathbf{J}_{[3N],S} w = \mathbf{0}_{3N}$.
	Given that the robots in $\mathcal{G}_c$ move in coplanar, there exists a non-zero vector $\zeta$ perpendicular to all  $\varphi_{c_i}$, $\varphi_{c_i}^\mathrm{T}\zeta = 0$.
	Consequently, we can construct the vector $w$ in such a way that 
	\begin{align*}
		w = \zeta \otimes [-(N-1), 1, 1, \cdots, 1]^\mathrm{T} \in \mathbb{R}^{3N}.
	\end{align*}
	It satisfies that
	\begin{align*}
		(\mathbf{1}_N\tp \otimes  \sqrt{\mu}\mathbf{I}_d)w &=\left(-(N-1)\sqrt{\mu}\mathbf{I}_d + \sum_{j=1}^{N-1} \sqrt{\mu}\mathbf{I}_d\right)w \\
		&= \mathbf{0}_d.
	\end{align*}

	For the matrix $\mathbf{E}$, we have 
	\begin{align*}
		\mathbf{E}w &= [k_1\varphi_{c_1}^\mathrm{T}\zeta, \cdots, k_m\varphi_{c_m}^\mathrm{T}\zeta]\tp,
	\end{align*}
	where $k_i$ takes the values $-(N-1), 1$ or $0$, depending on the position of $\varphi_{c_i}^\mathrm{T}$.
	However, regardless of the specific value of $k_i$ is, $\mathbf{E}w = 0$ holds true since $\varphi_{c_i}^\mathrm{T}\zeta = 0, \forall i \in [1,m]$.
	
	For the matrix $\mathbf{F}$, we also have 
	\begin{align*}
		\mathbf{F}w &= [(\varphi_{r_1}-\varphi_{r_1})^\mathrm{T}\zeta, \cdots, (\varphi_{r_n}-\varphi_{r_n})^\mathrm{T}\zeta]\tp = \mathbf{0}_n.
	\end{align*}
	Thus, $\mathbf{J}_{[3N],S} w = \mathbf{0}_{3N}$. 
	It means that $\mathbf{J}_{[3N],S}$ has zero eigenvalue, and therefore $\textup{det}(\mathbf{J}_{[3N],S}) = 0$.
	
	\section{Appendix E: Proof of Theorem \ref{theorem:5}}
	\label{appendix_e}
	\begin{figure}[h]
		\centering
		\includegraphics[width=0.5\textwidth]{./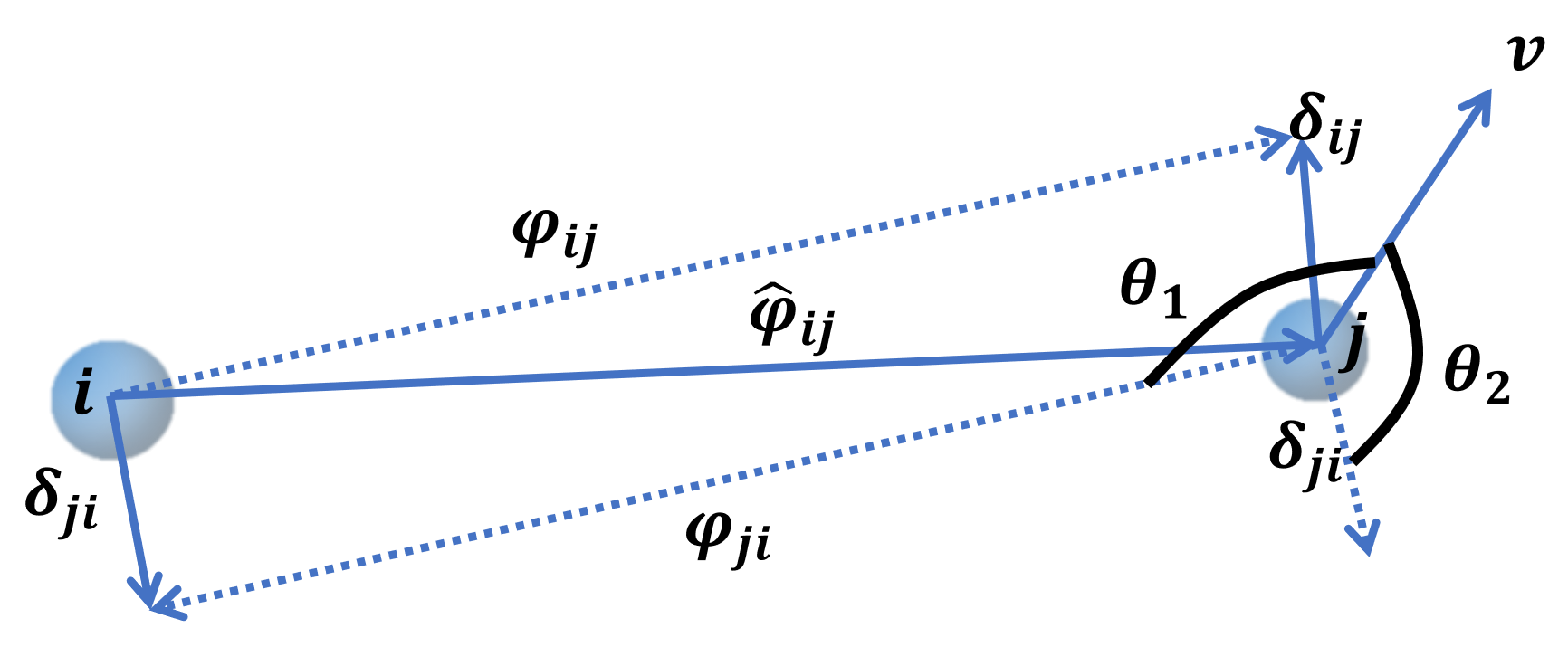}
		\caption{\label{fig:appendix} Demonstration of the groud-truth bearing $\hat{\varphi}_{ij}$, the actual bearings $\varphi_{ij}$ and $\varphi_{ji}$, and the perturbations $\delta_{ij}$ and $\delta_{ji}$.}
		\vspace{-0.4cm}
	\end{figure}

	By (\ref{equ:deltaQ1}) (\ref{equ:deltaKs}) and (\ref{equ:deltaK}), we obtain $\Delta \mathbf{K}$ such as
	\begin{align*}
		\Delta \mathbf{K}_{ij} &= \sum^{\mathcal{T}} (\delta_{ij} \hat{\varphi}_{ji}^\mathrm{T} + \hat{\varphi}_{ij} \delta_{ji}^\mathrm{T} + \delta_{ij}\delta_{ji}^\mathrm{T}) \\
		&= \sum^{\mathcal{T}} (\delta_{ij} \varphi_{ji}^\mathrm{T} + \hat{\varphi}_{ij} \delta_{ji}^\mathrm{T}),  \ \text{if} \ i \neq j. \\
		\Delta \mathbf{K}_{ii} &= -\sum_{k\neq i}^N \Delta \mathbf{K}_{ik}.
	\end{align*}
	where $\mathcal{T}$ is the sampling number. Then, we have the following lemma:
	\begin{lemma}
		\label{lemma:2}
		Let $\Delta\mathbf{K}_{ij}$ be the $(i,j)$-block of $\Delta\mathbf{K}$. If $\lambda$ is an eigenvalue of $\Delta\mathbf{K}$, then 
		\begin{align*}
			| \lambda | \leq \sum_{j=1}^N{\norm{\Delta\mathbf{K}_{ij}}}.
		\end{align*}
	\end{lemma}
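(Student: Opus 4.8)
The plan is to establish a block version of the Gershgorin circle theorem adapted to the $(3\times3)$-block structure of $\Delta\mathbf{K}$. First I would take $v \in \mathbb{R}^{3N}$ to be an eigenvector of $\Delta\mathbf{K}$ associated with the eigenvalue $\lambda$, and partition it conformally with the block structure as $v = [v_1^\mathrm{T}, \ldots, v_N^\mathrm{T}]^\mathrm{T}$ with each $v_i \in \mathbb{R}^3$. Reading the eigenvalue equation $\Delta\mathbf{K}\,v = \lambda v$ block-by-block then yields $\sum_{j=1}^N \Delta\mathbf{K}_{ij} v_j = \lambda v_i$ for every $i \in [1,N]$.

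Next I would single out the block index $i$ at which $\norm{v_i}$ attains its maximum over all $N$ blocks; since $v \neq \mathbf{0}$, this maximal block norm is strictly positive. Applying the triangle inequality together with the submultiplicativity of the operator norm to the $i$-th block equation gives $|\lambda|\,\norm{v_i} = \norm{\sum_j \Delta\mathbf{K}_{ij} v_j} \leq \sum_j \norm{\Delta\mathbf{K}_{ij}}\,\norm{v_j}$. Bounding each $\norm{v_j}$ above by the maximal $\norm{v_i}$ and dividing through by $\norm{v_i} > 0$ produces exactly the claimed estimate $|\lambda| \leq \sum_j \norm{\Delta\mathbf{K}_{ij}}$, understood to hold for this distinguished row index $i$.

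This argument is essentially routine — it is the standard block Gershgorin estimate — so I do not anticipate any substantial obstacle beyond carefully justifying the choice of the dominating block and confirming that the matrix norm in play is submultiplicative with respect to the Euclidean vector norm. The one subtlety worth flagging is the implicit existential quantifier in the statement: the bound holds for the particular index $i$ realizing the largest block of the eigenvector, and the subsequent application in the proof of Theorem \ref{theorem:5} will further bound the right-hand side uniformly over $i$, invoking the explicit form of $\Delta\mathbf{K}_{ij}$ from (\ref{equ:deltaQ1}) and the maximal vertex degree $d_{\textup{max}}$ of the graph $\mathbf{\mathcal{G}}$.
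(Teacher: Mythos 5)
Your proposal is correct and matches the paper's own proof essentially line for line: both arguments normalize the eigenvector, read the eigenvalue equation block-by-block, select the block index $i$ maximizing $\norm{v_i}$, and apply the triangle inequality with the operator-norm bound $\norm{\Delta\mathbf{K}_{ij} v_j} \leq \norm{\Delta\mathbf{K}_{ij}}\norm{v_j} \leq \norm{\Delta\mathbf{K}_{ij}}\norm{v_i}$ before dividing by $\norm{v_i} > 0$. Your remark about the implicit existential quantifier over the row index $i$ is also exactly how the paper uses the lemma in Theorem \ref{theorem:5}, where the row sum is subsequently bounded uniformly via $d_{\textup{max}}$.
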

	The proof is similar to that of Gerschgorin's theorem. Let $\Delta\mathbf{K} x = \lambda x$ with $\norm{x}=1$.
	Then  $\lambda x_i = \sum_{j} \Delta\mathbf{K}_{ij} x_j$.
	Pick $i$ such that $\norm{x_i} \geq \norm{x_j}$ for all $j$.
	Then
	\begin{align*}
		| \lambda | = \norm{\lambda \frac{x_i}{\norm{x_i}}} = \norm{\sum_{j=1}^N \Delta\mathbf{K}_{ij} \frac{x_j}{\norm{x_i}}} \leq \sum_{j=1}^N{\norm{\Delta\mathbf{K}_{ij}}}.
	\end{align*}

	Based on the lemma, we analyze the off-diagonal blocks $\Delta\mathbf{K}_{ij}$ and diagonal blocks $\Delta\mathbf{K}_{ii}$, respectively. 
	For any unit vector $v$, it fulfills that
	\begin{align*}
		\sqrt{\norm{\Delta\mathbf{K}_{ij}v}^2} &=
		\sqrt{\norm{\sum^\mathcal{T}  (\delta_{ij} \varphi_{ji}^\mathrm{T} + \hat{\varphi}_{ij} \delta_{ji}^\mathrm{T}) v}^2} \\
		&\leq \sum^\mathcal{T} \norm{(\delta_{ij} \varphi_{ji}^\mathrm{T} + \hat{\varphi}_{ij} \delta_{ji}^\mathrm{T}) v}
	\end{align*}
	We denote $\theta_1 = \angle(v,\varphi_{ji})$ and $\theta_2 = \angle(v,\delta_{ji})$.
	Then, as shown in Fig.\ref{fig:appendix}, due to $\norm{\hat{\varphi}_{ij}} = \norm{\varphi_{ij}} = 1$, we have 
	\begin{align*}
		\hat{\varphi}_{ij}^\mathrm{T} \delta_{ij} = -\xi_{ij}^2/2.
	\end{align*}
	Then, we have 
	\begin{align*}
		&\norm{(\delta_{ij} \varphi_{ji}^\mathrm{T} + \hat{\varphi}_{ij} \delta_{ji}^\mathrm{T})v}^2 = \norm{cos\theta_1 \delta_{ij} + cos\theta_2 \xi_{ji} \hat{\varphi}_{ij}}^2 \\
		&= cos^2\theta_1 \xi_{ij}^2 + cos^2\theta_2 \xi^2_{ji} + 2 cos\theta_1 cos\theta_2 \xi_{ji} \hat{\varphi}_{ij}^\mathrm{T} \delta_{ij}\\
		&= cos^2\theta_1 \xi_{ij}^2 + cos^2\theta_2 \xi^2_{ji} - 2 cos\theta_1 cos\theta_2 \xi_{ji} \xi_{ij}^2/2\\
		&= cos^2\theta_1 \xi_{ij}^2 + cos^2\theta_2 \xi^2_{ji} - cos\theta_1 cos\theta_2 \xi_{ji} \xi_{ij}^2 \\
		&\leq \xi_{ij}^2 + \xi^2_{ji} + \xi_{ji} \xi_{ij}^2 \leq 2\xi_{\text{max}}^2 + \xi_{\text{max}}^3.
	\end{align*}
	Thus $\norm{\Delta\mathbf{K}_{ij}} \leq \mathcal{T} \sqrt{2\xi_{\text{max}}^2 + \xi_{\text{max}}^3}$.
	Then, it is similar to  for $\Delta\mathbf{K}_{ii}$ that $\norm{\Delta\mathbf{K}_{ii}} = \norm{\sum_{k\neq i}\Delta \mathbf{K}_{ik}} \leq \sum_{k\neq i}\norm{\Delta \mathbf{K}_{ik}} \leq d_i \mathcal{T} \sqrt{2\xi_{\text{max}}^2 + \xi_{\text{max}}^3}$.
	Finally, we obtain
	\begin{align*}
		|\lambda(\Delta \mathbf{K})| \leq \norm{\Delta\mathbf{K}_{ii}} + \sum_{k\neq i} \norm{\Delta \mathbf{K}_{ik}} \leq 2 d_{\text{max}} \mathcal{T} \sqrt{2\xi_{\text{max}}^2 + \xi_{\text{max}}^3}.
	\end{align*}
	
\end{document}